\documentclass[10pt]{article} %
\usepackage[preprint]{tmlr}

\usepackage{amsmath,amsfonts,bm}

\def\eqref#1{equation~\ref{#1}}

\def\1{\bm{1}}

\DeclareMathAlphabet{\mathsfit}{\encodingdefault}{\sfdefault}{m}{sl}
\SetMathAlphabet{\mathsfit}{bold}{\encodingdefault}{\sfdefault}{bx}{n}

\DeclareMathOperator{\Tr}{Tr}

\usepackage[hidelinks]{hyperref}
\usepackage{url}

\usepackage{subcaption}
\usepackage{common37}

\allowdisplaybreaks

\hypersetup{
    colorlinks,
    linkcolor={blue!50!black},
    citecolor={blue!50!black},
    urlcolor={blue!50!black}
}

\title{Revisiting Mixture Policies \\in Entropy-Regularized Actor-Critic}

\author{\name Jiamin He \email jiamin12@ualberta.ca \\
      \addr University of Alberta \& Amii
      \AND
      \name Samuel Neumann \email sfneuman@ualberta.ca \\
      \addr University of Alberta \& Amii
      \AND
      \name Jincheng Mei \email jcmei@google.com \\
      \addr Google DeepMind
      \AND
      \name Adam White \email amw8@ualberta.ca \\
      \addr University of Alberta \& Amii \\
      Canada CIFAR AI Chair
      \AND
      \name Martha White \email whitem@ualberta.ca \\
      \addr University of Alberta \& Amii \\
      Canada CIFAR AI Chair
}

\def\month{MM}  %
\def\year{YYYY} %
\def\openreview{\url{https://openreview.net/forum?id=XXXX}} %

\begin{document}

\maketitle

\begin{abstract}
  Mixture policies theoretically offer greater flexibility than unimodal policies in continuous action reinforcement learning, but the practical benefits of this complexity remain elusive. Mixture policies are notably absent from most state-of-the-art algorithms, raising a fundamental question: Is the added representational overhead useful? We show that increased flexibility can theoretically enhance solution quality and entropy robustness. Yet standard algorithms like SAC do not leverage these advantages. A core issue is the lack of a low-variance reparameterization trick for mixtures, a luxury Gaussian policies enjoy. We propose a marginalized reparameterization (MRP) estimator to address this, proving it offers lower variance than the standard likelihood-ratio (LR) approach. Our experiments across Gym MuJoCo, DeepMind Control Suite, and MetaWorld show that MRP mixture policies significantly outperform their LR ones, and reach parity (sometimes better) with Gaussian counterparts. In addition, we do find several cases where MRP mixture policies exhibit clear empirical advantages. In this paper, we provide a clearer understanding of the trade-offs involved, elevating MRP mixture policies from theoretical curiosity to a practical tool.
\end{abstract}

\section{Introduction}
\label{intro}

Policy gradient methods are widely used in online reinforcement learning (RL), particularly for continuous action spaces, and yet, there are many design decisions in these methods that remain underexplored. One is the choice of policy parameterization. 
Gaussian policies are by far the most common parameterization \citep{williams1992simple,lillicrap2015continuous,schulman2017proximal}, or its bounded variants like squashed Gaussian policies \citep{haarnoja2018soft}. There are a handful of works exploring other distributions, including beta policies \citep{chou2017improving} and the family of heavy-tailed policies \citep{kobayashi2019student,bedi2024sample}. Using mixture policies, such as a conditional Gaussian mixture model for the policy, remains largely unexplored.

Yet there are reasons that this increased flexibility from mixture policies could be beneficial. A more flexible policy class may contain better optimal policies. For example, when the environment is partially observable, the optimal policy could be stochastic and multimodal \citep{sutton2018reinforcement}. Even in fully observable settings, it is common to use entropy-regularized objectives, which prefer stochastic policies; multimodal rather than unimodal policies may be better in this regime. The increased flexibility may also facilitate exploration. Several works have shown that heavy-tailed policies may improve learning through better exploration compared with Gaussian policies \citep{kobayashi2019student,bedi2024sample}. 
Similarly, mixture policies offer a complementary approach by enabling mode-directed exploration, maintaining high probability for multiple promising actions.

Though underexplored, some work has looked at more flexible policy classes. Implicit policies use deep generative models (e.g., energy-based models \citep{haarnoja2017reinforcement,messaoud2024sac}, normalizing flows \citep{tang2018implicit,mazoure2020leveraging}, diffusion models \citep{wang2023diffusion}). Compared with these more complex implicit policies, policies using parametric distributions like mixture models have two benefits: they are simpler to train, and the explicit densities are useful in entropy-regularized RL. Otherwise, several unpublished works briefly touch on mixture policies. The first version of Soft Actor-Critic \citep[SAC;][]{haarnoja2018softarxiv} did use mixture policies, but \citeauthor{haarnoja2018softarxiv} did not pursue this further nor provide insights on this choice. We hypothesize that the lack of reparameterization (RP) gradient estimators for mixture policies may explain why later versions of SAC switched to a single Gaussian, as the RP estimator was found to perform better (see Footnote 3, p. 67 of \citealp{haarnoja2018acquiring}). Later, \citet{hou2020off} tried to avoid reparameterization of the whole mixture policy by using a separate objective for the weighting policy, but they found little improvement from their approach. Finally, \citet{baram2021maximum} explored the utility of the upper and lower bounds of the mixture model's entropy, without considering a learnable weighting policy. Appendix \ref{sec:app_related_works} provides an extended discussion on related work including latent variable policies \citep{zhang2023latent} which also uses RP.

In this work, we provide the first RP estimator specifically for mixture policies that does not compromise flexibility and is empirically viable. We first prove that mixture policies provide improved solution quality. They achieve comparable or better objective values and are more robust to larger entropy regularization, because stationary points may not exist for Gaussian policies but do for Gaussian mixture policies. 
We derive the Marginalized RP estimator (MRP) for mixture policies and prove that it has lower variance than the standard likelihood-ratio estimator.
RP estimators have been critical for the practical success of SAC, and we provide a similarly effective RP estimator for mixture policies.

We empirically study mixture policies and the proposed MRP estimator with SAC across a broad suite of benchmarks. In synthetic bandit experiments designed to mimic multimodal critic functions, mixture policies more often find the maximal peak than base policies. 
On larger benchmarks---including 7 Gym MuJoCo, 10 DeepMind Control Suite, 30 MetaWorld, 10 MyoSuite, and 6 classic control environments---mixture policies perform on par with base policies and significantly outperforms other estimators from the literature.  MRP does improve performance in environments with unshaped rewards because the critic is less smooth and exhibits more peaks, allowing the policy to exploit multiple modes more effectively. Taken together, we are the first to provide a practical and performant actor-critic algorithm with mixture policies with theoretically-sound reparameterization.

\section{Problem Formulation}
\label{background}

We consider the standard Markov decision process (MDP) problem setting. An MDP is defined by $\langle\cS, \cA, p, d_0, r, \gamma \rangle$, where $\cS$ is the state space, $\cA$ is the action space, $p$ is the transition function, $d_0$ is the initial state distribution, $r$ is the reward function, and $\gamma$ is the discount factor. In this paper, we consider $\cA$ to be continuous, and $r$ to be deterministic and bounded by $[-r_{\max},r_{\max}]$. 
The agent's goal is to find a policy $\pi$ that maximizes the \textit{expected return} from the start states:
\begin{equation}
\label{eq:objective_0}
    J_0(\pi) \doteq \bE_\pi\big[\textstyle\sum_{t=0}^\infty\gamma^t r(S_{t},A_{t})\big],
\end{equation}
where the expectation is with respect to the initial state distribution, transition function, and policy.

Oftentimes, the agent optimizes the \textit{entropy-regularized objective} that promotes stochastic policies:
\begin{align}
\begin{split}    
\label{eq:objective}
    J(\pi) \doteq \bE_\pi\big[\textstyle\sum_{t=0}^\infty \gamma^t \big(r(S_t,A_t) + \alpha \cH(\pi(\cdot|S_t))\big) \big]
    = \bE_{s\sim d_0,a\sim\pi(\cdot|s)} \big[ Q_\pi(s, a) - \alpha \log \pi(a|s) \big],
\end{split}
\end{align}
where $\alpha$ is the entropy scale, $\cH(q)\doteq-\int q(x)\log q(x)\,dx$ is the differential entropy for distribution $q(x)$, and $Q_{\pi}(s,a)\doteq \bE_\pi[ \sum_{t=0}^\infty \gamma^t (r(S_t,A_t) + \alpha \gamma \cH(\pi(\cdot|S_{t+1}))) ]$ is the soft action-value function.

Soft Actor-Critic \citep[SAC;][]{haarnoja2018soft} learns $\pi$ by maximizing a surrogate of \Cref{eq:objective}:
\begin{equation}
\label{eq:objective_surrogate}
    \hat J(\ppolicy) =\bE_{S_t\sim\mathcal{B},A_t\sim\ppolicy(\cdot|S_t)} \big[ Q_\qparams(S_t, A_t) - \alpha \log\ppolicy(A_t|S_t) \big],
\end{equation}
where $\mathcal{B}$ is a buffer of collected data,  $Q_\qparams$ is an estimate of $Q_\ppolicy$. In this work, we focus on the role of policy parameterization; we refer the reader to the original paper for other details on SAC.

We can obtain an unbiased sample $\hat \nabla_\pparams \hat J(\ppolicy)$ of the gradient of \Cref{eq:objective_surrogate} in two ways. One is the \textit{likelihood-ratio} (LR) gradient estimator \citep{williams1992simple}:
\begin{align}
\label{eq:gradient_likelihood}
    \hat \nabla_\pparams \hat J(\ppolicy) = 
    \nabla_\pparams \log\ppolicy(A_t|S_t)  \big(Q_\qparams(S_t,A_t) - \alpha
    \log\ppolicy(A_t|S_t) \big).
\end{align}
The LR estimator often suffers from high variance, and a baseline can reduce variance.
When the action is reparameterizable as $A_t\!=\!f_\pparams(\epsilon_t;S_t)$ with $\epsilon_t$ sampled from a prior distribution $p(\cdot)$. Alternatively we can use the \textit{reparameterization} (RP) gradient estimator \citep{heess2015learning}:
\begin{align}
    \hat \nabla_\pparams \hat J(\ppolicy) = 
    \nabla_\pparams \big(Q_\qparams(S_t, f_\pparams(\epsilon_t;S_t)) - \alpha \log\ppolicy(f_\pparams(\epsilon_t;S_t)|S_t)\big). 
    \label{eq:gradient_reparam} 
\end{align}
\textit{Gaussian policies} are a common choice when the action space is continuous:
\begin{equation}
\label{eq:gaussian_policy}
    \ppolicy(a|s) = \mathcal{N}(a; \mu_\pparams(s), \sigma_\pparams(s)^2),
\end{equation}
where $\mu_\pparams(s)$ is the mean and $\sigma_\pparams(s)$ is the standard deviation. Gaussian policies have infinite support, but the action space is typically bounded in practice. To address the bias of clipping actions, \textit{squashed Gaussian policies} use $\tanh$ to transform the unbounded support to a bounded interval:
\begin{equation}
\label{eq:squashed_gaussian_policy}
   \ppolicy(a|s) = \mathcal{N}(\tanh^{-1}(a); \mu_\pparams(s), \sigma_\pparams(s)^2) \quad\text{ for } a\in[0,1].
\end{equation}
In this paper, we study \textit{mixture policies} with $N\in\bN^+$ components:
\begin{align*}
    \pmixturepolicy (a | s) = \textstyle \sum_{\compidx=1}^N \pweightpolicy (\compidx|s) \pcomppolicy (a|s,k),
\end{align*}
where $\pi^\weight_{\pparams}$ is the \textit{weighting policy} and $\pi^\component_{\pparams}$ with different $k$ are the \textit{component policies}. 
When needed, we also explicitly write $\pi^\mixture_{\pparams^\mixture} (a | s) = \textstyle \sum_{\compidx=1}^N \pi^\weight_{\pparams^\weight} (\compidx|s) \pi^\component_{\pparams^\component_\compidx} (a|s)$, where $\pparams^\mixture=\left[\smash{\pparams_1^\component}^\top, \cdots, \smash{\pparams_N^\component}^\top, \smash{\pparams^\weight}^\top\right]^\top$. The weighting policy is usually parameterized as a softmax policy, while the component policies may be any continuous policies. When the component policies are Gaussian (\Cref{eq:gaussian_policy}), we refer to the resulting policy as the \textit{Gaussian mixture (GM) policy}. In this case, we call the Gaussian policy the \textit{base policy}. Similarly, when the base policy is a squashed Gaussian (\Cref{eq:squashed_gaussian_policy}), the resulting mixture policy is the \textit{squashed Gaussian mixture (SGM) policy}. 
{ 
While we focus on Gaussian-based policies in our analysis and empirical evaluations, many of our results also apply to other policy classes.
}

\section{Mixture Policies Robustness to Entropy Regularization}
\label{sec:motivation_in_bandit}

Policy parameterization influences the set of stationary points of the entropy-regularized objective in \Cref{eq:objective_surrogate}, which is non-concave \citep{agarwal2019reinforcement}. 
We first show that mixture policies improve solution quality, in terms of achieving higher regularized and unregularized objective values under entropy-constrained optimization. Then we show that mixtures are robust to higher levels of entropy, both theoretically and empirically. 
Section \ref{sec:app_theory} contains proofs for all theoretical results.

\subsection{Optimality of Stationary Points}
\label{sec:optimal_stationary_point}

We first show that the optimal stationary points, namely $\pparams^*\doteq\arg\max_{\pparams\in\{\pparams\mid\nabla_\pparams J(\pi_\pparams)=\vect 0\}} J(\pi_\pparams)$, of the mixture policy is at least as good as or better than the base policy in Proposition \ref{thrm:entropy_regularized_optimality}.
\begin{proposition}
\label{thrm:entropy_regularized_optimality}
When both $\pi^b_{\pparams^{b,*}}$ and $\pi^m_{\pparams^{m,*}}$ exist, then $J(\pi^m_{\pparams^{m,*}}) \ge J(\pi^b_{\pparams^{b,*}})$.
\end{proposition}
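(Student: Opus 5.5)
The plan is to show that every stationary point of the base policy class embeds as a stationary point of the mixture class with the same objective value. Then the best mixture stationary point is at least as good as the best base stationary point.

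Let $\pparams^{b,*}$ be an optimal stationary point of the base policy, so $\nabla_{\pparams^b} J(\pi^b_{\pparams^b})=\vect 0$ at $\pparams^b=\pparams^{b,*}$. Construct mixture parameters $\bar\pparams^m$ by setting every component to the same parameter, $\pparams_k^\component=\pparams^{b,*}$ for all $k$. Choose any weighting parameter $\pparams^\weight$; for concreteness, take the uniform softmax. Since all components coincide, $\pi^m_{\bar\pparams^m}(\cdot|s)=\sum_k \pi^\weight(k|s)\,\pi^b_{\pparams^{b,*}}(\cdot|s)=\pi^b_{\pparams^{b,*}}(\cdot|s)$ for every $s$. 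Both $J$ and $Q_\pi$ depend on the parameters only through the induced policy, so $J(\pi^m_{\bar\pparams^m})=J(\pi^b_{\pparams^{b,*}})$.

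Next I verify that $\bar\pparams^m$ is stationary. Apply the chain rule through the policy, in the style of the soft policy gradient theorem: $\nabla_\pparams J=\bE_{s\sim d_\pi,a\sim\pi}[\nabla_\pparams\log\pi(a|s)\,(Q_\pi(s,a)-\alpha\log\pi(a|s))]$, using that the score has zero mean. Here $d_\pi$ is the discounted state visitation distribution, which is the same for both policies because they induce the same policy.
\begin{itemize}
\item For component $k$, $\nabla_{\pparams_k^\component}\log\pi^m(a|s)=\pi^\weight(k|s)\,\nabla\pi^b_{\pparams^{b,*}}(a|s)/\pi^b_{\pparams^{b,*}}(a|s)=\pi^\weight(k|s)\,\nabla_{\pparams^b}\log\pi^b(a|s)$. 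This uses $\pi^m=\pi^b$ in the denominator.
\item If $\pi^\weight$ does not depend on $s$, the $k$-th component gradient is exactly $\pi^\weight(k)\,\nabla_{\pparams^b}J(\pi^b_{\pparams^{b,*}})=\vect 0$.
\item If the weighting policy is state-dependent, I must choose its parameters so that the weights are state-independent (e.g.\ zero final-layer weights, giving uniform weights). Alternatively, I take the component parameters to be shared network outputs. This is the step I expect to be delicate. The cleanest route is to note that under uniform constant weights the factor $1/N$ pulls out of the expectation.
\item For the weighting parameters, $\nabla_{\pparams^\weight}\pi^m(a|s)=\sum_k\nabla\pi^\weight(k|s)\,\pi^b(a|s)=\pi^b(a|s)\,\nabla\sum_k\pi^\weight(k|s)=\vect 0$, because the weights sum to one.
\end{itemize}
Hence $\nabla_{\pparams^m}J=\vect 0$ at $\bar\pparams^m$.

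Therefore $\bar\pparams^m$ belongs to the set of mixture stationary points. By the definition of $\pparams^{m,*}$ as the maximizer of $J$ over that set, which exists by assumption, $J(\pi^m_{\pparams^{m,*}})\ge J(\pi^m_{\bar\pparams^m})=J(\pi^b_{\pparams^{b,*}})$. The main obstacle is making the component-gradient step rigorous when the weights and component parameters are state-conditioned networks. I would handle it by assuming, as the paper's parameterization permits, that the component parameter vector $\pparams_k^\component$ plays the same role as $\pparams^b$. The weighting network must also be able to output constant weights, e.g.\ via a bias-only setting.
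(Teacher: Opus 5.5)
Your proposal is correct and takes the same route as the paper. The paper's auxiliary lemma sets every mixture component to the base stationary point, shows the component-$k$ gradient equals $\pi^w(k)$ times the base gradient, and shows the weight gradient vanishes because $\sum_k \pi^w(k)=1$; the proposition then follows by taking the maximum over the resulting superset of stationary points. Your extra care about state-dependent weights is warranted: the paper's lemma writes the weights as $\pi^w(k)$ with no state dependence, which is exactly what lets that factor come out of the state integral.
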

The inequality is likely strict when the return landscape is multimodal, as the mixture policy can maintain high returns while increasing its entropy by splitting its density into different modes.

The next natural question is how their optimal stationary points compare regarding the unregularized objective (the expected return) $J_0(\ppolicy)$. In general, it is difficult to guarantee $J_0(\pi^m_{\pparams^{m,*}}) \ge J_0(\pi^b_{\pparams^{b,*}})$. However, when the entropy is imposed as a constraint instead of regularization, the mixture policy is guaranteed to be at least as good as the base policy, as shown in Proposition \ref{thrm:entropy_constrained_optimality}.
\begin{proposition}
\label{thrm:entropy_constrained_optimality}
Consider entropy-constrained policy optimization $\max_{\pparams}J_0(\pi_\pparams)$ subject to $\cH(\pi_\pparams)\ge H$ for some $H>0$ and define the optimal solution as $\pparams'$, then $J_0(\pi^m_{\smash{\pparams^{m,}}'}) \ge J_0(\pi^b_{\smash{\pparams^{b,}}'})$.
\end{proposition}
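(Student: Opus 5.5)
The plan is a containment argument: I will show that the mixture policy class contains the base policy class. Then the feasible set of the entropy-constrained problem for mixtures contains that for base policies, and the maximum over a larger feasible set can only be larger. Unlike Proposition~\ref{thrm:entropy_regularized_optimality}, no stationarity conditions are involved, since $\pparams'$ is defined as a global constrained maximizer. So the argument reduces to exhibiting the embedding and checking that it preserves both $J_0$ and $\cH$.

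\textbf{Step 1 (embedding).} Given any base parameter $\pparams^b$, I will construct mixture parameters $\pparams^m$ by copying $\pparams^b$ into every component, setting $\pparams^\component_\compidx=\pparams^b$ for all $\compidx=1,\dots,N$. The weighting parameters $\pparams^\weight$ can be arbitrary. For every state $s$ this gives
\[
\pi^m_{\pparams^m}(a|s)=\sum_{\compidx=1}^N \pi^\weight_{\pparams^\weight}(\compidx|s)\,\pi^b_{\pparams^b}(a|s)=\pi^b_{\pparams^b}(a|s),
\]
because the weights sum to one. An alternative is to push the softmax weight onto a single component, but the copy construction avoids any limiting argument, since a softmax can only put all its mass on one component in the limit.

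\textbf{Step 2 (invariance).} Because the two policies coincide as conditional distributions at every state, they induce the same trajectory distribution. Hence $J_0(\pi^m_{\pparams^m})=J_0(\pi^b_{\pparams^b})$ and $\cH(\pi^m_{\pparams^m}(\cdot|s))=\cH(\pi^b_{\pparams^b}(\cdot|s))$ for every $s$. This holds under whatever aggregate notion of policy entropy the constraint $\cH(\pi_\pparams)\ge H$ uses, whether per-state or averaged over a state distribution induced by the policy. It follows that $\pparams^b$ being feasible implies $\pparams^m$ is feasible.

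\textbf{Step 3 (conclusion).} I apply Steps 1 and 2 to $\pparams^{b,\prime}$ to obtain a feasible mixture parameter $\tilde\pparams^m$ with $J_0(\pi^m_{\tilde\pparams^m})=J_0(\pi^b_{\pparams^{b,\prime}})$. Optimality of $\pparams^{m,\prime}$ over the mixture feasible set then gives
\[
J_0(\pi^m_{\pparams^{m,\prime}})\ge J_0(\pi^m_{\tilde\pparams^m})=J_0(\pi^b_{\pparams^{b,\prime}}).
\]

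\textbf{Main obstacle.} There is no real analytic difficulty. The point needing care is well-posedness: the statement implicitly assumes both constrained optima exist, as in Proposition~\ref{thrm:entropy_regularized_optimality}. If the maximum of $J_0$ is only a supremum, I will phrase Step 3 in terms of suprema, and the same containment inequality holds.
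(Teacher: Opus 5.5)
Your proposal is correct and matches the paper's proof: the paper also copies $\pparams^{b,\prime}$ into every component with arbitrary weighting parameters, observes that the resulting mixture equals $\pi^b_{\pparams^{b,\prime}}$, and concludes $J_0(\pi^m_{\pparams^{m,\prime}})\ge J_0(\pi^m_{\tilde\pparams^m})=J_0(\pi^b_{\pparams^{b,\prime}})$. You also check explicitly that the embedded parameter satisfies the entropy constraint, which the paper leaves implicit, so your write-up is slightly more complete.
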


\subsection{Non-existence of Stationary Points Under Strong Entropy Regularization}
\label{sec:stationary_point_theory}

This section shows that the mixture policy may have stationary points in scenarios where the base policy does not. We focus on the bandit setting where the objective is:
\begin{equation}
\label{eq:objective_bandit}
J(\ppolicy) = \mathbb{E}_{a\sim\ppolicy}\big[r(a) - \alpha\log \ppolicy(a)\big].
\end{equation}
Proposition \ref{thrm:sad_fact_about_gaussian} shows that for sufficiently large entropy scales, there are no stationary points for Gaussian policies.
\begin{proposition}
\label{thrm:sad_fact_about_gaussian}
Assume $r:\mathcal{A}\to\bR$ is an integrable function on $\mathcal{A}=\bR$. For all $\alpha > {2}r_{\max}$, $J(\pi_{\mu,\sigma})=\bE_{a\sim \mathcal{N}(\mu,\sigma)} \big[r(a) - \alpha \log \mathcal{N}(a;\mu,\sigma) \big] $ does not have any stationary point.
\end{proposition}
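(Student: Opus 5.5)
The plan is to show that the partial derivative of $J(\pi_{\mu,\sigma})$ with respect to $\sigma$ is strictly positive at every $(\mu,\sigma)$ with $\sigma>0$. Then $\nabla J$ never vanishes, so no stationary point exists. The first step is to split the objective into its reward part and its entropy part:
\begin{equation*}
J(\pi_{\mu,\sigma}) = \bE_{a\sim\mathcal{N}(\mu,\sigma)}[r(a)] + \alpha\,\cH(\mathcal{N}(\mu,\sigma)) = \bE_{a\sim\mathcal{N}(\mu,\sigma)}[r(a)] + \tfrac{\alpha}{2}\log(2\pi e\sigma^2).
\end{equation*}
The entropy term contributes exactly $\alpha/\sigma$ to $\partial_\sigma J$. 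It remains to bound the $\sigma$-derivative of the reward term in absolute value by something strictly smaller than $\alpha/\sigma$.

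For the reward term, I would use the likelihood-ratio form of the derivative, since $r$ is only assumed integrable and bounded, not differentiable. Writing $\epsilon=(a-\mu)/\sigma$, we have
\begin{equation*}
\partial_\sigma \log\mathcal{N}(a;\mu,\sigma) = \frac{\epsilon^2-1}{\sigma},
\end{equation*}
so formally
\begin{equation*}
\partial_\sigma \bE[r(a)] = \frac{1}{\sigma}\,\bE_{\epsilon\sim\mathcal{N}(0,1)}\big[r(\mu+\sigma\epsilon)(\epsilon^2-1)\big].
\end{equation*}
Using $|r|\le r_{\max}$ (the standing assumption of the paper), this gives
\begin{equation*}
\big|\partial_\sigma \bE[r(a)]\big| \le \frac{r_{\max}}{\sigma}\,\bE|\epsilon^2-1| \le \frac{r_{\max}}{\sigma}\big(\bE[\epsilon^2]+1\big) = \frac{2r_{\max}}{\sigma}.
\end{equation*}
The sharper value $\bE|\epsilon^2-1| = 4\varphi(1)<1$ is not needed. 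Combining the two terms,
\begin{equation*}
\partial_\sigma J \ge \frac{\alpha - 2r_{\max}}{\sigma} > 0 \quad\text{whenever } \alpha>2r_{\max},
\end{equation*}
which completes the argument.

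The main technical obstacle is justifying the interchange of differentiation and integration for $\int r(a)\,\mathcal{N}(a;\mu,\sigma)\,da$. I would handle it by dominated convergence. On any compact neighborhood $\sigma\in[\sigma_0/2,2\sigma_0]$, the quantity $|r(a)\,\partial_\sigma\mathcal{N}(a;\mu,\sigma)|$ is bounded by $r_{\max}\,g(a)$, where $g$ is an integrable envelope consisting of a polynomial times a Gaussian with the largest variance in that range. This is where boundedness of $r$ is used; the integrability assumption in the statement also ensures all expectations are finite. The differentiability of $J$ in $\mu$ is not needed, since positivity of a single partial derivative already rules out $\nabla J=\vect 0$.
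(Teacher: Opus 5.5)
Your proposal is correct and follows the paper's proof essentially step for step: both show $\partial_\sigma J>0$ by combining the entropy contribution $\alpha/\sigma$ with the likelihood-ratio bound $\bigl|\sigma\,\partial_\sigma \bE[r(a)]\bigr| \le r_{\max}\,\bE[\epsilon^2+1]=2r_{\max}$. Your dominated-convergence argument for differentiating under the integral sign adds rigor, since the paper performs that interchange without comment.
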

On the other hand, Gaussian mixture (GM) policies are less sensitive to entropy regularization. Specifically, we show that for every stationary point of the regularized objective with a base policy, there exists a corresponding set of stationary points for the mixture policy. 

\begin{proposition}
\label{cor:alpha_relation}
The minimum $\alpha$ after which the mixture policy no longer has a stationary point 
is at least as large as that of the base policy, i.e.,
$
\alpha_{\min}^{\pi^m} \ge \alpha_{\min}^{\pi^b},
$
where $\alpha_{\min}^\pi=\inf\{\alpha \mid \nabla_{\pparams}J(\ppolicy)\neq \vect 0,\, \forall \pparams\}$ for policy $\ppolicy$.
\end{proposition}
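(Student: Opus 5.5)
The plan is to show that every stationary point of the base policy yields a stationary point of the mixture policy. This gives an inclusion between the sets of entropy scales at which stationary points exist, and the inequality between infima then follows directly. The preceding sentence in the text already announces this: for every stationary point of the base policy there is a corresponding set of stationary points for the mixture.

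\textbf{Construction.} Fix $\alpha$ and let $\pparams^b$ satisfy $\nabla_{\pparams^b} J(\pi^b_{\pparams^b}) = \vect 0$. For the mixture, set every component parameter to this point, $\pparams^\component_\compidx = \pparams^b$ for all $\compidx$, and let $\pparams^\weight$ be arbitrary. Then $\pi^\mixture(\cdot|s) = \sum_\compidx \pi^\weight(\compidx|s)\,\pi^b_{\pparams^b}(\cdot|s) = \pi^b_{\pparams^b}(\cdot|s)$, so the mixture induces exactly the same policy. This point should be the whole set of stationary points the remark refers to.

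\textbf{Checking stationarity.} I would write the gradient of the objective in the general form valid for any parameterization, as in the policy gradient theorem: $\nabla J = \bE_{s\sim d_\pi,\,a\sim\pi}\bigl[\nabla \log \pi(a|s)\,\bigl(Q_\pi(s,a) - \alpha\log\pi(a|s)\bigr)\bigr]$. The quantities $d_\pi$, $Q_\pi$ and $\log \pi$ depend only on the induced policy, so they coincide for the two policies; only the score $\nabla\log\pi$ differs.
\begin{itemize}
\item \emph{Component parameters.} Since all components are identical, $\nabla_{\pparams^\component_\compidx}\log\pi^\mixture(a|s) = \pi^\weight(\compidx|s)\,\nabla_{\pparams^b}\log\pi^b(a|s)$. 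The factor $\pi^\weight(\compidx|s)$ depends on $s$, so I would handle it pointwise in $s$. For each state, the inner expectation $\bE_{a\sim\pi^b(\cdot|s)}[\nabla\log\pi^b(\cdot)(Q-\alpha\log\pi^b)]$ is the per-state gradient, and the stationarity condition on the shared network parameters is an aggregate over states of these per-state quantities weighted by the state-dependent factor $\pi^\weight(\compidx|s)$.
\item \emph{Weighting parameters.} The score is $\sum_\compidx \nabla \pi^\weight(\compidx|s)\,\pi^\component_\compidx(a|s)/\pi^\mixture(a|s) = \nabla \sum_\compidx \pi^\weight(\compidx|s) = 0$, because the component densities are all equal. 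So this gradient vanishes identically.
\end{itemize}

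\textbf{Main obstacle.} The difficulty is the component gradient in the state-dependent case: a weighted aggregate of per-state gradients need not vanish just because the unweighted one does. I would try two fixes.
\begin{itemize}
\item First, choose $\pparams^\weight$ so that $\pi^\weight(\cdot|s)$ is constant in $s$. This is possible with softmax parameterizations by setting the last-layer weights to zero, and it makes the factor $\pi^\weight(\compidx|s) = w_\compidx$ pull out of the expectation, giving $w_\compidx \nabla_{\pparams^b}J(\pi^b) = \vect 0$.
\item Alternatively, in the bandit setting of \Cref{eq:objective_bandit}, which is the section's focus, there is no state, so the factor is a scalar and the conclusion is immediate.
\end{itemize}

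\textbf{Conclusion.} Either way, the set $\{\alpha \mid \text{the base policy has a stationary point}\}$ is contained in the corresponding set for the mixture. If $\alpha < \alpha^{\pi^b}_{\min}$, then by definition of the infimum the base policy has a stationary point at some $\alpha' \ge \alpha$ arbitrarily close to $\alpha^{\pi^b}_{\min}$. The mixture then also has one at $\alpha'$, so $\alpha^{\pi^m}_{\min} \ge \alpha'$, and taking the supremum over such $\alpha'$ gives $\alpha_{\min}^{\pi^m} \ge \alpha_{\min}^{\pi^b}$.
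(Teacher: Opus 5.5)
Your route is the paper's. There the proposition is a direct consequence of a lemma: copying a base stationary point $\tilde\theta^{b}$ into every component, with any weighting parameters, gives a stationary point of the mixture. The lemma uses exactly your two computations. The component gradient is the base gradient scaled by the weight, and the weighting gradient vanishes pointwise in $s$ because all components coincide and $\sum_k \nabla \pi^{w}(k\mid s)=0$.

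Your concern about state-dependent weights is justified and goes beyond the paper. In the lemma's proof the weight is written as $\pi^{w}(k)$ and pulled outside the integral over $s$. So the lemma's claim ``for arbitrary weighting parameters'' is really only established for state-independent weights or in the bandit setting. Choosing a state-independent $\pi^{w}$ (e.g.\ zero last-layer weights in a softmax head) makes the component gradient $w_k\nabla J(\pi^{b})=\vect 0$ in the MDP case. Since the proposition only needs one mixture stationary point per $\alpha$, this suffices whenever the weighting class contains such a member.

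One step of your conclusion is stated incorrectly. A mixture stationary point at a single $\alpha'$ does not give $\alpha^{\pi^m}_{\min}\ge\alpha'$. The reason is that $\alpha_{\min}$ is the infimum of the set where \emph{no} stationary point exists, and knowing one value lies outside that set says nothing about its smaller elements. (Your sentence would be right if $\alpha_{\min}$ were the supremum of the set where stationary points exist, which is a different quantity.)

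The fix uses only the inclusion you already proved. Let $E^{b}\subseteq E^{m}$ be the sets of $\alpha$ at which stationary points exist. Their complements, the no-stationary-point sets, then satisfy $S^{m}\subseteq S^{b}$. The infimum over a subset is at least the infimum over the superset, so $\alpha^{\pi^m}_{\min}=\inf S^{m}\ge\inf S^{b}=\alpha^{\pi^b}_{\min}$, with no limiting argument needed.
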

\begin{remark}
    
    While we only compare the mixture policy with the base policy in \Cref{thrm:entropy_regularized_optimality,thrm:entropy_constrained_optimality,cor:alpha_relation}, the results can be generalized to compared mixture policies with different $N$.
\end{remark}
In fact, instances where the inequality in \Cref{cor:alpha_relation} is strict are not hard to construct. The bimodal bandit in Figure~\ref{fig:analytical_bimodal_bandit} (Left) is one such example.

\begin{figure*}[t]
    \centering
    \resizebox{\textwidth}{!}{
    \includegraphics[height=3cm]{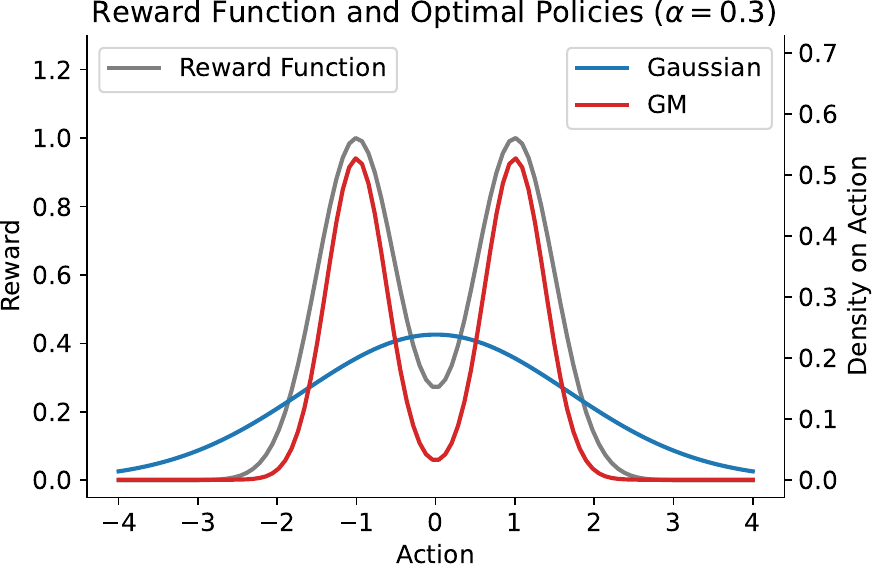}
    \includegraphics[height=3cm]{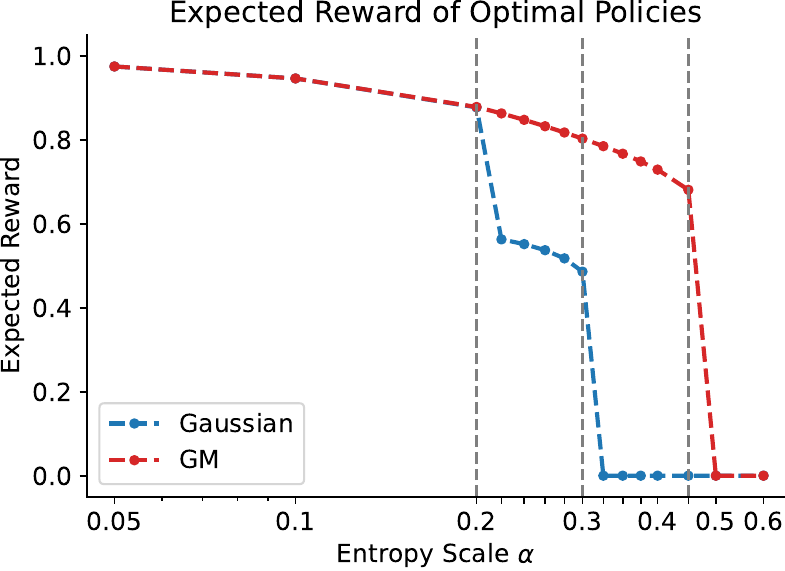}
    \includegraphics[height=3cm]{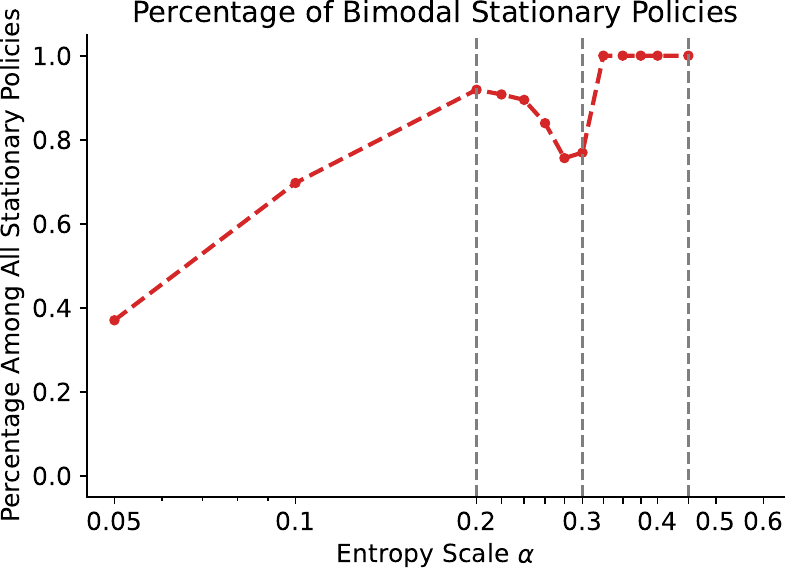}
    }
    \caption{
    Stationary points of Gaussian and two-component Gaussian Mixture (GM) policies in a bimodal bandit. We use local gradient-based optimization on \Cref{eq:objective_bandit} (100 trials; details in \Cref{sec:app_numerical}). 
    \textbf{Left:} Bandit reward function and optimal policies at $\alpha=0.3$. 
    \textbf{Middle:} Expected reward of optimal policies at different $\alpha$; 
    zero means no stationary points found. 
    \textbf{Right:} Percentage of bimodal policies among stationary GM policies. 
    Dashed vertical lines mark different levels of entropy regularization, each producing qualitatively distinct effects on Gaussian and GM policies.
    }
    \label{fig:analytical_bimodal_bandit}
\end{figure*}

\textbf{Robustness to entropy regularization.}
Figure \ref{fig:analytical_bimodal_bandit} (Middle) plots the expected reward $J_0(\pi_{\pparams^*})$ of the optimal stationary points for the two policies. When $\alpha\ge 0.325$, the Gaussian policy admits no stationary points, while the GM policy retains stationary points up to $\alpha \le 0.45$. Moreover, for $\alpha\in[0.225,0.3]$, the optimal stationary GM policy outperforms its Gaussian counterpart. In this regime, the GM policy captures both modes of the reward function with two separate components, while the Gaussian policy collapses to a single mode centered between them (\Cref{fig:analytical_bimodal_bandit}, Left).

\textbf{Preference for multimodality increases with larger entropy regularization.}
\Cref{fig:analytical_bimodal_bandit} (Right) shows a clear trend: as $\alpha$ increases, the frequency of bimodal policies also increases.
This suggests that a larger entropy scale helps prevent mode collapse in mixture policies, i.e., the loss of multimodality when the policy becomes effectively unimodal.

\section{The Marginalized Reparameterization (MRP) Estimator}
\label{sec:reparam_mixture_policies}

While there is no general guarantee that the RP estimator is better than the LR estimator, it is shown to have lower variance under certain assumptions \citep{xu2019variance}, be more robust in high dimensional settings \citep{mohamed2020monte}, and perform better empirically \citep{mohamed2020monte,zhang2024model}.
Further, the LR estimator often requires a baseline to reduce variance for it to work well. In RL, this baseline is typically either a learned state-value function or the average of sampled estimated action values. The former requires an additional function approximator, adding complexity and making the algorithm more sensitive, while the latter is prone to instability in high-dimensional settings, as we demonstrate in Section \ref{sec:exp_lr_vs_rp}. Thus, the RP estimator is often preferred in high-dimensional continuous control \citep{haarnoja2018soft,fujimoto2018addressing}.

Despite this advantage, reparameterization for mixture policies remains underexplored, as their softmax weighting policies are not directly reparameterizable. Prior work circumvents this issue by either using a fixed weighting policy \citep{baram2021maximum} or learning the weighting policy with a separate objective \citep{hou2020off}. As a result, little is known about how to reparameterize mixture policies without sacrificing flexibility or altering the algorithm---a gap we address next.

\subsection{Deriving the MRP Estimator}
\label{sec:half_rp_estimator}

We first extend the reparameterization policy gradient theorem \citep{lan2022model} to the case of mixture policies where we reparameterize only the components in \Cref{thrm:half_reparam_entropy_pg}.
Here, we assume the component policies are reparameterized as
$\pcomppolicy(a|s, \compidx) = p(\epsilon)$, 
where $a=f_\pparams(\epsilon; s, \compidx)$.
We also define the (discounted) occupancy measure under $\pi$ as $d_\pi(s)\doteq\bE_\pi\big[\sum_{t=0}^\infty\gamma^t\bI(S_t=s)\big]$, which quantifies the expected discounted state visitation frequency under $\pi$.

\begin{assumption}
\label{assm:nice_set}
$\mathcal{S}$ and $\mathcal{A}$ are compact.
\end{assumption}
\begin{assumption}
\label{assm:nice_function}
$p(s'|s,a)$, $d_0(s)$, $r(s,a)$, $f_\pparams(\epsilon;s,k)$, $f_\pparams^{-1}(a;s,k)$, $\pweightpolicy(k|s)$, $\pcomppolicy(a|s,k)$, $p(\epsilon)$, and their derivatives are continuous in variables $s$, $a$, $s'$, $\pparams$, and $\epsilon$.
\end{assumption}
\begin{theorem}[Entropy-Regularized Half-Reparameterization Policy Gradient Theorem]
\label{thrm:half_reparam_entropy_pg}
Under Assumptions \ref{assm:nice_set} and \ref{assm:nice_function}, we have
\begin{align*}
    \nabla_\pparams J(\pmixturepolicy)
    = \bE_{s\sim d_\pmixturepolicy,k\sim\pweightpolicy(\cdot|s), \epsilon\sim p} \Big[& 
    \nabla_\pparams \log\pweightpolicy(k|s) \big(Q_\pmixturepolicy(s, f_\pparams(\epsilon;s,k)) - \alpha \log\pmixturepolicy(f_\pparams(\epsilon;s,k)|s)\big) \\
    & + {
    \nabla_\pparams f_\pparams(\epsilon;s,k) \nabla_a \big(Q_\pmixturepolicy(s,a) - \alpha \log\pmixturepolicy(a|s)\big)|_{a=f_\pparams(\epsilon;s,k)}
    }\Big].
\end{align*}
\end{theorem}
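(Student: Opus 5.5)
We follow the standard recursive proof of the policy gradient theorem, as in the reparameterization policy gradient theorem of \citet{lan2022model}. The only change is that the action integral is first split over the discrete mixture index and then changed to the noise variable of each component.

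\textbf{Step 1: a one-step identity for $V_\pmixturepolicy$.} Write
\[
V_\pmixturepolicy(s)=\int_\cA \pmixturepolicy(a|s)\big(Q_\pmixturepolicy(s,a)-\alpha\log\pmixturepolicy(a|s)\big)\,da .
\]
Expanding the mixture and substituting $a=f_\pparams(\epsilon;s,k)$ in the $k$-th term (legitimate since $\pcomppolicy(a|s,k)\,da=p(\epsilon)\,d\epsilon$) gives
\[
V_\pmixturepolicy(s)=\sum_{k}\pweightpolicy(k|s)\int p(\epsilon)\, g_\pparams\big(s,f_\pparams(\epsilon;s,k)\big)\,d\epsilon,
\qquad g_\pparams(s,a)\doteq Q_\pmixturepolicy(s,a)-\alpha\log\pmixturepolicy(a|s).
\]
Here $p(\epsilon)$ does not depend on $\pparams$.

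\textbf{Step 2: differentiate under the integral.} Assumptions \ref{assm:nice_set} and \ref{assm:nice_function} make all integrands and their $\pparams$-derivatives continuous on compact sets, hence uniformly bounded, so the Leibniz rule applies. The derivative splits into three pieces.
\begin{itemize}
\item[(i)] Differentiating $\pweightpolicy$ gives $\pweightpolicy\nabla_\pparams\log\pweightpolicy(k|s)\,g_\pparams$, the first term in the statement.
\item[(ii)] Differentiating through the action argument gives $\nabla_\pparams f_\pparams\,\nabla_a g_\pparams|_{a=f_\pparams}$, the second term.
\item[(iii)] The explicit dependence of $g_\pparams$ on $\pparams$ remains, with the action held fixed.
\end{itemize}
For (iii), the entropy part is
\[
-\alpha\,\bE_{a\sim\pmixturepolicy}\big[\nabla_\pparams\log\pmixturepolicy(a|s)\big]=-\alpha\int\nabla_\pparams\pmixturepolicy(a|s)\,da=0,
\]
again exchanging derivative and integral, which compactness of $\cA$ justifies. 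The $Q$ part is $\bE_{a\sim\pmixturepolicy}[\nabla_\pparams Q_\pmixturepolicy(s,a)]$. Using the soft Bellman equation $Q_\pmixturepolicy(s,a)=r(s,a)+\gamma\int p(s'|s,a)V_\pmixturepolicy(s')\,ds'$, with $r$ and $p$ independent of $\pparams$, this equals
\[
\gamma\int \Pr(s\to s',1,\pmixturepolicy)\,\nabla_\pparams V_\pmixturepolicy(s')\,ds'.
\]

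\textbf{Step 3: unroll and integrate over the start state.} Let $\phi(s)$ denote the sum of the (i) and (ii) terms at state $s$. Then
\[
\nabla V(s)=\phi(s)+\gamma\,\bE_{s'}\big[\nabla V(s')\big].
\]
Unrolling gives $\nabla V(s)=\sum_t\gamma^t\,\bE[\phi(S_t)\mid S_0=s]$. The remainder $\gamma^n\bE[\nabla V(S_n)]$ vanishes as $n\to\infty$ because $\nabla V$ is bounded, again by compactness and continuity. Integrating against $d_0$ turns $\sum_t\gamma^t\Pr(S_t=\cdot)$ into $d_\pmixturepolicy$, which yields the claimed expression.

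\textbf{Main obstacle.} The delicate part is regularity. We need $Q_\pmixturepolicy$, $V_\pmixturepolicy$ and $\log\pmixturepolicy$ to be differentiable in $(a,\pparams)$ with bounded derivatives, so that the Leibniz interchanges, the chain rule through $f_\pparams$, and the limit of the unrolled recursion are all valid. We would establish this as in \citet{lan2022model}, by showing that $V_\pmixturepolicy$ is the fixed point of a $\gamma$-contraction on continuous functions on compact $\cS$, and that the derivative recursion is likewise a contraction.

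One point needs care: $\log\pmixturepolicy$ must stay bounded, which requires $\pmixturepolicy(a|s)$ to be bounded below on the compact $\cA$. We would take this to follow from continuity and positivity of the components. If positivity does not hold everywhere, we would restrict to the support of the policy.
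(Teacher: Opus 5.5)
Your proof is correct, but it takes a different route from the paper's.

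\textbf{The paper's route.} The paper does not redo the recursive unrolling. It starts from the known likelihood-ratio form of the entropy-regularized policy gradient (eq.~(3) of Ahmed et al.),
$\nabla_\theta J = \int d_\pi(s)\int \pi(a|s)\big(Q_\pi(s,a)\nabla_\theta\log\pi(a|s) + \alpha\nabla_\theta\mathcal{H}(\pi(\cdot|s))\big)\,da\,ds$.
It rewrites the entropy gradient as $-\int\nabla_\theta\pi(a|s)\log\pi(a|s)\,da$. It then applies a purely local, per-state identity to an integrand treated as fixed ($Q_\pi$, respectively $-\alpha\log\pi^m_\theta$, whose own $\theta$-dependence cancels). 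The identity splits $\nabla_\theta\pi^m_\theta=\sum_k(\nabla_\theta\pi^w_\theta\,\pi^c_\theta+\pi^w_\theta\nabla_\theta\pi^c_\theta)$. The weight piece is turned back into a score-function term. The component piece becomes $\nabla_\theta f_\theta\,\nabla_a(\cdot)$ via the change of variables $a=f_\theta(\epsilon;s,k)$.

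\textbf{Your route.} You differentiate the half-reparameterized $V_\pi(s)$ directly. You then recover $d_\pi$ by unrolling $\nabla V(s)=\phi(s)+\gamma\mathbb{E}[\nabla V(s')]$, in the style of the DPG and Lan et al.\ proofs. Your term (iii) is exactly what the paper's citation absorbs. Your observation that its entropy part vanishes because $\int\nabla_\theta\pi^m_\theta\,da=0$ is a cleaner treatment of the entropy than the paper's separate $\nabla\mathcal{H}$ manipulation.

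\textbf{Trade-off.} The paper's argument is shorter and modular. It inherits all MDP-level regularity (existence and boundedness of $\nabla_\theta Q_\pi$, validity of the unrolling) from the cited theorem. Its only genuinely new content is therefore the per-state decoupling of weight and component gradients. Your argument is self-contained, but it must establish that regularity itself, which you correctly identify as the main burden.

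\textbf{One small correction in your Step 2.} Compactness of $\mathcal{A}$ does not give compactness in $\epsilon$ when $p$ has unbounded support (e.g.\ Gaussian noise). So the Leibniz interchange should rest on an integrable dominating function for $\nabla_\theta f_\theta(\epsilon;s,k)$ under $p$, not on uniform boundedness over a compact set.
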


Similarly, 
we can obtain the \textit{half-reparameterization} (HalfRP) estimator for SAC's objective:
\begin{align}
    \hat \nabla_\pparams \hat J(\pmixturepolicy)
    =& \nabla_\pparams \log\pweightpolicy(K_t|S_t) \big( Q_\qparams(S_t, f_\pparams(\epsilon_t; S_t, K_t)) - \alpha \log\pmixturepolicy(f_\pparams(\epsilon_t; S_t, K_t) | S_t) \big) \nonumber \\
    & + \nabla_\pparams \big( Q_\qparams(S_t, f_\pparams(\epsilon_t; S_t, K_t)) - \alpha \log\pmixturepolicy(f_\pparams(\epsilon_t; S_t, K_t)|S_t) \big). 
    \label{eq:gradient_half_reparam}
\end{align}

While the HalfRP estimator still suffers from high variance 
(see \Cref{sec:experiments_estimators}), we propose a better alternative by marginalizing over the mixing weights in the HalfRP estimator--in other words, we marginalize the random variable $K_t$. This gives rise to a new policy gradient theorem and estimator.
\begin{theorem}[Entropy-Regularized Marginalized-Reparameterization Policy Gradient Theorem]
\label{thrm:expected_reparam_entropy_pg}
Under Assumptions \ref{assm:nice_set} and \ref{assm:nice_function}, we have
\begin{align*}
    \nabla_\pparams J(\pmixturepolicy)
    = \bE_{s\sim d_\pmixturepolicy, \epsilon\sim p} \big[ \textstyle
    \nabla_\pparams \sum_{k=1}^N \pweightpolicy(k|s) \big(Q_\pmixturepolicy(s, f_\pparams(\epsilon;s,k)) - \alpha \log\pmixturepolicy(f_\pparams(\epsilon;s,k)|s)\big) \big].
\end{align*}
\end{theorem}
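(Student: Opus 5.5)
We prove \Cref{thrm:expected_reparam_entropy_pg} by starting from \Cref{thrm:half_reparam_entropy_pg} and marginalizing out the component index $k$. Fix a state $s$ and a noise value $\epsilon$, and write
\[
g_\pparams(k;s,\epsilon)\doteq Q_\pmixturepolicy(s,f_\pparams(\epsilon;s,k))-\alpha\log\pmixturepolicy(f_\pparams(\epsilon;s,k)|s).
\]
Because $\pweightpolicy(\cdot|s)$ is a finite distribution over $\{1,\dots,N\}$ and does not depend on $\epsilon$, the inner expectation $\bE_{k\sim\pweightpolicy(\cdot|s),\epsilon\sim p}$ in \Cref{thrm:half_reparam_entropy_pg} equals $\bE_{\epsilon\sim p}\sum_{k=1}^N\pweightpolicy(k|s)[\cdots]$. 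So it suffices to show, pointwise in $(s,\epsilon)$, that
\begin{align*}
\sum_{k=1}^N \pweightpolicy(k|s)\Big[\nabla_\pparams\log\pweightpolicy(k|s)\,g_\pparams(k;s,\epsilon) + \nabla_\pparams f_\pparams(\epsilon;s,k)\,\nabla_a\big(Q_\pmixturepolicy(s,a)-\alpha\log\pmixturepolicy(a|s)\big)\big|_{a=f_\pparams(\epsilon;s,k)}\Big]
= \nabla_\pparams\sum_{k=1}^N \pweightpolicy(k|s)\,g_\pparams(k;s,\epsilon).
\end{align*}

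Expand the right-hand side with the product rule. The derivative falling on the weight gives $\nabla_\pparams\pweightpolicy(k|s)\,g = \pweightpolicy(k|s)\nabla_\pparams\log\pweightpolicy(k|s)\,g$, which is the likelihood-ratio term. The derivative falling on $g$ gives $\pweightpolicy(k|s)\nabla_\pparams g_\pparams(k;s,\epsilon)$. Here we must read $\nabla_\pparams$ as acting only through the explicit argument $a=f_\pparams(\epsilon;s,k)$, with $Q_\pmixturepolicy$ and $\log\pmixturepolicy$ held fixed as functions of $a$. This is the same convention as in \Cref{thrm:half_reparam_entropy_pg}, where the dependence of $J$ on $\pparams$ through $Q$ and the density is already accounted for by the policy-gradient-theorem structure, namely the expectation over $d_\pmixturepolicy$ and the score-function identity for the entropy term. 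Under that convention, the chain rule gives $\nabla_\pparams g = \nabla_\pparams f_\pparams\,\nabla_a(\cdot)|_{a=f_\pparams}$, which is exactly the second term. Summing over $k$ yields the identity.

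Assumptions \ref{assm:nice_set} and \ref{assm:nice_function} justify exchanging $\nabla_\pparams$ with the finite sum, which is trivial, and keep all quantities continuous and bounded on compact sets, so the expectations are finite and dominated convergence applies wherever needed. The main obstacle is being precise about this "semi-gradient" convention for $\nabla_\pparams$ inside the expectation: the statement must be read with $Q_\pmixturepolicy$ and $\log\pmixturepolicy(\cdot|s)$ treated as fixed functions of $a$ when differentiating, exactly as in the half-reparameterization theorem. Once that convention is fixed to match the earlier theorem, the result is an algebraic rearrangement.
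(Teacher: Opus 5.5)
Your proposal is correct and follows essentially the paper's route. The paper obtains this theorem by marginalizing $K$ out of Theorem~\ref{thrm:half_reparam_entropy_pg}, using $\nabla_\pparams\pweightpolicy(k|s)=\pweightpolicy(k|s)\nabla_\pparams\log\pweightpolicy(k|s)$ and the chain rule through $f_\pparams$; the same algebra appears, run in reverse, in its proof of the SAC half-reparameterization proposition. Your explicit remark that $\nabla_\pparams$ must not act on $Q_\pmixturepolicy$ as a function of $\pparams$ makes precise a point the paper leaves implicit.

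One refinement: for the $\log\pmixturepolicy$ term the convention is not forced. The extra explicit-derivative term has zero mean, $\bE_{\epsilon\sim p}\sum_{k=1}^N\pweightpolicy(k|s)\,\nabla_\pparams\log\pmixturepolicy(a|s)\big|_{a=f_\pparams(\epsilon;s,k)}=\int\nabla_\pparams\pmixturepolicy(a|s)\,da=0$. So taking the total derivative there, as autodiff does in \Cref{eq:gradient_expected_reparam}, gives the same expectation.
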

The corresponding \textit{marginalized-reparameterization} (MRP) estimator for SAC's objective is
\begin{align}
    \hat \nabla_\pparams \hat J(\pmixturepolicy)
    = \textstyle \nabla_\pparams \sum_{k=1}^N \pweightpolicy(k|S_t) \big( Q_\qparams(S_t, f_\pparams(\epsilon_t; S_t, k)) - \alpha \log\pmixturepolicy(f_\pparams(\epsilon_t; S_t, k) | S_t) \big).
    \label{eq:gradient_expected_reparam}
\end{align}

To our knowledge, \Cref{eq:gradient_expected_reparam,eq:gradient_half_reparam} provide the first two unbiased RP-based gradient estimators for mixture policies. 
Next, we establish conditions for a provably lower variance of the MRP estimator compared with the LR estimator.

\subsection{Variance Reduction Properties of the MRP Estimator}
\label{sec:var_reduction_mrp}

For clarity, we focus on the bandit setting without regularization: $J(\ppolicy)\!=\!\mathbb{E}_{a\sim\ppolicy}[r(a)]$. This setting allows us to isolate the effect of mixture reparameterization and provide clean, interpretable variance comparisons.
The entropy term can be included by redefining $r(a)$ to be the sum of the reward and sample entropy. The extension to the MDP setting is straightforward by considering state-conditioned variance. 

We analyze the gradient of the MRP and LR estimators with respect to the {distribution parameters} and the weighting probabilities of Gaussian mixture policies. In this case, the mixture policy can be expressed as $\pmixturepolicy(a) = \sum_{k=1}^N \pi^\weight_{\pparams^\weight} (\compidx) \rawpcomppolicyatidx (a) = \sum_{k=1}^N w_\compidx \cN(a;\mu_\compidx, \sigma_\compidx^2)$ where $\mu_\compidx$, $\sigma_\compidx$, and $w_\compidx$ may depend on shared parameters implicitly and $\pparams=\left[\smash{\pparams^{\component_1}}^\top, \cdots, \smash{\pparams^{\component_N}}^\top, \smash{\pparams^\weight}^\top\right]^\top = \left[ [\mu_1, \sigma_1]^\top, \cdots, [\mu_N, \sigma_N]^\top, [w_1, \cdots, w_N]^\top \right]^\top\in \bR^{3N}$. The corresponding estimators are
\begin{align*}
    \text{LR: }\ \hat \nabla_\pparams^\text{LR} J(\pmixturepolicy) &= \nabla_\pparams \log \pmixturepolicy(A) r(A), \,\, \\
    \text{MRP: }\ \hat \nabla_\pparams^\text{MRP} J(\pmixturepolicy) &= \nabla_\pparams \textstyle \sum_{\compidx=1}^N \pi^\weight_{\pparams^\weight}(\compidx) r\bigl(f_{\pparams^{\component_\compidx}}(\epsilon)\bigr),
\end{align*}
where $A\sim \pmixturepolicy(\cdot)$, $\epsilon\sim \cN(0,1)$, $\pi^\weight_{\pparams^\weight} (\compidx)=w_\compidx$, and $f_{\pparams^{\component_\compidx}}(\epsilon)=\mu_\compidx+\epsilon\sigma_\compidx$.
Define $\mixturepolicy\doteq\pmixturepolicy$, $\comppolicyatidx\doteq \rawpcomppolicyatidx$, 
$\hat \partial_{\theta_i^{\component_\compidx}} \cdot \doteq \big[ \hat \nabla_{\pparams^{\component_\compidx}} \cdot \big]_i$, and $\hat \partial_{\weight_\compidx} \cdot \doteq \big[ \hat \nabla_{\pparams^{\weight}} \cdot \big]_\compidx$, where $\theta_i^{\component_\compidx}$ and $\weight_\compidx$ are the $i$-th distribution parameter and the weight of the $k$-th component policy, respectively. For Gaussian component policies, $\theta_i^{\component_\compidx} \in \{ \mu_\compidx, \sigma_\compidx \}$. 
We present MRP's variance reduction properties in \Cref{prop:variance_reduction_main}.

\begin{assumption} \label{assm:variance_reduction_condition_full_main}
    $r: \bR \to \bR$ is twice differentiable with first and second derivatives $r'$ and $r''$. 
    For all $\compidx\in\{1,\cdots,N\}$ and $\varphi(a)\in\{ \phi_1(a)=(a-\mu_\compidx)r(a), \psi_1(a)=r'(a), \phi_2(a)=\bigl((a-\mu_\compidx)^2-\sigma_\compidx^2\bigr) r(a), \psi_2(a)=(a-\mu_\compidx)r'(a) \}$, $\varphi$ has finite variance and its first derivative $\varphi'$ is absolutely integrable under $\comppolicyatidx$: $\bV_{\comppolicyatidx(A)}\bigl(\varphi(A)\bigr)< \infty$ and $\bE_{\comppolicyatidx(A)}\bigl[ \left| \varphi'(A) \right| \bigr] < \infty$, with $\comppolicyatidx(a)=\cN(a;\mu_\compidx,\sigma_\compidx)$.
    Further, it holds for $i\in\{1,2\}$ that
    \begin{align*}
        \textstyle \sum_{k=1}^N \left(
        \bE_{\comppolicyatidx(A)} \bigl[ \phi_i'(A) \bigr]^2 
        - \sigma_\compidx^4 \bE_{\comppolicyatidx(A)} \left[ \psi_i'(A)^2 \right] 
        \right) &\ge 0.
    \end{align*}
\end{assumption}

\begin{assumption} \label{assm:when_importance_sampling_is_bad_main}
    Define $\rho^{\component_\compidx}_{\mixture}(A) =\frac{\comppolicyatidx(A)}{\mixturepolicy(A)}$, the sum of the variance of the importance-sampling LR estimators over all components is larger than that of the on-policy LR estimators: For $\theta_i^{\component_\compidx}\in\{\mu_\compidx,\sigma_\compidx\}$
    \begin{align}
        \textstyle \sum_{k=1}^N \Big(
        \bV_{\mixturepolicy(A)} \big( \rho^{\component_\compidx}_{\mixture}(A) \hat \partial_{\theta_i^{\component_\compidx}}^\text{LR} J(\rawpcomppolicyatidx) \big) \!-\!\bV_{\comppolicyatidx(A)} \big( \hat \partial_{\theta_i^{\component_\compidx}}^\text{LR} J(\rawpcomppolicyatidx) \big)
        \Big) \ge 0 \label{eq:variance_assm_gradient} \\
        \textstyle \sum_{k=1}^N \left(
        \bV_{\mixturepolicy(A)} \left( \rho^{\component_\compidx}_{\mixture}(A) r(A) \right) - 
        \bV_{\comppolicyatidx(A)} \bigl( r(A) \bigr)
        \right) \ge 0. \label{eq:variance_assm_reward}
    \end{align}
\end{assumption}

\begin{proposition} \label{prop:variance_reduction_main}
    Under Assumptions \ref{assm:variance_reduction_condition_full_main} and \ref{assm:when_importance_sampling_is_bad_main}, the trace of the covariance matrix of the MRP estimator is smaller than that of the LR estimator:
    \begin{align*}
        \Tr\left( \bC_{\cN(\epsilon;0,1)} \big( \hat \nabla_{\pparams}^\text{MRP} J(\pmixturepolicy) \big) \right) \!\le\! \Tr\left( \bC_{\mixturepolicy(A)} \big( \hat \nabla_{\pparams}^\text{LR} J(\pmixturepolicy) \big) \right).
    \end{align*}
\end{proposition}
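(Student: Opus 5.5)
The trace of a covariance matrix is the sum of the coordinate variances. So I would split both sides into the $2N$ distribution-parameter coordinates $\theta_i^{\component_\compidx}\in\{\mu_\compidx,\sigma_\compidx\}$ and the $N$ weight coordinates $\weight_\compidx$. I would then show that the MRP side is no larger than the LR side within each group, summed over $\compidx$. That is exactly the form in which Assumptions \ref{assm:variance_reduction_condition_full_main} and \ref{assm:when_importance_sampling_is_bad_main} are stated.

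First I would write the estimators coordinatewise.

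\textbf{MRP.} We have $\hat\partial^{\text{MRP}}_{\mu_\compidx}=w_\compidx r'(\mu_\compidx+\sigma_\compidx\epsilon)$, $\hat\partial^{\text{MRP}}_{\sigma_\compidx}=w_\compidx\,\epsilon\, r'(\mu_\compidx+\sigma_\compidx\epsilon)$ and $\hat\partial^{\text{MRP}}_{\weight_\compidx}=r(\mu_\compidx+\sigma_\compidx\epsilon)$. With $A=\mu_\compidx+\sigma_\compidx\epsilon\sim\comppolicyatidx$, these are $w_\compidx\psi_1(A)$, $w_\compidx\psi_2(A)/\sigma_\compidx$ and $r(A)$.

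\textbf{LR.} Since $\nabla\log\mixturepolicy=\nabla\mixturepolicy/\mixturepolicy$, we get $\hat\partial^{\text{LR}}_{\theta_i^{\component_\compidx}}=w_\compidx\,\rho^{\component_\compidx}_{\mixture}(A)\,\partial_{\theta_i}\log\comppolicyatidx(A)\,r(A)$ and $\hat\partial^{\text{LR}}_{\weight_\compidx}=\rho^{\component_\compidx}_{\mixture}(A)r(A)$, with $A\sim\mixturepolicy$. Here $\partial_\mu\log\comppolicyatidx=(a-\mu_\compidx)/\sigma_\compidx^2$ and $\partial_\sigma\log\comppolicyatidx=((a-\mu_\compidx)^2-\sigma_\compidx^2)/\sigma_\compidx^3$. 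So the LR component terms are $w_\compidx\rho\,\phi_i(A)/\sigma_\compidx^{2}$ and $w_\compidx\rho\,\phi_i(A)/\sigma_\compidx^{3}$ respectively, i.e. $w_\compidx$ times $\rho$ times the on-policy single-component LR estimator $\hat\partial^{\text{LR}}_{\theta_i^{\component_\compidx}}J(\rawpcomppolicyatidx)$.

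For the weight coordinates, the MRP variance is $\bV_{\comppolicyatidx}(r(A))$ and the LR variance is $\bV_{\mixturepolicy}(\rho r)$. Summing over $\compidx$, \eqref{eq:variance_assm_reward} gives the desired inequality directly.

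For the component coordinates I would chain two comparisons. By \eqref{eq:variance_assm_gradient}, $\sum_\compidx w_\compidx^2\bV_{\mixturepolicy}(\rho\,\hat\partial^{\text{LR}}J(\rawpcomppolicyatidx))$ dominates the on-policy counterpart. The weights $w_\compidx^2$ need care here. Since the assumption is stated as an unweighted sum, I would either absorb $w_\compidx$ into the per-component estimator, treating the assumption as applied to the scaled quantities, or note that the intended reading is per coordinate; I would state this explicitly.

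It then remains to show that, for a single Gaussian component, the sum over $\compidx$ of the RP variances is at most the sum of the on-policy LR variances. Write $\bV(\text{LR})=\bE[\phi_i^2]/\sigma^{2p}-(\text{mean})^2$ and $\bV(\text{RP})=\bE[\psi_i^2]\cdot c-(\text{mean})^2$. The means coincide, since both estimators are unbiased for the same gradient. Stein's lemma $\bE[(A-\mu)g(A)]=\sigma^2\bE[g'(A)]$ relates $\bE[\phi_i']$ to $\bE[\psi_i]$, and it is justified by the integrability clauses of Assumption \ref{assm:variance_reduction_condition_full_main}. To bound second moments I would use a Gaussian Poincar\'e/Cram\'er–Rao type inequality. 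The comparison I expect is
\[
\bV(\varphi(A))\ge\sigma^2\bE[\varphi'(A)]^2
\quad\text{and}\quad
\bV(\varphi(A))\le\sigma^2\bE[\varphi'(A)^2].
\]
Applying the lower bound to $\phi_i$ and the Poincar\'e upper bound to $\psi_i$ gives $\bV(\text{LR}_i)\ge\bE[\phi_i']^2\sigma^{2}/\sigma^{2p}$ and $\bV(\text{RP}_i)\le\sigma^2\bE[\psi_i'^2]\cdot c$. After matching the powers of $\sigma$, the difference summed over $\compidx$ is exactly the quantity in Assumption \ref{assm:variance_reduction_condition_full_main}, which is nonnegative.

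Finally, adding the weight-coordinate and component-coordinate inequalities gives the trace inequality.

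\textbf{Main obstacle.} The hard step is the single-component bookkeeping: getting the exact powers of $\sigma_\compidx$ and the $w_\compidx$ factors so that the Cram\'er–Rao lower bound for LR and the Poincar\'e upper bound for RP combine precisely into the $\sigma_\compidx^4$ expression of the assumption. A secondary risk is that the lower bound $\bV(\varphi)\ge\sigma^2\bE[\varphi']^2$ holds only in Cram\'er–Rao form with the correct normalization. If it fails as stated, I would derive it via Cauchy–Schwarz on $\mathrm{Cov}(\varphi(A),A-\mu)=\sigma^2\bE[\varphi']$, which yields $\bV(\varphi)\ge\sigma^2\bE[\varphi']^2$.
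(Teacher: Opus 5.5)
Your route is the same as the paper's. You write the trace as a sum of marginal variances and settle the weight coordinates directly by \eqref{eq:variance_assm_reward}. For the $\mu_k,\sigma_k$ coordinates you chain \eqref{eq:variance_assm_gradient} with the two-sided Gaussian bound $\sigma^2\bE[\varphi']^2\le\bV(\varphi)\le\sigma^2\bE[\varphi'^2]$; your Stein/Cauchy--Schwarz lower bound and Poincar\'e upper bound are exactly the Cacoullos inequality the paper uses. The weight coordinates and the single-component bounds are fine. The gap is in summing over $k$, and it is not just bookkeeping.

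Put $\Delta_{i,k}\doteq\bE_{\pi^{c_k}}[\phi_i'(A)]^2-\sigma_k^4\,\bE_{\pi^{c_k}}[\psi_i'(A)^2]$, and let $G_{\theta,k}$ be the $k$-th summand of \eqref{eq:variance_assm_gradient}. Your single-component bounds give
\begin{align*}
\bV_{\pi^{c_k}}\bigl(\hat\partial^{\text{LR}}_{\mu_k}J(\pi^{c_k})\bigr)-\bV\bigl(\hat\partial^{\text{RP}}_{\mu_k}J(\pi^{c_k})\bigr)&\ge\sigma_k^{-2}\Delta_{1,k},\\
\bV_{\pi^{c_k}}\bigl(\hat\partial^{\text{LR}}_{\sigma_k}J(\pi^{c_k})\bigr)-\bV\bigl(\hat\partial^{\text{RP}}_{\sigma_k}J(\pi^{c_k})\bigr)&\ge\sigma_k^{-4}\Delta_{2,k}.
\end{align*}
By the mixture/component variance identities, both these differences and $G_{\theta,k}$ enter the trace multiplied by $w_k^2$. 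So your chain needs
\[
\sum_k w_k^2\,G_{\theta,k}\ge 0,\qquad \sum_k w_k^2\sigma_k^{-2}\Delta_{1,k}\ge 0,\qquad \sum_k w_k^2\sigma_k^{-4}\Delta_{2,k}\ge 0.
\]
Assumptions \ref{assm:variance_reduction_condition_full_main} and \ref{assm:when_importance_sampling_is_bad_main} only give the unweighted sums $\sum_k\Delta_{i,k}\ge0$ and $\sum_k G_{\theta,k}\ge0$. A nonnegative unweighted sum does not control a sum with non-constant positive weights. For example, $\Delta_{1,1}=2$, $\Delta_{1,2}=-1$ satisfies the assumption, but with $w_1^2\sigma_1^{-2}=0.1$ and $w_2^2\sigma_2^{-2}=1$ the weighted sum is $-0.8$. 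Your claim that, after matching powers of $\sigma$, the summed difference is \emph{exactly} the quantity in the assumption is therefore false unless all $\sigma_k$ coincide. Likewise, the $w_k^2$ issue you flagged cannot be resolved from the stated hypotheses unless all $w_k$ coincide.

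The paper's proof makes the same silent moves. It drops $w_k^2$ when reducing the mixture inequality to the component one. In the $\mu_k$ chain it passes from $\sum_k\sigma_k^2\bE[r''(A)^2]$ to $\sum_k\sigma_k^{-2}\bE[(A-\mu_k)r'(A)+r(A)]^2$, which is the assumption with each summand divided by $\sigma_k^2$; it does the same with $\sigma_k^{-4}$ for the $\sigma_k$ coordinates. You have reproduced its argument together with this flaw.

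To make the argument valid you have to strengthen the hypotheses, in one of three ways:
\begin{itemize}
\item require every summand of both assumptions to be nonnegative, then multiply by the positive constants $w_k^2$, $w_k^2\sigma_k^{-2}$, $w_k^2\sigma_k^{-4}$ and add;
\item state the assumptions with these weights built in;
\item restrict to equal $w_k$ and equal $\sigma_k$.
\end{itemize}
Your ``per coordinate'' reading is such a strengthening, but you must apply it to the $\sigma_k$ powers in Assumption \ref{assm:variance_reduction_condition_full_main} as well, not only to $w_k$. Under the assumptions as literally stated (summed over $k$ precisely so that individual components may violate them), the chain does not deliver the trace inequality.
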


\begin{remark}
    \Cref{prop:variance_reduction_main} builds on the marginal variance bound for the mean derivative estimator of Gaussians in \citet{gal2016uncertainty}. Here, we extend it to the standard deviation derivative estimator and Gaussian mixtures. Instead of deriving a bound for marginal variance as in 
    \citet{gal2016uncertainty}, we derive a bound for the traces of the estimators' covariance matrices for two reasons. First, the trace provides a single scalar value that summarizes the multivariate variability of the gradient, which is common in the literature \citep{miller2017reducing,xu2019variance}. Second, by directly targeting the trace, we can relax the assumptions so that the required conditions do not need to hold for each individual component. For example, in \Cref{assm:when_importance_sampling_is_bad_main}, it is possible that sampling from the mixture policy $\mixturepolicy$ might reduce variance for some of its components but the total variance across all components is still higher when some components induce dominating high-variance.
\end{remark}

\begin{remark}
    \Cref{prop:variance_reduction_main} is under the univariate setting, but can be extended to multivariate. For example, we can extend the analysis of to multivariate Gaussians in \citet{xu2019variance} by using \Cref{prop:mixture_component_variance_lr,prop:mixture_component_variance_mrp} in \Cref{sec:app_theory}.
\end{remark}

{
While \Cref{prop:variance_reduction_main} is derived for the bandit setting with univariate actions for simplicity, the practical effect of reduced gradient variance and improved stability holds empirically in high-dimensional RL tasks, as demonstrated in seven MuJoCo tasks in \Cref{sec:exp_lr_vs_rp}.
}

\begin{figure*}[tb]
  \centering
  \begin{minipage}{0.66\linewidth}
    \centering
    \resizebox{\textwidth}{!}{
    \includegraphics[height=2.5cm,trim=0 7 0 7,clip]{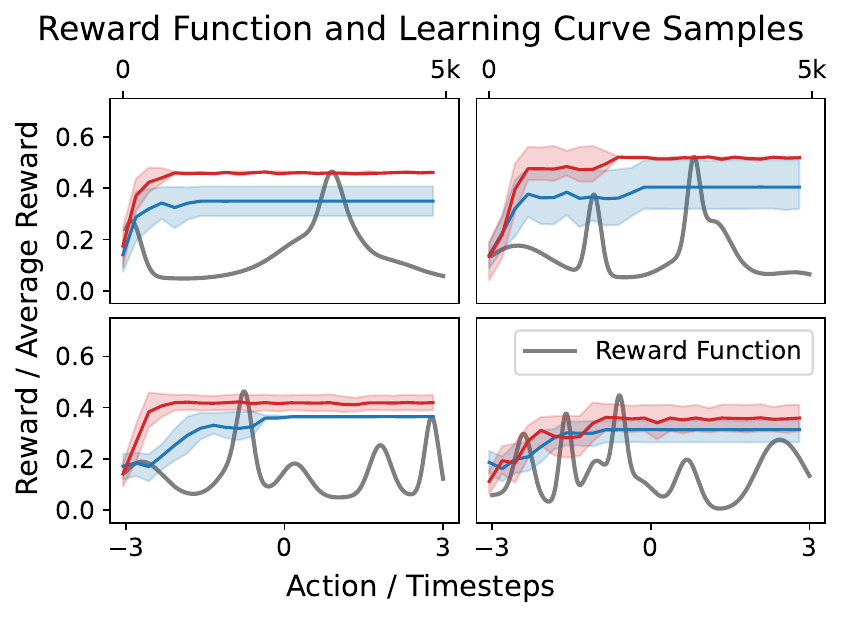}
    \hfill
    \includegraphics[height=2.5cm,trim=0 0 0 0,clip]{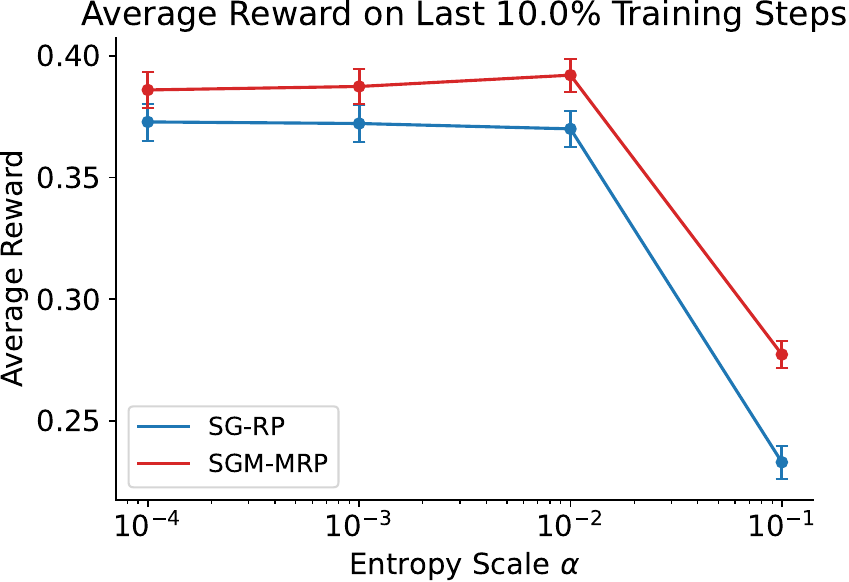}
    }
  \end{minipage}%
  \hfill
  \begin{minipage}{0.32\linewidth}
    \caption{\textbf{Left}: Examples of synthetic multimodal bandits and learning curves on them. 
    \textbf{Right}: Average performance across $100$ bandits at different entropy scales.
    All shaded areas and error bars show $95\%$ bootstrap confidence intervals (CIs).}
    \label{fig:multimodal_bandit_like}
  \end{minipage}
\end{figure*}

\begin{figure*}
    \centering
    \resizebox{\textwidth}{!}{
    \includegraphics[height=3cm,trim=0 5 0 5,clip]{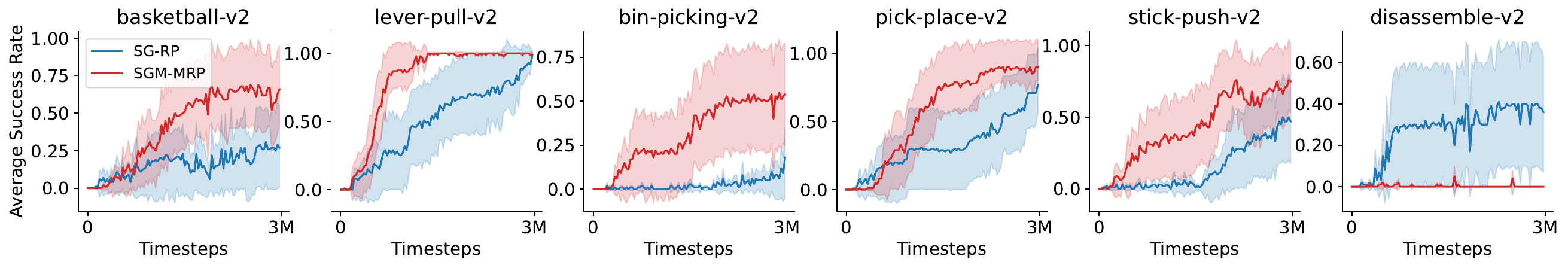}
    }
    \captionsetup{skip=6pt}
    \vspace{-0.5cm}
    \caption{Learning curves in selected MetaWorld tasks where the performance gap between mixture policies and base policies is most pronounced. 
    Shaded areas show 95\% bootstrap CIs over $10$ runs.
    }
    \label{fig:meta_world_highlight}
\end{figure*}

\section{Experiments on the Utility of Mixture Policies}
\label{sec:experiments}

We empirically compare mixture and base policies in this section, while investigating different gradient estimators for mixture policies in \Cref{sec:experiments_estimators}.
We use SAC as the base learning algorithm and the squashed Gaussian policy as the base policy
{(see \Cref{sec:app_sub_heavy_tailed_base_policy} for results using a heavy-tailed base policy)}.
We denote different SAC instances using X-Y, where X represents the policy's parameterization, and Y represents the gradient estimator. 
For squashed Gaussian (SG) policies, we consider the RP estimator (SG-RP). For squashed Gaussian mixture (SGM) policies, we consider the MRP estimator (SGM-MRP). We use $5$ components for mixture policies in all our experiments (see \Cref{sec:app_sub_num_of_components} for an ablation study). Unless otherwise noted, error bars and shaded regions indicate $95\%$ bootstrap confidence intervals.

\subsection{Do Mixture Policies Help Find Higher Peaks in the Critic?}
\label{sec:multimodal_results}

We first investigate the following question: \textit{When the critic landscape is multimodal, do mixture policies help find better peaks?} To avoid confounding factors, such as nonstationarity of the critic and the impact of the actor on the critic, we focus on the bandit setting, where the critic (i.e., the reward function) is stationary and given to the agent. Specifically, we create $100$ continuous bandits using density functions of randomly generated Gaussians. We sweep the initial actor step size and the entropy scale, running each setting on each bandit for $10$ runs. We report results of the best setting that has the highest average reward over the last $10.0\%$ training steps, as we are mostly interested in how well the agent explores. Figure \ref{fig:multimodal_bandit_like} (Left) shows a few examples of the generated bandits and the learning curves corresponding to them. More details can be found in \Cref{sec:app_sub_experimental_details_app}.

\textbf{Mixture policies explore more efficiently.} Figure \ref{fig:multimodal_bandit_like} (Right) shows the aggregated final performance of base policies (SG-RP) and mixture policies (SGM-MRP). We can see that mixture policies are better than base policies across different $\alpha$. From Figure \ref{fig:multimodal_bandit_like} (Left), we see that mixture policies can find higher peaks in the critic compared with base policies.

\textbf{Mixture policies also improve robustness.} Another observation from Figure \ref{fig:multimodal_bandit_like} (Right) is that the mixture policy is more robust to the entropy scale, which is consistent with our results presented in Section \ref{sec:motivation_in_bandit}. Specifically, other than the consistent improvement of the mixture policy over the base policy, we can see that the gap between them increase as $\alpha$ increases. This result suggests that mixture policies are possibly more robust to larger entropy scales and explore more efficiently with a moderate large entropy scale. This is not the case for the base policy in this experiment.

\subsection{Do Mixture Policies Help in Continuous Control Benchmarks?}
\label{sec:exp_continuous_control_all}

\begin{wrapfigure}{r}{0.49\textwidth}
    \centering
    \resizebox{.49\textwidth}{!}{
    \includegraphics[width=\textwidth,trim=0 5 0 0,clip]{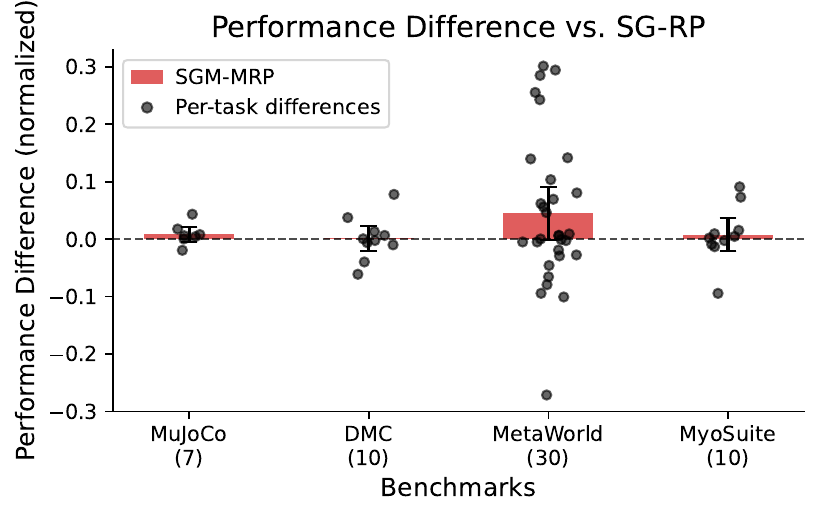}
    }
    \caption{
    Performance comparison of SGM-MRP vs. SG-RP across common benchmarks. Positive values indicates SGM-MRP outperforms SG-RP. Bars represent mean performance difference across tasks within each benchmark, with error bars denoting $95\%$ bootstrap CIs. Individual points represent performance delta for each specific task.
    }
    \label{fig:mujoco_dmc_bar_plots}
\end{wrapfigure}
We next move on to the full RL setting: \textit{Do mixture policies improve performance in common continuous control benchmarks?} Specifically, we investigate the performance of mixture policies in $57$ environments from four common continuous control benchmarks: $7$ from OpenAI Gym MuJoCo \citep{brockman2016openai}, $10$ from DeepMind Control (DMC) Suite \citep{tassa2018deepmind}, $30$ from MetaWorld \citep{yu2020meta}, and $10$ from MyoSuite \citep{caggiano2022myosuite}. 
Performance is measured using the average (normalized) return in Gym MuJoCo and DM Control or success rate in MetaWorld and MyoSuite.
We use SAC with \textit{automatic entropy tuning} and the hyperparameters reported in the second SAC paper \citep{haarnoja2018softapp}, which are tuned based on SG-RP. 
We use $10$ random seeds for each environment.
Please refer to \Cref{sec:app_sub_experimental_details_app} for more details.

\begin{figure*}[t]
    \centering
    \resizebox{\textwidth}{!}{
    \includegraphics[height=3cm,trim=0 5 0 5,clip]{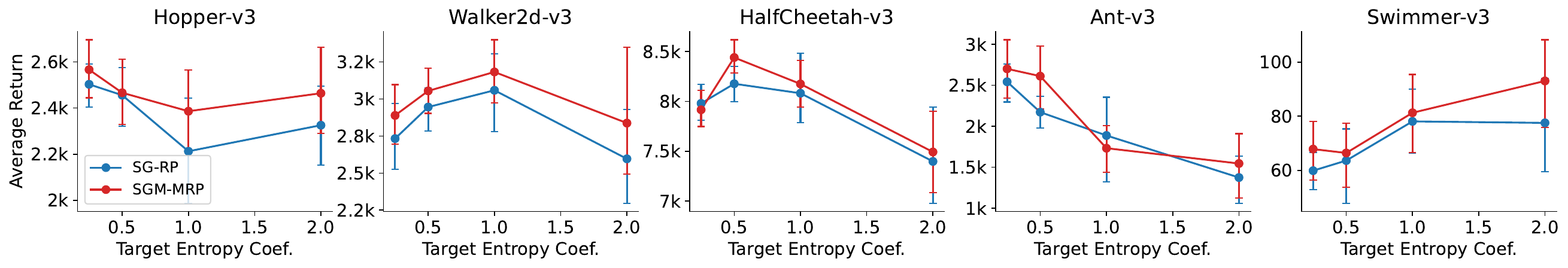}
    }
    \captionsetup{skip=2pt}
    \caption{Performance with different target entropy coefficients for mixture policies and base policies in five MuJoCo environments. Results are averaged over $10$ runs.
    }
    \label{fig:sensitivity_mujoco}
\end{figure*}

\begin{figure*}
    \centering
    \resizebox{\textwidth}{!}{
    \includegraphics[height=3cm,trim=0 5 0 5,clip]{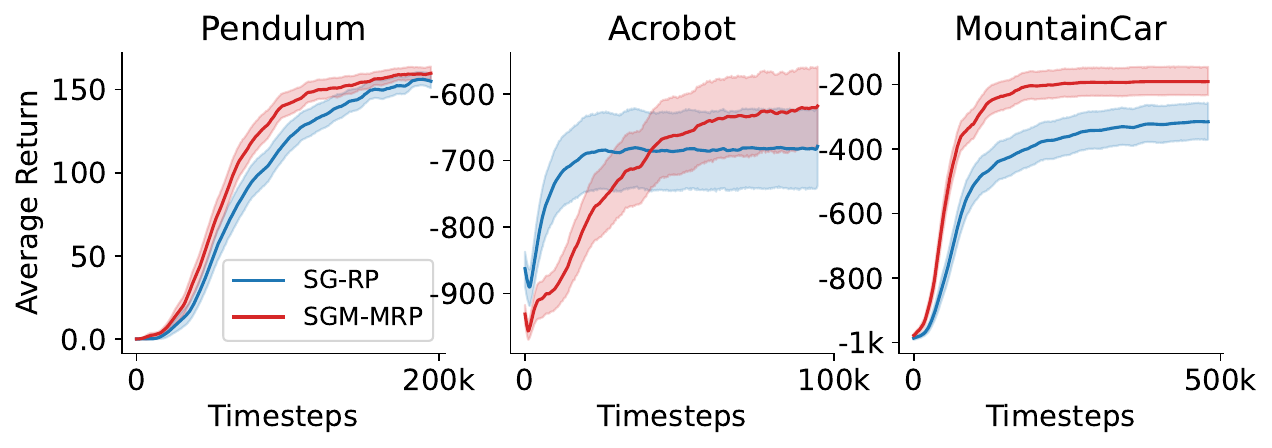}
    \includegraphics[height=3cm,trim=0 5 0 5,clip]{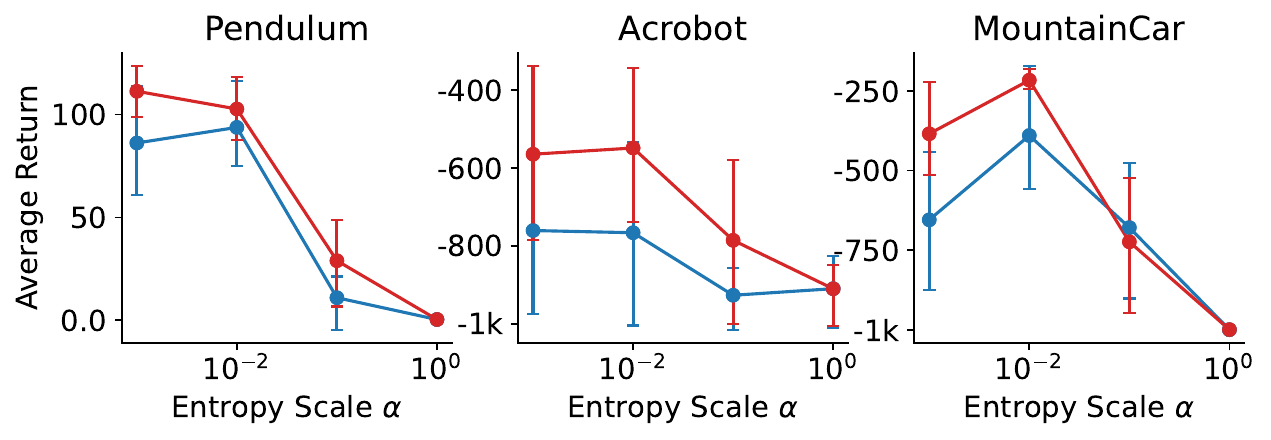}
    }
    \captionsetup{skip=4pt}
    \vspace{-0.4cm}
    \caption{Learning and sensitivity curves for classic control environments with unshaped rewards. Learning curves are averaged over $200$ runs, whereas sensitivity curves are over $10$ runs.}
    \label{fig:classic_learning_curves}
    \vspace{-0.3cm}
\end{figure*}

\textbf{Mixture policies consistently match or surpass base policies without hyperparameter tuning.}
\Cref{fig:mujoco_dmc_bar_plots} reports the performance gap between SGM-MRP and SG-RP across four benchmarks. Overall, mixture policies perform comparably to base policies and often achieve slightly higher average performance. The gains are pronounced in MetaWorld, where mixture policies more often significantly outperform base policies. \Cref{fig:meta_world_highlight} shows the learning curves in MetaWorld environments where the performance gap is larger than $20\%$. Learning curves for all tasks are reported in \Cref{fig:mujoco_dmc_learning_curves,fig:mujoco_dmc_learning_curves_additional} (in the appendix).

We also evaluate additional target-entropy values in MuJoCo environments, and the results consistently show that mixture policies outperform base policies (see \Cref{fig:sensitivity_mujoco,fig:sensitivity_mujoco_two}).

\subsection{Do Mixture Policies Help With Unshaped Rewards?}
\label{sec:exp_shaped_versus_unshaped}

While mixture policies do not appear to be generally beneficial on common benchmarks, we want to understand when mixture policies may be more helpful. 
Since most tasks in those benchmarks have shaped rewards,
we hypothesize that \textit{mixture policies are more helpful in environments with unshaped rewards compared with those with shaped rewards}. We next test this hypothesis.

To perform a more extensive empirical investigation with proper hyperparameter tuning, we use three classic control environments as they require less computation resource: Pendulum \citep{degris2012model}, Acrobot \citep{sutton2018reinforcement}, and MountainCar \citep{sutton2018reinforcement}. Specifically, we use two different variants for each of them: one with shaped rewards 
and the other one with unshaped rewards.
We refer the reader to Section \ref{sec:app_classic_control_rewards} about their versions and specific reward functions. We use SAC with a fixed entropy scale as it is reported that SAC with automatic entropy performs worse in this domain \citep{neumann2022greedy}. We sweep the entropy scale, the initial critic step size, and the initial actor step size, running each setting for $10$ runs. We report another $30$ reruns for environments with shaped rewards and $200$ reruns for those with unshaped rewards of the best setting that has the largest area under the learning curve. See \Cref{sec:app_sub_experimental_details_app} for more details.

\begin{figure*}
    \centering
    \includegraphics[width=\linewidth]{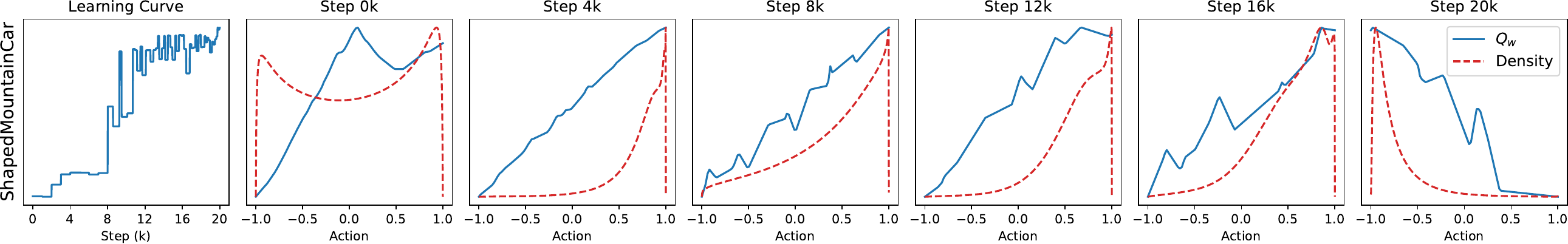}
    \includegraphics[width=\linewidth]{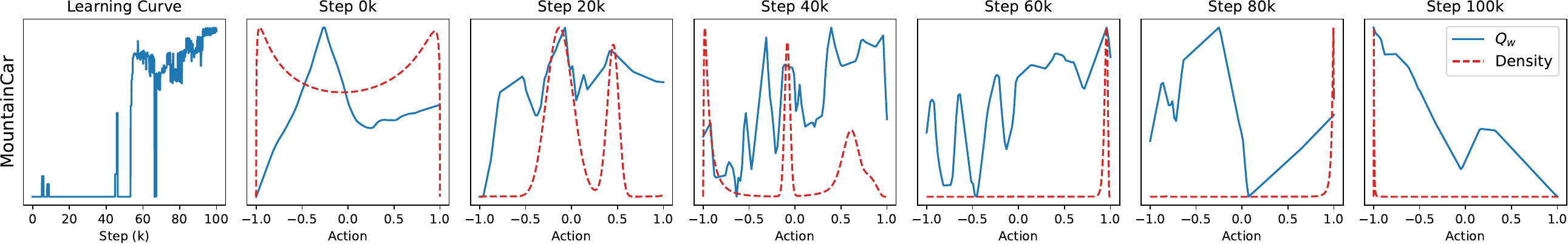}
    \caption{Learning curves, action-value estimates, and policy densities at a starting state from a sample run of SGM-MRP in two MountainCar variants. 
    The y-axes differ across plots and are not shown with ticks to highlight the shape of the curves rather than the values. 
    The learning curves show faster convergence and smoother performance in \texttt{ShapedMountainCar} with shaped rewards, whereas unshaped rewards in \texttt{MountainCar} lead to more erratic and slower learning. The action-value estimates are smoother and more stable in \texttt{ShapedMountainCar}. In contrast, they are more unstable and multimodal in \texttt{MountainCar}. Correspondingly, the density quickly becomes unimodal and concentrates on one of the boundaries in the former, while multimodal density has more occurrence in the latter, reflecting continued exploration of the mixture policy.}
    \label{fig:vis_critic_mc}
\end{figure*}

\textbf{Mixture policies improve exploration when rewards are unshaped.}
\Cref{fig:classic_learning_curves} shows results in environments with unshaped rewards, where mixture policies substantially outperform base policies and consistently achieve higher returns across entropy scales. In contrast, performance in shaped-reward environments is more stable, and the gap between the two is much smaller (see \Cref{fig:classic_learning_curves_shaped}).

\textbf{Visualization of action-value estimates and policy density.}
To understand why mixture policies deviate more from base policies in unshaped-reward settings, we examine an SGM-MRP agent in MountainCar and ShapedMountainCar (\Cref{fig:vis_critic_mc}). In both environments, the agent must decide which direction to accelerate to gain momentum (\Cref{fig:mc_starting_state}). In MountainCar, where rewards are unshaped, the critic is less smooth and mixture policies exhibit greater multimodality, preserving multiple action modes that support bidirectional exploration. In contrast, shaped rewards in ShapedMountainCar yield smoother critics and policies with fewer modes, reducing the need for exploration. This illustrates why mixture policies, which enhance exploration (\Cref{sec:multimodal_results}), show limited benefit in common benchmark environments with shaped rewards (\Cref{sec:exp_continuous_control_all}).

\section{Experiments on Gradient Estimators for Mixture Policies}
\label{sec:experiments_estimators}

\begin{wrapfigure}{r}{0.49\textwidth}
    \centering
    \resizebox{0.49\textwidth}{!}{
    \includegraphics[height=2.5cm,trim=0 5 0 0,clip]{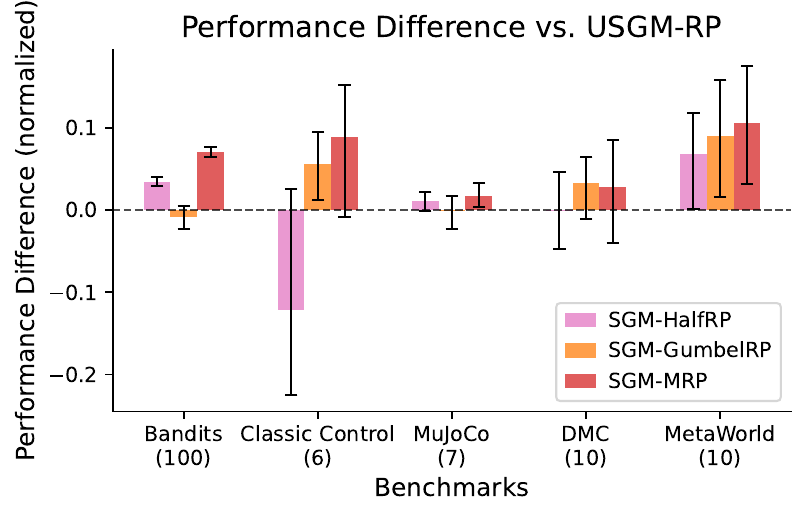}
    }
    \caption{
    Performance comparison of different estimators for mixture policies across common benchmarks. Positive values indicate that the estimator outperforms the USGM-RP baseline. Bars represent mean performance difference across tasks within each benchmark, with error bars denoting $95\%$ bootstrap CIs.
    }
    \label{fig:summarized_perf_mixture}
\end{wrapfigure}
In this section, we compare our proposed MRP estimator for mixture policies against a prior approach that fixes the mixing weights to avoid reparameterizing the weighting policy \citep{baram2021maximum}. Following \citeauthor{baram2021maximum}, we use uniform mixing weights and denote this baseline as USGM-RP. In addition to the MRP estimator, we also evaluate the HalfRP estimator (\Cref{eq:gradient_half_reparam}) and the GumbelRP estimator. The GumbelRP estimator uses the Gumbel-Softmax trick \citep{jang2016categorical} to obtain biased reparameterization samples for the discrete weighting policy. Please refer to \Cref{sec:app_gumbel_reparam_estimator} for more details.
We compare them in synthetic multimodal bandits, classic control, and three common continuous control benchmarks.
For each setting, we follow the same experimental setup as 
\Cref{sec:experiments}. \Cref{fig:summarized_perf_mixture} shows the summarized results. Learning curves and more plots can be found in \Cref{sec:app_experimet_details}.

\textbf{The MRP estimator is the most reliable overall.}
Across all settings, SGM-MRP consistently performs best, in some cases significantly outperforming both the baseline and other estimators. SGM-GumbelRP is competitive in classic control and continuous control benchmarks but performs worst in multimodal bandits, likely because the bias in its gradient is more pronounced in this simpler setting. Conversely, the unbiased SGM-HalfRP does well in multimodal bandits but fails badly in some classic control environments, suggesting that its high variance hinders stable learning.

\section{Conclusions}

Mixture policies are a simple way to increase the flexibility of the policy parameterization, but little has been documented about their efficacy, or lack of efficacy. Our aim was to start to fill this gap, to make this a more accessible tool when using entropy-regularized actor-critic algorithms with continuous actions, like Soft-Actor Critic (SAC). The clear outcome from the study is that mixture policies are comparable, and sometimes better than, a base unimodal policy. Through a few basic theoretical results and experiments in bandits, we highlighted that mixture policies are more robust to entropy scale, with 1) a preference for multimodality increasing with higher entropy, 2) less divergence (lack of stationary points) compared to the base Gaussian policy, 3) better balance between entropy and the expected reward objective, resulting in higher unregularized values in addition to higher regularized values, and 4) higher likelihood of finding maxima on a multimodal surface. This behavior seemed to manifest in better exploration in environments with unshaped, or uninformative rewards; in such environments, without shaped rewards, exploration is critical and the mixture policies performed better than the unimodal base policy. In particular, we found the base policy had more failed runs where it was unable to find the goal at all. 

To leverage the utility of mixture policies, however, we needed a small algorithmic improvement: reparameterization gradients. We proposed a new reparameterization gradient estimator for mixture policies that reduces variance effectively, filling in a gap in the literature. 
We proved that this estimator enjoys lower variance than the standard LR estimator and showed that it perform better than alternatives.

{
Finally, while mixture policies provide robust performance across domains, they introduce a modest additional computational cost and require choosing the number of mixture components. We provide a concrete analysis of the computational overhead in \Cref{sec:app_sub_computation_overhead} and discuss selection of the number of components in \Cref{sec:app_sub_num_of_components}.
}

\subsubsection*{Acknowledgments}

This research is supported by the Canada CIFAR AI Chairs program, the Reinforcement Learning and Artificial Intelligence (RLAI) laboratory, the Alberta Machine Intelligence Institute (Amii), the Natural Sciences and Engineering Research Council (NSERC) of Canada, and the Digital Research Alliance of Canada. The authors would like to thank Shivani Singal for feedback on an earlier version of this paper, as well as Qingfeng Lan, Rupam Mahmood, Haseeb Shah, and Pranaya Jajoo for helpful discussions.

\bibliography{main}
\bibliographystyle{tmlr}

\appendix

\clearpage

\section{Expanded Related Works}
\label{sec:app_related_works}

\paragraph{Implicit policies.} Existing works on modeling continuous distributions on continuous action spaces can be put into two categories: \emph{explicit policies} that directly output parametric distributions and \emph{implicit policies}. We have discussed the former and touched on the latter in the introduction of our paper. Here, we further discuss implicit policies. Implicit policies utilize deep generative models (e.g., energy-based models; \citealp{haarnoja2017reinforcement,messaoud2024sac}; normalizing flows; \citealp{tang2018implicit,mazoure2020leveraging}; diffusion models; \citealp{wang2023diffusion}) to model the policy. These models can model complex distributions and have improved learning efficiency, but they usually have more parameters and complex training pipelines. Compared with these more complex policies, explicit policies have several benefits. Firstly, they are simpler and more efficient to train.
Secondly, they have simple explicit probability density, which is useful in various ways in entropy-regularized RL. Note that some implicit policies do not hold such a property.
Further, mixture policies, which we consider in our paper, also have modeling power to model arbitrary distribution given a sufficiently large number of components.
Given these benefits, we think it is important to improve our understanding of mixture policies. Nevertheless, we agree that it is also important to investigate more complex but powerful implicit policies, and the benefits of mixture policies demonstrated in our paper can potentially be generalized to them.

\paragraph{Latent variable policies.} Between explicit policies and implicit policies, latent variable policies \citep{hafner2019learning,lee2020stochastic,zhang2023latent} attempt to increase the expressivity of the former by conditioning explicit policies on continuous latent variables, which are essential to the latter. While many implicit policies technically fall under this umbrella, we distinguish our scope by focusing on latent variables used for agent-state modeling rather than those used solely to parameterize complex output distributions. 
The most closely related work is \citet{zhang2023latent}, which introduces sampling-based estimators for policy entropy via marginalization of the latent variable.
While our approach also involves marginalizing a hidden component--specifically the mixing variable--the objectives differ fundamentally: whereas \citet{zhang2023latent} seek to overcome the computational intractability of marginalization in latent variable policies, we investigate the theoretical and empirical advantages of marginalization within the specific context of mixture policies.

\paragraph{Other uses of mixture policies.} Since mixture distributions are a widely known model, mixture policies have been investigated in various ways in the literature. \citet{daniel2012hierarchical} and \citet{celik2022specializing} focus on exploiting the hierarchy in mixture policies for problems with hierarchical structures. In their work, they design algorithms specific to mixture policies. Sharing the same motivation to use mixture policies to model diverse behaviors, \citet{nematollahi2022robot} learn a prior GMM using imitation learning and then use SAC to learn the changes to the prior GMM for adaptation. Here, the action space of SAC is the changes to GMM’s parameters. With a different motivation, \citet{seyde2022strength} use mixture policies to select from a diverse set of sub-policies to reduce the hyperparameter sensitivity of the algorithm. 

\paragraph{Mixture policies as the policy parameterization for SAC.} Different from the above previous works, our motivation for using mixture policies is to treat them as a more complex policy class and understand the effect of doing so under the entropy-regularization setting. Thus, our treatment does not include designing specific objective functions for mixture policies but using the standard regularized objective that is agnostic to policy parameterizations. In this regard, the closest related works are \citet{haarnoja2018softarxiv}, \citet{hou2020off}, \citet{baram2021maximum}, and \citet{ren2021probabilistic}. In the first version of SAC, \citeauthor{haarnoja2018softarxiv} did indeed test mixture policies but then did not pursue this further nor provide insights on this choice. We hypothesize the lack of reparameterization (RP) gradient estimators for mixture policies might be the reason that later versions of SAC switched to a single Gaussian as they found the RP estimator works better (see Footnote 3 on Page 67 of \citealp{haarnoja2018acquiring}). Later, \citeauthor{hou2020off} and \citeauthor{ren2021probabilistic} try to avoid reparameterization of the whole mixture policy in SAC by using a separate objective for the weighting policy. 
Further, \citeauthor{baram2021maximum} also revisit SAC with a mixture policy. However, their focus is to explore the utility of the upper and lower bounds of mixture models' entropy, without considering a learnable weighting policy.
We also note that the empirical analysis in these aforementioned work is on a restrictive set of MuJoCo environments and does not provide statistical significant evaluation results. 

It is not highly novel to use mixture policies, but rather to understand the effect of doing so. In our work: 1) We provide new insights into the effect of a more flexible policy class on the stationary points of the entropy regularized objective; 2) we propose a new RP gradient estimator for mixture policies with provable variance reduction properties and without compromising flexibility or altering the algorithm, filling in a gap in the literature; and 3) we explore the benefits of and provide insights into using mixture policies in environments without shaped rewards rigorously, complementing existing works on using mixture policies in entropy-regularized actor-critic.

\section{The Gumbel Reparameterization (GumbelRP) Estimator}
\label{sec:app_gumbel_reparam_estimator}

Since the output of the weighting policy $\pweightpolicy(\cdot|S_t)$ is a discrete random variable $K_t$, we can not directly reparameterize it. 
In \Cref{sec:half_rp_estimator}, we avoid using the LR estimator for the weighting policy by marginalizing the discrete random variable $K_t$. 
Here, we introduce an alternative to the MRP estimator through \textit{biased} reparameterizations of discrete random variables \citep{bengio2013estimating,maddison2016concrete,jang2016categorical}. Here, we investigate the straight-through Gumbel-Softmax reparameterization \citep{jang2016categorical}.

Given a weighting distribution $\pweightpolicy(\cdot|s)$ and i.i.d samples from Gumbel$(0,1)$, $\gumbelrndvar_1,\cdots,\gumbelrndvar_N$, we can obtain a sample from the corresponding Gumbel-Softmax distribution:
\begin{align*}
    y_\pparams(\vect\gumbelrndvar;s,k)
    = \frac{\exp({(\log{\pweightpolicy(k|s)} + \gumbelrndvar_k)/\tau})}{\sum_{k'=1}^N\exp({(\log{\pweightpolicy(k'|s)} + \gumbelrndvar_{k'})/\tau})} \quad \text{for } k=1, \dots, N,
\end{align*}
where we define $\vect\gumbelrndvar=[\gumbelrndvar_1, \cdots, \gumbelrndvar_N]$, and $\tau$ is a temperature parameter, controlling a bias-variance trade-off. 
Using the Gumbel-Max trick, we can obtain a sample from $\pweightpolicy(\cdot|s)$ using $y_\pparams(\vect\gumbelrndvar;s,k)$:
\begin{align*}
    \hat{\vect z} = \onehot \big(\arg\max_k \big(y_\pparams(\vect\gumbelrndvar;s,k)\big)\big) 
    =\onehot \big(\arg\max_k \big(\log{\pweightpolicy(k|s)} + \gumbelrndvar_k\big)\big).
\end{align*}
We can further use the straight-through trick to obtain a differentiable one-hot sample $\vect z = [z_\pparams(\vect\gumbelrndvar;s,1), \cdots, z_\pparams(\vect\gumbelrndvar;s,N)]$, where $z_\pparams(\vect\gumbelrndvar;s,k)$ is defined as follows:
\begin{align*}
    z_\pparams(\vect\gumbelrndvar;s,k) = \hat z_k + y_\pparams(\vect\gumbelrndvar;s,k) - y_{\vect\phi}(\vect\gumbelrndvar;s,k) |_{\vect\phi=\pparams}
    \quad \text{for } k=1, \dots, N.
\end{align*}
Finally, using the differential one-hot sample $\vect z$ from the weighting policy $\pweightpolicy$ and reparameterized samples from the component policies, we can obtain a differentiable action sample $a$:
\begin{align}
    \label{eq:gumbel_reparameterized_action}
    a= \textstyle \sum_{k=1}^N z_\pparams(\vect\gumbelrndvar;s,k) f_\pparams(\epsilon;s,k).
\end{align}
Plugging (\ref{eq:gumbel_reparameterized_action}) back to (\ref{eq:gradient_reparam}), we can obtain a full RP estimator, which we call the \textit{Gumbel-reparameterization} (GumbelRP) estimator:
\begin{align}
    \!\!\hat \nabla_\pparams \hat J(\pmixturepolicy)
    \!=\! \nabla_\pparams \big(Q_\qparams(S_t, A_t) \!-\! \alpha \log\pmixturepolicy(A_t|S_t)\big) \text{ with } A_t\!=\! \textstyle \sum_{k=1}^N \! z_\pparams(\vect\gumbelrndvar_t;S_t,k) f_\pparams(\epsilon_t;S_t,k). \!
    \label{eq:gradient_gumbel_reparam} 
\end{align}

\textbf{The temperature parameter $\tau$ controls a bias-variance trade-off.} When $\tau$ approaches $0$, the soft sample $\vect y=[y_\pparams(\vect\gumbelrndvar;s,1), \cdots, y_\pparams(\vect\gumbelrndvar;s,N)]$ will converge to a one-hot vector and recover the categorical sample. However, the variance of the gradients with respect to $\pweightpolicy(\cdot|s)$ will increase. Conversely, when $\tau$ becomes larger, the variance of the gradients will decrease, but the soft sample $\vect y$ will converge to a uniform vector. See \citet{jang2016categorical} for detailed discussions. In our study, we find that using a fixed temperature $\tau=1$ works well (see Section \ref{sec:app_gumbel_temperature} for a sensitivity study).

In \Cref{tab:estimators}, we compare different estimators in terms of their flexibility and bias-variance trade-off.

\begin{table}[htb]
  \caption{Comparison of gradient estimators for mixture policies. 
  Checkmarks indicate whether the method supports learnable mixture weights, 
  has low variance, and is unbiased. HalfRP has intermediate variance between the LR and MRP estimators. U-RP denotes mixture policies with uniform weightings and the RP estimator.
  }
  \label{tab:estimators}
  \centering
  \begin{tabular}{l|ccc}
    \toprule
    Estimator & Learnable weights & Low variance & Unbiased \\
    \midrule
    U-RP \citep{baram2021maximum} & \xmark & \cmark & \cmark \\
    LR \citep{williams1992simple} & \cmark & \xmark & \cmark \\
    \midrule
    HalfRP (\Cref{eq:gradient_half_reparam}) & \cmark & \pmark & \cmark \\
    GumbelRP (\Cref{eq:gradient_gumbel_reparam}) & \cmark & \cmark & \xmark \\
    MRP (\Cref{eq:gradient_expected_reparam}) & \cmark & \cmark & \cmark \\
    \bottomrule
  \end{tabular}
\end{table}

\newpage

\section{Proofs}
\label{sec:app_theory}

\subsection{Proofs for Results in Section \ref{sec:motivation_in_bandit}}

\begin{lemma} \label{thrm:stationary_point_relation}
For arbitrary $\pparams^\weight$ and any $\tilde{\pparams}^b$ such that $\nabla_{\pparams^\component} J(\pi^\component_{\tilde{\pparams}^\component})=0$, we have $\nabla_{\pparams^\mixture} J(\pi^\mixture_{\tilde\pparams^\mixture})=0$, where $\tilde\pparams^\mixture=\left[ \smash{{\tilde\pparams}^\component}^\top, \cdots, \smash{{\tilde\pparams}^\component}^\top, \smash{\pparams^\weight}^\top\right]^\top$.
\end{lemma}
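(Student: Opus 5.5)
The plan is to show that when every component carries the same parameter $\tilde\pparams^b$, the mixture policy is \emph{identical as a distribution} to the base policy, $\pi^m_{\tilde\pparams^m}(\cdot|s)=\sum_k \pi^w_{\pparams^w}(k|s)\,\pi^b_{\tilde\pparams^b}(\cdot|s)=\pi^b_{\tilde\pparams^b}(\cdot|s)$ for every $s$. We then compute each block of $\nabla_{\pparams^m}J$ using the chain rule through the policy.

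The central tool is a functional derivative of $J$ with respect to the policy. For any parameterization, write $\nabla_\pparams J(\pi_\pparams)=\bE_{s\sim d_{\pi_\pparams}}\big[\int \nabla_\pparams \pi_\pparams(a|s)\,(Q_{\pi_\pparams}(s,a)-\alpha\log\pi_\pparams(a|s))\,da\big]$. This is the entropy-regularized policy gradient theorem in LR form, equivalent to \Cref{eq:gradient_likelihood}. The $-\alpha\int\nabla\pi$ term vanishes because $\int\pi=1$. Denote by $g(s,a)\doteq Q_{\pi}(s,a)-\alpha\log\pi(a|s)$ the quantity evaluated at the common policy $\pi=\pi^b_{\tilde\pparams^b}=\pi^m_{\tilde\pparams^m}$. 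The weights $d_\pi$ and $g$ are the same for both parameterizations, since they depend only on the policy and not on how it is parameterized.

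\textbf{Component blocks.} We have $\nabla_{\pparams^c_k}\pi^m(a|s)=\pi^w(k|s)\nabla_{\pparams^c_k}\pi^c_{\pparams^c_k}(a|s)$ evaluated at $\pparams^c_k=\tilde\pparams^b$. Hence
\[
\nabla_{\pparams^c_k}J(\pi^m)=\bE_{s\sim d_\pi}\Big[\pi^w(k|s)\int\nabla_{\pparams^b}\pi^b_{\tilde\pparams^b}(a|s)\,g(s,a)\,da\Big].
\]
In the bandit or state-independent-weight case this is $w_k\nabla_{\pparams^b}J(\pi^b_{\tilde\pparams^b})=0$ immediately.

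For state-dependent weights, the extra factor $\pi^w(k|s)$ inside the expectation is the main obstacle, because the base stationarity condition only says the $d_\pi$-average vanishes. I would handle this first by noting that the paper's bandit setting (and the uses in \Cref{cor:alpha_relation}) needs only the state-independent case, so I would state the lemma in that setting. Alternatively, I would assume the base policy is tabular or expressive per state, so that stationarity holds pointwise in $s$. Under pointwise stationarity the $\pi^w(k|s)$ factor is harmless, and the block vanishes.

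\textbf{Weight block.} We have $\nabla_{\pparams^w}\pi^m(a|s)=\sum_k\nabla_{\pparams^w}\pi^w(k|s)\,\pi^c_k(a|s)=\big(\sum_k\nabla_{\pparams^w}\pi^w(k|s)\big)\pi^b(a|s)=\nabla_{\pparams^w}1\cdot\pi^b=0$, because all the $\pi^c_k$ coincide. So the weight gradient vanishes identically, for any $\pparams^w$.

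Stacking the blocks gives $\nabla_{\pparams^m}J(\pi^m_{\tilde\pparams^m})=0$. The only delicate step is justifying differentiation under the integral, which follows from Assumptions \ref{assm:nice_set} and \ref{assm:nice_function}.
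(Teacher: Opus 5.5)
Your argument is correct and matches the paper's proof: both compute each gradient block from the entropy-regularized policy-gradient formula. In the $k$-th component block the factor $\pi^w(k)$ is pulled out, leaving $\pi^w(k)$ times the base gradient, and the weight block vanishes via $\sum_k \nabla \pi^w(k)\,\pi^b = \pi^b\,\nabla 1 = 0$ because all components coincide. Your caveat about state-dependent weights is well-founded: the paper writes the weight as $\pi^w(k)$ with no $s$ and factors it out of the $s$-integral, so it implicitly works in the state-independent (or pointwise-stationary) setting you isolate, and without such a restriction the component block is a $\pi^w(k\mid s)$-reweighted average of per-state gradients that need not vanish.
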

\begin{proof}
From (3) of \citet{ahmed2019understanding}, we can obtain the policy gradient under the regularized objective:
\begin{align*}
    \nabla_{\pparams^\component} J(\pi^\component_{\tilde{\pparams}^\component})
    &= \int_s d_{\pi^\component_{\tilde\pparams^\component}}(s) \int_a \big( Q_{\pi^\component_{\tilde\pparams^\component}}(s,a) - \alpha \log\pi^\component_{\tilde{\pparams}^\component}(a|s)  \big) \nabla_{\pparams^\component} \pi^\component_{\tilde{\pparams}^\component}(a|s) \,da\,ds \\
    &= 0. \tag{by assumption}
\end{align*}

For any $\pparams^\weight$, to show $\nabla_{\pparams^\mixture} J(\pi^\mixture_{\tilde\pparams^\mixture})=0$, we can show $\nabla_{\pparams^\component_k} J(\pi^\mixture_{\tilde\pparams^\mixture})=0$ and $\nabla_{\pparams^\weight} J(\pi^\mixture_{\tilde\pparams^\mixture})=0$.

We first derive the gradient of $\pi^\mixture_{\pparams^\mixture}$ with respect to $\pparams^\component_k$ and $\pparams^\weight$:
\begin{align*}
    \nabla_{\pparams^\component_k} \pi^\mixture_{\pparams^\mixture}(a|s) &= \nabla_{\pparams^\component_k} \sum_{k=1}^N \pi^\weight_{\pparams^\weight}(k) \pi^\component_{\pparams^\component_k}(a|s)
    = \pi^\weight_{\pparams^\weight}(k) \nabla_{\pparams^\component_k}  \pi^\component_{\pparams^\component_k}(a|s), \\
    \nabla_{\pparams^\weight} \pi^\mixture_{\pparams^\mixture}(a|s) &= \nabla_{\pparams^\weight} \sum_{k=1}^N \pi^\weight_{\pparams^\weight}(k) \pi^\component_{\pparams^\component_k}(a|s)
    = \sum_{k=1}^N \nabla_{\pparams^\weight} \pi^\weight_{\pparams^\weight}(k) \pi^\component_{\pparams^\component_k}(a|s).
\end{align*}

Then, we can derive the gradient of $J(\pi^\mixture_{\pparams^\mixture})$ with respect to $\pparams^\component_k$:
\begin{align*}
    \nabla_{\pparams^\component_k} J(\pi^\mixture_{\pparams^\mixture})
    &= \int_s d_{\pi^\mixture_{\pparams^\mixture}}(s) \int_a \big( Q_{\pi^\mixture_{\pparams^\mixture}}(s,a)  - \log\pi^\mixture_{\pparams^\mixture}(a|s)  \big) \nabla_{\pparams^\component_k} \pi^\mixture_{\pparams^\mixture}(a|s) \,da\,ds \\
    &= \pi^\weight_{\pparams^\weight}(k) \int_s d_{\pi^\mixture_{\pparams^\mixture}}(s) \int_a \big( Q_{\pi^\mixture_{\pparams^\mixture}}(s,a) - \alpha \log\pi^\mixture_{\pparams^\mixture}(a|s)  \big) \nabla_{\pparams^\component_k}  \pi^\component_{\pparams^\component_k}(a|s) \,da\,ds.
\end{align*}
Plugging $\tilde\pparams^\mixture=\left[ \smash{{\tilde\pparams}^\component}^\top, \cdots, \smash{{\tilde\pparams}^\component}^\top, \smash{\pparams^\weight}^\top\right]^\top$ and $\pi^\mixture_{\tilde\pparams^\mixture}(\cdot|s) = \pi^\component_{\tilde\pparams^\component}(\cdot|s)$ in, we have
\begin{align*}
    \nabla_{\pparams^\component_k}J( \pi^\mixture_{\tilde\pparams^\mixture} ) = \pi^\weight_{\pparams^\weight}(k) \int_s d_{\pi^\component_{\tilde\pparams^\component}}(s) \int_a \big( Q_{\pi^\component_{\tilde\pparams^\component}}(s,a) - \alpha \log\pi^\component_{\tilde\pparams^\component}(a|s)  \big) \nabla_{\pparams^\component_k}  \pi^\component_{\tilde\pparams^\component}(a|s) \,da\,ds = 0.
\end{align*}

Next, we derive the gradient of $J(\pi^\mixture_{\pparams^\mixture})$ with respect to $\pparams^\weight$:
\begin{align*}
    \nabla_{\pparams^\weight} J(\pi^\mixture_{\pparams^\mixture})
    &= \int_s d_{\pi^\mixture_{\pparams^\mixture}}(s) \int_a \big( Q_{\pi^\mixture_{\pparams^\mixture}}(s,a) - \alpha \log\pi^\mixture_{\pparams^\mixture}(a|s)  \big) \nabla_{\pparams^\weight} \pi^\mixture_{\pparams^\mixture}(a|s) \,da\,ds \\
    &= \int_s d_{\pi^\mixture_{\pparams^\mixture}}(s) \int_a \big( Q_{\pi^\mixture_{\pparams^\mixture}}(s,a) - \alpha \log\pi^\mixture_{\pparams^\mixture}(a|s)  \big) \sum_{k=1}^N \nabla_{\pparams^\weight} \pi^\weight_{\pparams^\weight}(k) \pi^\component_{\pparams^\component_k}(a|s) \,da\,ds.
\end{align*}
Again, plugging $\tilde\pparams^\mixture=\left[ \smash{{\tilde\pparams}^\component}^\top, \cdots, \smash{{\tilde\pparams}^\component}^\top, \smash{\pparams^\weight}^\top\right]^\top$ and $\pi^\mixture_{\tilde\pparams^\mixture}(\cdot|s) = \pi^\component_{\tilde\pparams^\component}(\cdot|s)$ in, we have
\begin{align*}
    \nabla_{\pparams^\weight} J( \pi^\mixture_{\tilde\pparams^\mixture} ) &= \int_s d_{\pi^\component_{\tilde\pparams^\component}}(s) \int_a \big( Q_{\pi^\component_{\tilde\pparams^\component}}(s,a) - \alpha \log\pi^\component_{\tilde\pparams^\component}(a|s)  \big) \sum_{k=1}^N \nabla_{\pparams^\weight} \pi^\weight_{\pparams^\weight}(k) \pi^\component_{\tilde\pparams^\component}(a|s) \,da\,ds \\
    &= \int_s d_{\pi^\component_{\tilde\pparams^\component}}(s) \int_a \big( Q_{\pi^\component_{\tilde\pparams^\component}}(s,a) - \alpha \log\pi^\component_{\tilde\pparams^\component}(a|s)  \big) \pi^\component_{\tilde\pparams^\component}(a|s) \,da\,ds \nabla_{\pparams^\weight} \sum_{k=1}^N \pi^\weight_{\pparams^\weight}(k) \\
    &= \int_s d_{\pi^\component_{\tilde\pparams^\component}}(s) \int_a \big( Q_{\pi^\component_{\tilde\pparams^\component}}(s,a) - \alpha \log\pi^\component_{\tilde\pparams^\component}(a|s)  \big) \pi^\component_{\tilde\pparams^\component}(a|s) \,da\,ds \nabla_{\pparams^\weight} 1 \\
    &= 0.
\end{align*}
Thus, $\nabla_{\tilde\pparams}J( \pi^\mixture_{\tilde\pparams^\mixture} )=[\nabla_{\pparams^\component_1} J( \pi^\mixture_{\tilde\pparams^\mixture} )^\top, \cdots, \nabla_{\pparams^\component_N} J( \pi^\mixture_{\tilde\pparams^\mixture} )^\top, \nabla_{\pparams^\weight} J( \pi^\mixture_{\tilde\pparams^\mixture} )^\top]^\top=\vect 0$.
\end{proof}

\textbf{Proposition
\ref{thrm:entropy_regularized_optimality}.}
\textit{
When both $\pi^b_{\pparams^{b,*}}$ and $\pi^m_{\pparams^{m,*}}$ exist, then $J(\pi^m_{\pparams^{m,*}}) \ge J(\pi^b_{\pparams^{b,*}})$.
}
\begin{proof}
By \Cref{thrm:stationary_point_relation}, every stationary point of $J$ under the component parameterization $\pparams^b$ corresponds to a stationary point under the mixture parameterization $\pparams^m$. That is, the set of stationary points for $\pi^m_{\pparams^m}$ is a superset of that for $\pi^b_{\pparams^b}$. Since the global optimum is attained at a stationary point, optimizing over the larger set cannot yield a lower value, so $J(\pi^m_{\pparams^{m,*}}) \ge J(\pi^b_{\pparams^{b,*}})$.
\end{proof}

\textbf{Proposition
\ref{thrm:entropy_constrained_optimality}.}
\textit{
Consider entropy-constrained policy optimization $\max_{\pparams}J_0(\pi_\pparams)$ subject to $
\cH(\pi_\pparams)\ge H$ for some $H>0$ and define the optimal solution $\pparams'$, then $J_0(\pi^m_{\smash{\pparams^{m,}}'}) \ge J_0(\pi^b_{\smash{\pparams^{b,}}'})$.
}
\begin{proof}
Define $\tilde\pparams^\mixture=[\smash{\pparams^{b,}}'^\top, \cdots, \smash{\pparams^{b,}}'^\top, \smash{\pparams^\weight}^\top]^\top$ for any $\pparams^\weight$. Apparently, $\pi^\mixture_{\tilde\pparams^\mixture}(a)=\pi^\component_{\smash{\pparams^{b,}}'}(a)$. Then,
\begin{align*}
    J_0(\pi^m_{\smash{\pparams^{m,}}'}) \ge J_0(\pi^m_{\tilde\pparams^\mixture}) = J_0(\pi^b_{\smash{\pparams^{b,}}'}).
\end{align*}
\end{proof}

\textbf{Proposition
\ref{thrm:sad_fact_about_gaussian}.}
\textit{
Assume $r:\mathcal{A}\to\bR$ is an integrable function on $\mathcal{A}=\bR$. For all $\alpha>{2}r_{\max}$, $J(\pi_{\mu,\sigma})=\bE_{a\sim \mathcal{N}(\mu,\sigma)} [r(a) - \alpha \log \mathcal{N}(a;\mu,\sigma) ] $ does not have any stationary point.
}
\begin{proof}
To show that $J(\pi_{\mu,\sigma})$ does not have any stationary point, it is sufficient to show that its partial derivative with respect to $\sigma$ is lower bounded by zero:
\begin{align}
\label{eq:lower_bounding_partial_J}
    \frac{\partial J(\pi_{\mu,\sigma})}{\partial \sigma} > 0, \quad\forall \mu \in\bR, \sigma>0.
\end{align}

We first simplify the entropy term $H(\mathcal{N}(\cdot;\mu,\sigma))$ for the Gaussian policy:
\begin{align*}
    H(\mathcal{N}(\cdot;\mu,\sigma)) &= - \bE_{a\sim \mathcal{N}(\mu,\sigma)} [\log \mathcal{N}(a;\mu,\sigma)] \\
    &= - \bE_{a\sim \mathcal{N}(\mu,\sigma)} \left[ \log \left( \frac{1}{\sqrt{2\pi\sigma^2}} \exp \left(-\frac{(a-\mu)^2}{2\sigma^2} \right) \right)\right] \\
    &= \frac{1}{2} \log(2\pi\sigma^2) + \frac{1}{2\sigma^2} \bE_{a\sim \mathcal{N}(\mu,\sigma)} [ (a-\mu)^2 ] \\
    &= \frac{1}{2} \log(2\pi\sigma^2) + \frac{1}{2}.
\end{align*}
Further, we can derive its partial derivative with respect to $\sigma$: $\frac{\partial H(\mathcal{N}(\cdot;\mu,\sigma))}{\partial \sigma}=\frac{1}{\sigma}$.

Define $r(\mu, \sigma)=\bE_{a\sim \mathcal{N}(\mu,\sigma)} [r(a)]=\int_a r(a)\mathcal{N}(a|\mu,\sigma)\,da$. Then,
\begin{align*}
    \frac{\partial J(\pi_{\mu,\sigma})}{\partial \sigma}
    &= \frac{\partial}{\partial \sigma} \bE_{a\sim \mathcal{N}(\mu,\sigma)} [r(a) - \alpha \log \mathcal{N}(a;\mu,\sigma) ] \\
    &= \frac{\partial}{\partial \sigma} \left( r(\mu, \sigma) + \alpha H(\mathcal{N}(\cdot;\mu,\sigma)) \right) \\
    &= \frac{\partial r(\mu, \sigma)}{\partial \sigma} + \frac{\alpha}{\sigma}.
\end{align*}
To show \Cref{eq:lower_bounding_partial_J}, we just need to show
\begin{align}
\label{eq:lower_bounding_by_alpha}
    \sigma \frac{\partial r(\mu, \sigma)}{\partial \sigma} > - \alpha.
\end{align}

We first analyze the left hand side of \Cref{eq:lower_bounding_by_alpha}:
\begin{align*}
    \sigma \frac{\partial r(\mu, \sigma)}{\partial \sigma}
    &= \sigma \frac{\partial}{\partial \sigma} \int_a r(a)\mathcal{N}(a|\mu,\sigma)\,da \\
    &= \sigma \int_a r(a) \frac{\partial}{\partial \sigma} \mathcal{N}(a|\mu,\sigma)\,da \\
    &= \sigma \int_a r(a) \frac{\partial}{\partial \sigma} \left(\frac{1}{\sqrt{2\pi\sigma^2}} \exp \left(-\frac{1}{2\sigma^2}(a-\mu)^2 \right) \right) \,da \\
    &= \sigma \int_a r(a) \left( - \frac{1}{\sqrt{2\pi\sigma^4}} + \frac{(a-\mu)^2}{\sqrt{2\pi\sigma^8}} \right) \exp \left(-\frac{1}{2\sigma^2}(a-\mu)^2 \right) \,da \\
    &= \int_a r(a) \frac{1}{\sqrt{2\pi\sigma^2}} \exp \left(-\frac{(a-\mu)^2}{2\sigma^2} \right) \left( \frac{(a-\mu)^2}{\sigma^2} - 1 \right) \,da \\
    \overset{b=\frac{a-\mu}{\sigma}}&{=} \int_b r(\sigma b+\mu) \frac{1}{\sqrt{2\pi}} \exp \left(-\frac{b^2}{2} \right) \left( b^2 - 1 \right) \,db,
\end{align*}
which is bounded:
\begin{align*}
    \left| \sigma \frac{\partial r(\mu, \sigma)}{\partial \sigma} \right|
    &= \left| \int_b r(\sigma b+\mu) \frac{1}{\sqrt{2\pi}} \exp \left(-\frac{b^2}{2} \right) \left( b^2 - 1 \right) \,db \right| \\
    &\le \int_b \left| r(\sigma b+\mu) \right| \frac{1}{\sqrt{2\pi}} \exp \left(-\frac{b^2}{2} \right) \left| b^2 - 1 \right| \,db \\
    & \le \int_b r_{\max} \frac{1}{\sqrt{2\pi}} \exp \left(-\frac{b^2}{2} \right) \left| b^2 - 1 \right| \,db \\
    & \le r_{\max} \int_b \frac{1}{\sqrt{2\pi}} \exp \left(-\frac{b^2}{2} \right)  \left( b^2 + 1 \right) \,db \\
    & \le r_{\max} \bE_{b\sim \mathcal{N}(0,1)} \left[ b^2 +1 \right] \\
    & = {2} r_{\max}.
\end{align*}

Then for any $\alpha>{2} r_{\max}$, we have
$
    \sigma \frac{\partial r(\mu, \sigma)}{\partial \sigma} \ge - {2} r_{\max} > -\alpha.
$
\end{proof}

\textbf{\Cref{cor:alpha_relation}}
\textit{
The minimum $\alpha$ after which the mixture policy no longer has a stationary point 
is at least as large as that of the base policy, i.e.,
$
\alpha_{\min}^{\pi^m} \ge \alpha_{\min}^{\pi^b},
$
where $\alpha_{\min}^\pi=\inf\{\alpha \mid \nabla_{\pparams}J(\ppolicy)\neq \vect 0,\, \forall \pparams\}$ for policy $\ppolicy$.
}
\begin{proof}
    Similar to \Cref{thrm:entropy_regularized_optimality}, this is a direct consequence of \Cref{thrm:stationary_point_relation}.
\end{proof}

\subsection{Proofs for Results in Section \ref{sec:reparam_mixture_policies}}

Define the (discounted) occupancy measure under $\pmixturepolicy$ as  $d_\pmixturepolicy(s)\doteq\sum_{t=0}^\infty\bE_\pmixturepolicy[\gamma^t\bI(S_t=s)]$. We first prove the half-reparameterization policy gradient theorem, which is a special case of Theorem \ref{thrm:half_reparam_entropy_pg} with $\alpha=0$.

\textbf{Assumption 
\ref{assm:nice_set}.}
$\mathcal{S}$ and $\mathcal{A}$ are compact.

\textbf{Assumption 
\ref{assm:nice_function}.}
$p(s'|s,a)$, $d_0(s)$, $r(s,a)$ $f_\pparams(\epsilon;s,k)$, $f_\pparams^{-1}(a;s,k)$, $\pweightpolicy(k|s)$, $\pcomppolicy(a|s,k)$, $p(\epsilon)$, and their derivatives are continuous in variables $s$, $a$, $s'$, $\pparams$, and $\epsilon$.

\begin{theorem}[Half-Reparameterization Policy Gradient Theorem]
\label{thrm:half_reparam_pg_app}
Under Assumptions \ref{assm:nice_set} and \ref{assm:nice_function}, we have
\begin{align*}
    \nabla_\pparams J_0(\pmixturepolicy)
    = \bE_{s\sim d_\pmixturepolicy,k\sim\pweightpolicy(\cdot|s),\epsilon\sim p} \Big[ 
        & Q_\pmixturepolicy(s,f_\pparams(\epsilon;s,k)) \nabla_\pparams \log\pweightpolicy(k|s) \\
        &+ \nabla_\pparams f_\pparams(\epsilon;s,k) \nabla_a Q_\pmixturepolicy(s,a)|_{a=f_\pparams(\epsilon;s,k)} \Big].
\end{align*}
\end{theorem}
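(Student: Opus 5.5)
We follow the standard route of the (reparameterization) policy gradient theorem of \citet{lan2022model}: unroll the Bellman equation for $V_\pmixturepolicy$ and collect the per-state terms into the occupancy measure $d_\pmixturepolicy$. Since $\alpha=0$ here, we work with $V_\pmixturepolicy(s)=\bE_{a\sim\pmixturepolicy(\cdot|s)}[Q_\pmixturepolicy(s,a)]$ and $Q_\pmixturepolicy(s,a)=r(s,a)+\gamma\int p(s'|s,a)V_\pmixturepolicy(s')\,ds'$. The key first step is to rewrite the action expectation with the mixture structure made explicit and the component sample reparameterized:
\begin{align*}
V_\pmixturepolicy(s)=\textstyle\sum_{k=1}^N \pweightpolicy(k|s)\int p(\epsilon)\, Q_\pmixturepolicy\big(s,f_\pparams(\epsilon;s,k)\big)\,d\epsilon .
\end{align*}
This uses the change of variables $a=f_\pparams(\epsilon;s,k)$. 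Its validity, with $\pcomppolicy(a|s,k)$ the pushforward of $p$, is guaranteed by the invertibility and differentiability of $f_\pparams$ in Assumption \ref{assm:nice_function}.

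Next we differentiate $V_\pmixturepolicy(s)$ in $\pparams$. Exchanging the gradient and the integral is justified by compactness (Assumption \ref{assm:nice_set}) together with continuity of all functions and their derivatives (Assumption \ref{assm:nice_function}), via the Leibniz rule / dominated convergence. The product rule gives three contributions:
\begin{itemize}
\item the weight term $\sum_k \nabla_\pparams\pweightpolicy(k|s)\int p(\epsilon)Q_\pmixturepolicy(s,f_\pparams)\,d\epsilon$, which equals $\bE_{k,\epsilon}[Q_\pmixturepolicy(s,f_\pparams)\nabla_\pparams\log\pweightpolicy(k|s)]$;
\item the pathwise term $\bE_{k,\epsilon}[\nabla_\pparams f_\pparams(\epsilon;s,k)\nabla_a Q_\pmixturepolicy(s,a)|_{a=f_\pparams}]$, which treats $Q_\pmixturepolicy$ as fixed in $\pparams$ and differentiates only through its action argument;
\item the term from the dependence of $Q_\pmixturepolicy$ itself on $\pparams$, namely $\gamma\,\bE_{k,\epsilon}\big[\int p(s'|s,a)\nabla_\pparams V_\pmixturepolicy(s')\,ds'\big]$ with $a=f_\pparams(\epsilon;s,k)$. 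Since $\sum_k\pweightpolicy(k|s)\int p(\epsilon)\,g(f_\pparams(\epsilon;s,k))\,d\epsilon=\int\pmixturepolicy(a|s)g(a)\,da$, this equals $\gamma\int\!\int\pmixturepolicy(a|s)p(s'|s,a)\nabla_\pparams V_\pmixturepolicy(s')\,ds'\,da$.
\end{itemize}

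Writing $\phi(s)$ for the sum of the first two terms, we obtain the recursion $\nabla V(s)=\phi(s)+\gamma\,\bE_{a\sim\pmixturepolicy,s'\sim p}[\nabla V(s')]$. Unrolling it $T$ times gives $\nabla V(s_0)=\sum_{t<T}\gamma^t\bE[\phi(S_t)]+\gamma^T\bE[\nabla V(S_T)]$. The remainder vanishes as $T\to\infty$ because $\nabla_\pparams V_\pmixturepolicy$ is bounded on the compact state space. Averaging over $s_0\sim d_0$ and applying Fubini/Tonelli then converts the discounted sum into $\int d_\pmixturepolicy(s)\phi(s)\,ds$, which is the claimed expression.

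The main obstacle is regularity. We must show that $Q_\pmixturepolicy$ is differentiable in both $a$ and $\pparams$, with $\nabla_a Q_\pmixturepolicy$ and $\nabla_\pparams V_\pmixturepolicy$ continuous and bounded, so that the interchanges are legal and the tail term vanishes. We would try to establish this first by viewing $V_\pmixturepolicy$ as the fixed point of a $\gamma$-contraction. Differentiating the Bellman operator shows that the derivative also satisfies a contractive linear fixed-point equation, and the continuity assumptions on $r$, $p$, $f_\pparams$, and the policies over compact sets supply uniform bounds. This is exactly the argument of \citet{lan2022model}, which we adapt with the additional finite sum over $k$; the finite sum is harmless.
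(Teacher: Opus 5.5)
Your argument is correct, but it is organized differently from the paper's. You re-derive the policy gradient theorem from scratch. Differentiating $V_\pi(s)=\sum_k \pi^w(k|s)\int p(\epsilon)\,Q_\pi(s,f_\theta(\epsilon;s,k))\,d\epsilon$ gives a weight term, a pathwise term, and a term from the $\theta$-dependence of $Q_\pi$ itself. You then unroll that last term through the Bellman equation to generate $d_\pi$.

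The paper instead starts from the classical likelihood-ratio policy gradient theorem of \citet{sutton1999policy}, $\nabla_\theta J_0=\int d_\pi(s)\int Q_\pi(s,a)\,\nabla_\theta\pi(a|s)\,da\,ds$. The occupancy measure is therefore already in place, and only a per-state identity remains. The paper expands the mixture and applies the product rule to $\pi^w(k|s)\,\pi^c(a|s,k)$. It converts the weight part to $\nabla_\theta\log\pi^w$. For the component part it writes $\int Q\,\nabla_\theta\pi^c\,da=\nabla_\theta\int Q\,\pi^c\,da-\int\pi^c\,\nabla_\theta Q\,da$ and reparameterizes $a=f_\theta(\epsilon;s,k)$. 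Finally it notes that the total derivative of $Q_\pi(s,f_\theta(\epsilon;s,k))$, minus its explicit $\theta$-derivative at fixed $a$, is exactly $\nabla_\theta f_\theta\,\nabla_a Q_\pi$.

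The explicit-$\theta$ piece that the paper adds and subtracts is precisely the term you push into the recursion. So the two proofs are the same computation with different bookkeeping. The paper's version is shorter and modular; the identical per-state manipulation is reused for the entropy term in the regularized theorem. However, it inherits the needed regularity implicitly from the classical theorem and \citet{lan2022model}. Yours is self-contained, but it puts on you the burden of proving that $\nabla_\theta V_\pi$ exists, is continuous and is bounded, which your contraction argument is meant to supply.
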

\begin{proof}
We start with the policy gradient theorem \citep{sutton1999policy}, which shows
\begin{equation*}
    \nabla_\pparams J_0(\pmixturepolicy) = \int_{s,a} d_\pmixturepolicy(s)\pmixturepolicy(a|s)Q_\pmixturepolicy(s,a)\nabla_\pparams \log\pmixturepolicy(a|s) \,da\,ds.
\end{equation*}
Then
\begin{align*}
    &\nabla_\pparams J_0(\pmixturepolicy)
    = \int_{s,a} d_\pmixturepolicy(s)\pmixturepolicy(a|s)Q_\pmixturepolicy(s,a)\nabla_\pparams \log\pmixturepolicy(a|s) \,da\,ds \\
    =& \int_{s} d_\pmixturepolicy(s) \bigg( \int_a Q_\pmixturepolicy(s,a)\nabla_\pparams \pmixturepolicy(a|s) \,da \bigg) \,ds \numberthis \label{eq:intermediate_step_value} \\
    =& \int_{s} d_\pmixturepolicy(s) \left( \int_a Q_\pmixturepolicy(s,a) \nabla_\pparams \left( \sum_{k=1}^N \pweightpolicy(k|s) \pcomppolicy(a|s,k) \right) \,da \right) \,ds \\
    =& \int_{s} d_\pmixturepolicy(s) \sum_{k=1}^N \left( \int_a Q_\pmixturepolicy(s,a) \nabla_\pparams \big( \pweightpolicy(k|s) \pcomppolicy(a|s,k) \big) \,da \right) \,ds \\
    =& \int_{s} d_\pmixturepolicy(s) \sum_{k=1}^N \bigg( \int_a Q_\pmixturepolicy(s,a) \nabla_\pparams \pweightpolicy(k|s) \pcomppolicy(a|s,k) \,da \\
        &+ \int_a Q_\pmixturepolicy(s,a) \pweightpolicy(k|s) \nabla_\pparams \pcomppolicy(a|s,k) \,da \bigg) \,ds \\
    =& \int_{s} d_\pmixturepolicy(s) \sum_{k=1}^N \Bigg( \int_a Q_\pmixturepolicy(s,a) \nabla_\pparams \log\pweightpolicy(k|s) \pweightpolicy(k|s) \pcomppolicy(a|s,k) \,da \\
        &+ \pweightpolicy(k|s) \bigg( \int_a \nabla_\pparams \big( Q_\pmixturepolicy(s,a) \pcomppolicy(a|s,k) \big) \,da 
        - \int_a \pcomppolicy(a|s,k) \nabla_\pparams Q_\pmixturepolicy(s,a) \,da \bigg) \Bigg) \,ds \\
    =& \int_{s} d_\pmixturepolicy(s) \sum_{k=1}^N \pweightpolicy(k|s) \Bigg( \int_a Q_\pmixturepolicy(s,a) \nabla_\pparams \log\pweightpolicy(k|s) \pcomppolicy(a|s,k) \,da \\
        &+ \nabla_\pparams \int_a Q_\pmixturepolicy(s,a) \pcomppolicy(a|s,k) \,da 
        - \int_a \pcomppolicy(a|s,k) \nabla_\pparams Q_\pmixturepolicy(s,a) \,da \Bigg) \,ds \\
    \overset{a=f_\pparams(\epsilon;s,k)}&{=} \int_{s} d_\pmixturepolicy(s) \sum_{k=1}^N \pweightpolicy(k|s) \Bigg( \int_\epsilon p(\epsilon) Q_\pmixturepolicy(s,f_\pparams(\epsilon;s,k)) \nabla_\pparams \log\pweightpolicy(k|s) \,d\epsilon \\
        &+ \nabla_\pparams \int_\epsilon p(\epsilon) Q_\pmixturepolicy(s,f_\pparams(\epsilon;s,k)) \,d\epsilon 
        - \int_\epsilon p(\epsilon) \nabla_\pparams Q_\pmixturepolicy(s,a)|_{a=f_\pparams(\epsilon;s,k)} \,d\epsilon \Bigg) \,ds \\
    =& \int_{s} d_\pmixturepolicy(s) \sum_{k=1}^N \pweightpolicy(k|s) \Bigg( \int_\epsilon p(\epsilon) Q_\pmixturepolicy(s,f_\pparams(\epsilon;s,k)) \nabla_\pparams \log\pweightpolicy(k|s) \,d\epsilon \\
        &+ \int_\epsilon p(\epsilon) \nabla_\pparams f_\pparams(\epsilon;s,k) \nabla_a Q_\pmixturepolicy(s,a)|_{a=f_\pparams(\epsilon;s,k)} \,d\epsilon \Bigg) \,ds \\
    =& \bE_{s\sim d_\pmixturepolicy,k\sim\pweightpolicy(\cdot|s),\epsilon\sim p} \Big[ Q_\pmixturepolicy(s,f_\pparams(\epsilon;s,k)) \nabla_\pparams \log\pweightpolicy(k|s) \\
        & \qquad\qquad\qquad\quad\quad\quad\, + \nabla_\pparams f_\pparams(\epsilon;s,k) \nabla_a Q_\pmixturepolicy(s,a)|_{a=f_\pparams(\epsilon;s,k)} \Big],
\end{align*}
where the second last equality is due to
\begin{align*}
    & \nabla_\pparams Q_\pmixturepolicy(s,f_\pparams(\epsilon;s,k)) - \nabla_\pparams Q_\pmixturepolicy(s,a)|_{a=f_\pparams(\epsilon;s,k)} \\
    =& \nabla_\pparams f_\pparams(\epsilon;s,k) \nabla_a Q_\pmixturepolicy(s,a)|_{a=f_\pparams(\epsilon;s,k)} + \nabla_\pparams Q_\pmixturepolicy(s,a)|_{a=f_\pparams(\epsilon;s,k)} - \nabla_\pparams Q_\pmixturepolicy(s,a)|_{a=f_\pparams(\epsilon;s,k)} \\
    =& \nabla_\pparams f_\pparams(\epsilon;s,k) \nabla_a Q_\pmixturepolicy(s,a)|_{a=f_\pparams(\epsilon;s,k)}.
\end{align*}
\end{proof}
\begin{remark}
\label{thrm:proof_insight}
    The key contribution of this proof is the decoupling of the gradient of the weighting policy $\pweightpolicy$ and the gradient of the component policies $\pcomppolicy$. The former, $\nabla_\pparams \pweightpolicy$, is converted back to the likelihood-ratio gradient, while the latter, $\nabla_\pparams \pcomppolicy$, is handled in the same way as in the proof the reparameterization policy gradient theorem \citep{lan2022model}.
\end{remark}

Combining the insight from Remark \ref{thrm:proof_insight} with the proof of the entropy-regularized reparameterization policy gradient theorem in \citet{lan2022model}, we can obtain 
Theorem \ref{thrm:half_reparam_entropy_pg}.

\textbf{\Cref{thrm:half_reparam_entropy_pg}} (Entropy-Regularized Half-Reparameterization Policy Gradient Theorem)\textbf{.}
\textit{
Under Assumptions \ref{assm:nice_set} and \ref{assm:nice_function}, we have
\begin{align*}
    \nabla_\pparams J(\pmixturepolicy)
    \!=\! \bE_{s\sim d_\pmixturepolicy,k\sim\pweightpolicy(\cdot|s), \epsilon\sim p} \Big[& 
    \nabla_\pparams \log\pweightpolicy(k|s) \big(Q_\pmixturepolicy(s, f_\pparams(\epsilon;s,k)) \!-\! \alpha \log\pmixturepolicy(f_\pparams(\epsilon;s,k)|s)\big) \\
    & + {
    \nabla_\pparams f_\pparams(\epsilon;s,k) \nabla_a \big(Q_\pmixturepolicy(s,a) \!-\! \alpha \log\pmixturepolicy(a|s)\big)|_{a=f_\pparams(\epsilon;s,k)}
    }\Big].
\end{align*}
}
{
\begin{proof}
From (3) of \citet{ahmed2019understanding}, we have the entropy-regularized policy gradient for the regularized objective:
\begin{equation*}
    \nabla_\pparams J(\pmixturepolicy) = \int_{s,a} d_\pmixturepolicy(s)\pmixturepolicy(a|s) \left(
    Q_\pmixturepolicy(s,a)\nabla_\pparams \log\pmixturepolicy(a|s) + \alpha \nabla_\pparams \cH(\pmixturepolicy(\cdot|s))
    \right)\,da\,ds.
\end{equation*}
The first term, $Q_\pmixturepolicy(s,a)\nabla_\pparams \log\pmixturepolicy(a|s)$, can be directly handled by Theorem \ref{thrm:half_reparam_pg_app}. Here, we analyze the second term, $\alpha \nabla_\pparams \cH(\pmixturepolicy(\cdot|s))$. Notice that
\begin{align*}
    \nabla_\pparams \cH(\pmixturepolicy(\cdot|s))
    &= - \nabla_\pparams \int_a \pmixturepolicy(a|s) \log\pmixturepolicy(a|s) \,da \\
    &= - \int_a \left(
    \nabla_\pparams \pmixturepolicy(a|s) \log\pmixturepolicy(a|s) + \pmixturepolicy(a|s) \nabla_\pparams \log\pmixturepolicy(a|s)
    \right) \,da \\
    &= - \int_a \left(
    \nabla_\pparams \pmixturepolicy(a|s) \log\pmixturepolicy(a|s) + \nabla_\pparams \pmixturepolicy(a|s)
    \right) \,da \\
    \overset{\int_a \nabla_\pparams \pmixturepolicy(a|s)\,da=0}&{=} - \int_a \nabla_\pparams \pmixturepolicy(a|s) \log\pmixturepolicy(a|s) \,da,
\end{align*}
then we have
\begin{align*}
    & \int_{s,a} d_\pmixturepolicy(s)\pmixturepolicy(a|s) \alpha \nabla_\pparams \cH(\pmixturepolicy(\cdot|s)) \,da\,ds \\
    =& \alpha \int_{s} d_\pmixturepolicy(s) \nabla_\pparams \cH(\pmixturepolicy(\cdot|s)) \,ds \\
    =& - \alpha \int_{s} d_\pmixturepolicy(s) \int_a \nabla_\pparams \pmixturepolicy(a|s) \log\pmixturepolicy(a|s) \,dads. \numberthis \label{eq:intermediate_step_entropy}
\end{align*}
Since \Cref{eq:intermediate_step_entropy} resembles \Cref{eq:intermediate_step_value}, by following the same steps in the proof of Theorem \ref{thrm:half_reparam_pg_app}, we can obtain
\begin{align*}
    & \int_{s,a} d_\pmixturepolicy(s)\pmixturepolicy(a|s) \alpha \nabla_\pparams \cH(\pmixturepolicy(\cdot|s)) \,da\,ds \\
    =& \bE_{s\sim d_\pmixturepolicy,k\sim\pweightpolicy(\cdot|s),\epsilon\sim p} \Big[ -\alpha \log\pmixturepolicy(f_\pparams(\epsilon;s,k)|s) \nabla_\pparams \log\pweightpolicy(k|s) \\
    & \qquad\qquad\qquad\quad\quad\quad\, -\alpha \nabla_\pparams f_\pparams(\epsilon;s,k) \nabla_a \log\pmixturepolicy(a|s)|_{a=f_\pparams(\epsilon;s,k)} \Big],
\end{align*}
Combining the above gradient term with the gradient term from Theorem \ref{thrm:half_reparam_pg_app} concludes the proof.
\end{proof}
}

By using the same technique, we can obtain the half-reparameterization gradient of SAC's objective in \Cref{eq:objective_surrogate}.

\begin{assumption}
\label{assm:nice_function_sac}
$Q_\qparams(s,a)$ and its derivatives are continuous in variables $s$ and $a$.
\end{assumption}

\begin{proposition}
Under Assumptions \ref{assm:nice_set}, \ref{assm:nice_function}, and \ref{assm:nice_function_sac}, we have
\begin{align*}
    \nabla_\pparams \hat J(\pweightpolicy)
    = \bE_{s\sim\mathcal{B}, k\sim\pweightpolicy(\cdot|s), \epsilon\sim p} \Big[
    &\nabla_\pparams \log\pweightpolicy(k|s) \big(Q_\qparams(s, f_\pparams(\epsilon;s,k)) - \alpha \log\pmixturepolicy(f_\pparams(\epsilon;s,k)|s)\big) \\
    &+ \nabla_\pparams \big(Q_\qparams(s, f_\pparams(\epsilon;s,k)) - \alpha \log\pmixturepolicy(f_\pparams(\epsilon;s,k)|s)\big) \Big].
\end{align*}
\end{proposition}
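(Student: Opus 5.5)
Note first that the left-hand side is written as $\nabla_\pparams \hat J(\pweightpolicy)$, but the surrogate being differentiated is SAC's objective \Cref{eq:objective_surrogate} evaluated at the mixture policy $\pmixturepolicy$. The plan is to repeat the computation behind \Cref{thrm:half_reparam_pg_app,thrm:half_reparam_entropy_pg}. This time the integrand is the fixed critic $Q_\qparams$ minus $\alpha\log\pmixturepolicy$, and the outer expectation is over the buffer distribution $\mathcal{B}$ rather than $d_\pmixturepolicy$. Since $\mathcal{B}$ does not depend on $\pparams$, no policy gradient theorem is needed; the state expectation passes through the gradient directly. Define $g_\pparams(s,a)\doteq Q_\qparams(s,a)-\alpha\log\pmixturepolicy(a|s)$, so that $\hat J(\pmixturepolicy)=\bE_{s\sim\mathcal{B}}\int_a \pmixturepolicy(a|s)\,g_\pparams(s,a)\,da$. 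Assumptions \ref{assm:nice_set}, \ref{assm:nice_function}, and \ref{assm:nice_function_sac} (compactness plus continuity of all densities, the reparameterization maps, and their derivatives) justify exchanging $\nabla_\pparams$ with the integrals over $s$, $a$, and $\epsilon$ by dominated convergence.

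\textbf{Product rule and the entropy correction.} For fixed $s$, apply the product rule to get $\nabla_\pparams \int_a \pmixturepolicy g_\pparams\,da=\int_a \nabla_\pparams\pmixturepolicy(a|s)\, g_\pparams(s,a)\,da + \int_a \pmixturepolicy(a|s)\nabla_\pparams g_\pparams(s,a)\,da$. Here $\nabla_\pparams g_\pparams(s,a)=-\alpha\nabla_\pparams\pmixturepolicy(a|s)/\pmixturepolicy(a|s)$ at fixed $a$, so the second term equals $-\alpha\int_a\nabla_\pparams\pmixturepolicy\,da=-\alpha\nabla_\pparams 1=0$. This is the same step used in the proof of \Cref{thrm:half_reparam_entropy_pg}.

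\textbf{Half-reparameterization of the remaining term.} Expand $\nabla_\pparams\pmixturepolicy=\sum_k\big(\nabla_\pparams\pweightpolicy(k|s)\,\pcomppolicy(a|s,k)+\pweightpolicy(k|s)\nabla_\pparams\pcomppolicy(a|s,k)\big)$, treating the two pieces separately.
\begin{itemize}
\item For the weighting piece, write $\nabla\pweightpolicy=\pweightpolicy\nabla\log\pweightpolicy$ and substitute $a=f_\pparams(\epsilon;s,k)$. This yields $\bE_{k,\epsilon}\big[\nabla_\pparams\log\pweightpolicy(k|s)\,g_\pparams(s,f_\pparams(\epsilon;s,k))\big]$.
\item For the component piece, use the trick from \Cref{thrm:half_reparam_pg_app}. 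First write $\int_a g\,\nabla\pcomppolicy\,da=\nabla_\pparams\int_a g\,\pcomppolicy\,da-\int_a\pcomppolicy\,\nabla_\pparams g\,da$, reparameterize both integrals via $\epsilon$, and apply the chain rule. What remains is $\bE_\epsilon[\nabla_\pparams f_\pparams\,\nabla_a g_\pparams(s,a)|_{a=f}]$.
\end{itemize}

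\textbf{Recombining into a total derivative (main obstacle).} The target statement contains the \emph{total} derivative $\nabla_\pparams g_\pparams(s,f_\pparams(\epsilon;s,k))$, which equals $\nabla_\pparams f\,\nabla_a g|_{a=f}+(\nabla_\pparams g)|_{a=f}$. So I must show that the extra explicit-derivative term has zero mean, i.e.
\[
\bE_{k,\epsilon}\big[(\nabla_\pparams g_\pparams)(s,f_\pparams(\epsilon;s,k))\big]=\int_a\pmixturepolicy(a|s)\nabla_\pparams g_\pparams(s,a)\,da=0 .
\]
This is exactly the vanishing term from the product-rule step, so adding it back in is free and produces the stated formula. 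The delicate part is the bookkeeping: $g$ depends on $\pparams$ both explicitly, through $\log\pmixturepolicy$, and through its argument $a$, and the two dependencies must be kept separate throughout. Finally, I take $\bE_{s\sim\mathcal{B}}$ outside; this is valid because $\mathcal{B}$ is independent of $\pparams$.
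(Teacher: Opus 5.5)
Your argument is correct, but it takes a longer route than the paper's proof.

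\textbf{The paper's route.} The paper substitutes $a=f_\pparams(\epsilon;s,k)$ inside $\hat J$ before differentiating. The identity $\int_a \pcomppolicy(a|s,k)\,h(a)\,da=\bE_{\epsilon\sim p}\big[h(f_\pparams(\epsilon;s,k))\big]$ holds at every $\pparams$ for any integrable $h$, including your $h=g_\pparams(s,\cdot)$. Hence, identically in $\pparams$,
\[
\hat J(\pmixturepolicy)=\bE_{s\sim\mathcal{B}}\,\bE_{\epsilon\sim p}\Big[\textstyle\sum_k \pweightpolicy(k|s)\,g_\pparams\big(s,f_\pparams(\epsilon;s,k)\big)\Big].
\]
One product rule on $\pweightpolicy(k|s)\cdot g_\pparams(s,f_\pparams(\epsilon;s,k))$, followed by $\nabla_\pparams\pweightpolicy=\pweightpolicy\nabla_\pparams\log\pweightpolicy$, gives the statement directly. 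The total derivative of $g_\pparams(s,f_\pparams(\epsilon;s,k))$ appears automatically. On this route your ``main obstacle'' never arises. You do not need to separate explicit and argument dependence, drop $\int_a \pmixturepolicy\,\nabla_\pparams g_\pparams\,da$ at the product-rule step, and then add the same zero back at the end.

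\textbf{What your route buys.} Your route mirrors the proofs of \Cref{thrm:half_reparam_pg_app,thrm:half_reparam_entropy_pg}. There the detour is needed because the policy gradient theorem delivers things in action space; here it is not, since $\mathcal{B}$ is fixed. Your decomposition does show explicitly that the explicit-parameter (score) part of $\nabla_\pparams\log\pmixturepolicy$, evaluated at the sampled action, has zero mean. It follows that the total-derivative form in this proposition, which is what the HalfRP estimator uses, agrees in expectation with the pathwise-only form $\nabla_\pparams f_\pparams\,\nabla_a(\cdot)|_{a=f_\pparams(\epsilon;s,k)}$ in \Cref{thrm:half_reparam_entropy_pg}. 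The paper's direct computation leaves that connection implicit.
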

\begin{proof}
We first rewrite (\ref{eq:objective_surrogate}) with reparameterized component policies:
\begin{align*}
    \hat J(\pweightpolicy)
    &= \bE_{S_t\sim\mathcal{B},A_t\sim\pmixturepolicy} \left[ Q_\qparams(S_t, A_t) - \alpha \log\pmixturepolicy(A_t|S_t) \right] \\
    &= \int_s d_\mathcal{B}(s) \int_a \pmixturepolicy(a|s) \big(Q_\qparams(s,a) - \alpha \log\pmixturepolicy(a|s)\big) \,da\,ds \\
    &= \int_s d_\mathcal{B}(s) \int_a \sum_{k=1}^N \pweightpolicy(k|s) \pcomppolicy(a|s,k) \big(Q_\qparams(s,a) - \alpha \log\pmixturepolicy(a|s)\big) \,da\,ds \\
    \overset{a=f_\pparams(\epsilon;s,k)}&{=} \int_s d_\mathcal{B}(s) \int_\epsilon \sum_{k=1}^N \pweightpolicy(k|s) p(\epsilon) \big(Q_\qparams(s, f_\pparams(\epsilon;s,k)) - \alpha \log\pmixturepolicy(f_\pparams(\epsilon;s,k)|s)\big) \,d\epsilon\,ds \\
    &= \int_s d_\mathcal{B}(s) \int_\epsilon p(\epsilon) \sum_{k=1}^N \pweightpolicy(k|s) \big(Q_\qparams(s, f_\pparams(\epsilon;s,k)) - \alpha \log\pmixturepolicy(f_\pparams(\epsilon;s,k)|s)\big) \,d\epsilon\,ds.
\end{align*}
We can then derive its gradient:
\begin{align*}
    &\nabla_\pparams \hat J(\pmixturepolicy) \\
    =& \nabla_\pparams \int_s d_\mathcal{B}(s) \int_\epsilon p(\epsilon) \sum_{k=1}^N \pweightpolicy(k|s) \big(Q_\qparams(s, f_\pparams(\epsilon;s,k)) - \alpha \log\pmixturepolicy(f_\pparams(\epsilon;s,k)|s)\big) \,d\epsilon\,ds \\
    =& \int_s d_\mathcal{B}(s) \int_\epsilon p(\epsilon) \sum_{k=1}^N \nabla_\pparams \Big( \pweightpolicy(k|s) \big(Q_\qparams(s, f_\pparams(\epsilon;s,k)) - \alpha \log\pmixturepolicy(f_\pparams(\epsilon;s,k)|s)\big) \Big) \,d\epsilon\,ds \\
    =& \int_s d_\mathcal{B}(s) \int_\epsilon p(\epsilon) \sum_{k=1}^N \Big( \nabla_\pparams \pweightpolicy(k|s) \big(Q_\qparams(s, f_\pparams(\epsilon;s,k)) - \alpha \log\pmixturepolicy(f_\pparams(\epsilon;s,k)|s)\big) \\
    &\qquad + \pweightpolicy(k|s) \nabla_\pparams \big(Q_\qparams(s, f_\pparams(\epsilon;s,k)) - \alpha \log\pmixturepolicy(f_\pparams(\epsilon;s,k)|s)\big) \Big) \,d\epsilon\,ds \\
    =& \int_s d_\mathcal{B}(s) \int_\epsilon p(\epsilon) \sum_{k=1}^N \Big( \pweightpolicy(k|s) \nabla_\pparams \log\pweightpolicy(k|s) \big(Q_\qparams(s, f_\pparams(\epsilon;s,k)) - \alpha \log\pmixturepolicy(f_\pparams(\epsilon;s,k)|s)\big) \\
    &\qquad + \pweightpolicy(k|s) \nabla_\pparams \big(Q_\qparams(s, f_\pparams(\epsilon;s,k)) - \alpha \log\pmixturepolicy(f_\pparams(\epsilon;s,k)|s)\big) \Big) \,d\epsilon\,ds \\
    =& \int_s d_\mathcal{B}(s) \int_\epsilon p(\epsilon) \sum_{k=1}^N \pweightpolicy(k|s) \Big( \nabla_\pparams \log\pweightpolicy(k|s) \big(Q_\qparams(s, f_\pparams(\epsilon;s,k)) - \alpha \log\pmixturepolicy(f_\pparams(\epsilon;s,k)|s)\big) \\
    &\qquad + \nabla_\pparams \big(Q_\qparams(s, f_\pparams(\epsilon;s,k)) - \alpha \log\pmixturepolicy(f_\pparams(\epsilon;s,k)|s)\big) \Big) \,d\epsilon\,ds \\
    =& \bE_{s\sim\mathcal{B}, k\sim\pweightpolicy(\cdot|s), \epsilon\sim p} \Big[
    \nabla_\pparams \log\pweightpolicy(k|s) \big(Q_\qparams(s, f_\pparams(\epsilon;s,k)) - \alpha \log\pmixturepolicy(f_\pparams(\epsilon;s,k)|s)\big) \\
    &\qquad + \nabla_\pparams \big(Q_\qparams(s, f_\pparams(\epsilon;s,k)) - \alpha \log\pmixturepolicy(f_\pparams(\epsilon;s,k)|s)\big) \Big].
\end{align*}
\end{proof}

\subsection{Variance reduction properties of the MRP estimator}

In this section, we prove that the MRP estimator has a lower variance than the LR estimator under some smoothness conditions. This variance reduction property of the MRP estimator parallels that of the RP estimator \citep{gal2016uncertainty,xu2019variance}. Specifically, we show that the marginal variance of MRP can be expressed or controlled by the variance of the corresponding RP estimator for the individual components. Through this reduction, we can focus on analyzing the variance of the individual components, which is much easier to work with.

While the following results can be easily extend to the full reinforcement learning setting with regularization, we consider the bandit setting without regularization for the ease of presentation:
\begin{align}
    J(\ppolicy) = \mathbb{E}_{a\sim\ppolicy}[r(a)].
\end{align}
The entropy term can also be included by redefining $r(a)$ to be the sum of the reward and sample entropy.
We focus on analyzing the gradient of different estimators with respect to the {distribution parameters} and the weighting probabilities of Gaussian mixture policies. In this case, the mixture policy can be expressed as $\pmixturepolicy(a) = \sum_{k=1}^N \pi^\weight_{\pparams^\weight} (\compidx) \rawpcomppolicyatidx (a) = \sum_{k=1}^N w_\compidx \cN(a;\mu_\compidx, \sigma_\compidx^2)$ with $\pparams=\left[\smash{\pparams^{\component_1}}^\top, \cdots, \smash{\pparams^{\component_N}}^\top, \smash{\pparams^\weight}^\top\right]^\top = \left[ [\mu_1, \sigma_1]^\top, \cdots, [\mu_N, \sigma_N]^\top, [w_1, \cdots, w_N]^\top \right]^\top\in \bR^{3N}$, where $\mu_\compidx$, $\sigma_\compidx$, and $w_\compidx$ may depend on shared parameters implicitly. The corresponding estimators then are
\begin{align}
    \text{LR: }\quad &\hat \nabla_\pparams^\text{LR} J(\pmixturepolicy) = \nabla_\pparams \log \pmixturepolicy(A) r(A), \label{eq:mixture_lr_estimator} \\
    \text{MRP: }\quad &\hat \nabla_\pparams^\text{MRP} J(\pmixturepolicy) = \nabla_\pparams \sum_{\compidx=1}^N \pi^\weight_{\pparams^\weight}(\compidx) r\bigl(f_{\pparams^{\component_\compidx}}(\epsilon)\bigr), \label{eq:mixture_mrp_estimator}
\end{align}
where $A\sim \pmixturepolicy(\cdot)$, $\epsilon\sim \cN(0,1)$, $\pi^\weight_{\pparams^\weight} (\compidx)=w_\compidx$, and $f_{\pparams^{\component_\compidx}}(\epsilon)=\mu_\compidx+\epsilon\sigma_\compidx$.

Before we present the variance comparison between different estimators, we first analyze the relationship between the variance for the mixture policy and its individual components. Assuming only one of the component $\rawpcomppolicyatidx (a)=\cN(a;\mu_\compidx,\sigma_\compidx^2)$ is used, the corresponding estimators are
\begin{align}
    \text{LR: }\quad &\hat \nabla_{\pparams^{\component_\compidx}}^\text{LR} J(\rawpcomppolicyatidx ) = \nabla_{\pparams^{\component_\compidx}} \log \rawpcomppolicyatidx (A) r(A), \\
    \text{RP: }\quad &\hat \nabla_{\pparams^{\component_\compidx}}^\text{RP} J(\rawpcomppolicyatidx ) = \nabla_{\pparams^{\component_\compidx}} r\bigl(f_{\pparams^{\component_\compidx}}(\epsilon)\bigr).
\end{align}

Define $\mixturepolicy\doteq\pmixturepolicy$, $\comppolicyatidx\doteq \rawpcomppolicyatidx$, $\hat \partial_{\theta_i^{\component_\compidx}} \cdot \doteq \big[ \hat \nabla_{\pparams^{\component_\compidx}} \cdot \big]_i$, $\hat \partial_{\weight_\compidx} \cdot \doteq \big[ \hat \nabla_{\pparams^{\weight}} \cdot \big]_\compidx$ and $\rho^{\component_\compidx}_{\mixture}(A) =\frac{\comppolicyatidx(A)}{\mixturepolicy(A)}$, where $\theta_i^{\component_\compidx}$ and $\weight_\compidx$ are the $i$-th distribution parameter and the weight of the $k$-th component policy, respectively. For Gaussian component policies, $\theta_i^{\component_\compidx} \in \{ \mu_\compidx, \sigma_\compidx \}$. We have the following relationships.

\begin{lemma} \label{prop:mixture_component_variance_lr}
    The marginal variances of the LR estimator for mixture policies satisfy 
    \begin{align*}
        \bV_{\mixturepolicy(A)} \left( \hat \partial_{\theta_i^{\component_\compidx}}^\text{LR} J(\pmixturepolicy) \right) 
        &= w_k^2 \bV_{\mixturepolicy(A)} \left( \rho^{\component_\compidx}_{\mixture}(A) \hat \partial_{\theta_i^{\component_\compidx}}^\text{LR} J(\rawpcomppolicyatidx) \right)
        , \\
        \bV_{\mixturepolicy(A)} \left( \hat \partial_{\weight_\compidx}^\text{LR} J(\pmixturepolicy) \right) 
        &= \bV_{\mixturepolicy(A)} \bigl( \rho^{\component_\compidx}_{\mixture}(A) r(A) \bigr).
    \end{align*}
\end{lemma}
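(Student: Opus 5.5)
The plan is to show that the LR estimator for the mixture, evaluated coordinate-wise, equals pointwise (as a random variable in $A$) the stated multiple of an importance-weighted component LR estimator. Once this holds as an identity for every $A$, taking the variance under $\mixturepolicy(A)$ of both sides gives the claim at once, using $\bV(cX)=c^2\bV(X)$ for a deterministic constant $c$.

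For a component parameter $\theta_i^{\component_\compidx}\in\{\mu_\compidx,\sigma_\compidx\}$, only the $k$-th summand of $\mixturepolicy(A)=\sum_{j}w_j\pi^{\component_j}(A)$ depends on it. I would compute
\begin{align*}
\partial_{\theta_i^{\component_\compidx}}\log\mixturepolicy(A)
&=\frac{w_\compidx\,\partial_{\theta_i^{\component_\compidx}}\comppolicyatidx(A)}{\mixturepolicy(A)}
=w_\compidx\,\frac{\comppolicyatidx(A)}{\mixturepolicy(A)}\,\partial_{\theta_i^{\component_\compidx}}\log\comppolicyatidx(A)
=w_\compidx\,\rho^{\component_\compidx}_{\mixture}(A)\,\partial_{\theta_i^{\component_\compidx}}\log\comppolicyatidx(A).
\end{align*}
Multiplying by $r(A)$ gives $\hat\partial^{\text{LR}}_{\theta_i^{\component_\compidx}}J(\pmixturepolicy)=w_\compidx\rho^{\component_\compidx}_{\mixture}(A)\,\hat\partial^{\text{LR}}_{\theta_i^{\component_\compidx}}J(\rawpcomppolicyatidx)$. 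Here $\hat\partial^{\text{LR}}_{\theta_i^{\component_\compidx}}J(\rawpcomppolicyatidx)$ denotes the same functional form $\partial\log\comppolicyatidx(A)\,r(A)$, now evaluated at $A\sim\mixturepolicy$. Taking variance under $\mixturepolicy$, with $w_\compidx$ treated as a deterministic constant, gives the first identity.

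For the weight coordinate I treat $w_\compidx$ as a free coordinate of $\pparams^\weight$, following the paper's parameterization $\pparams^\weight=[w_1,\dots,w_N]$. Then $\partial_{w_\compidx}\log\mixturepolicy(A)=\comppolicyatidx(A)/\mixturepolicy(A)=\rho^{\component_\compidx}_{\mixture}(A)$, so $\hat\partial^{\text{LR}}_{w_\compidx}J(\pmixturepolicy)=\rho^{\component_\compidx}_{\mixture}(A)r(A)$, and taking variances gives the second identity.

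The only delicate point is this weight convention. If one imposed the simplex constraint or a softmax, extra terms would appear, for example a $-1$ from $\sum_j w_j=1$. I would therefore state explicitly that the derivatives are taken with respect to the unconstrained coordinates $w_\compidx$, as in the definition of $\pparams$, and that any shared-parameter dependence enters only through the chain rule afterwards. Beyond that, I would only note that the variances are assumed finite so that the identities are meaningful; no further estimates are needed.
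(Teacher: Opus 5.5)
Your proposal is correct and follows the paper's proof essentially step for step. The paper also derives the pointwise identities $\partial_{\mu_k,\sigma_k}\log(\text{mixture density}) = w_k\,\rho(A)\,\partial_{\mu_k,\sigma_k}\log(\text{component density})$ and $\partial_{w_k}\log(\text{mixture density}) = \rho(A)$, multiplies them by $r(A)$, and takes variances under the mixture; your remark that $w_k$ is treated as an unconstrained coordinate, rather than through the simplex or a softmax, states explicitly a convention the paper uses only implicitly.
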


\begin{lemma} \label{prop:mixture_component_variance_mrp}
    The marginal variances of the MRP estimator for mixture policies satisfy
    \begin{align*}
        \bV_{\cN(\epsilon;0,1)} \left( \hat \partial_{\theta_i^{\component_\compidx}}^\text{MRP} J(\pmixturepolicy) \right) 
        &= w_k^2 \bV_{\cN(\epsilon;0,1)} \left( \hat \partial_{\theta_i^{\component_\compidx}}^\text{RP} J(\rawpcomppolicyatidx) \right) 
        , \\
        \bV_{\cN(\epsilon;0,1)} \left( \hat \partial_{\weight_\compidx}^\text{MRP} J(\pmixturepolicy) \right) 
        &= \bV_{\comppolicyatidx(A)} \bigl( r(A) \bigr).
    \end{align*}
\end{lemma}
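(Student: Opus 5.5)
The plan is to compute each coordinate of the MRP estimator in closed form and read off its variance. Recall that MRP is $\nabla_\pparams g(\epsilon)$, where
\begin{align*}
    g(\epsilon)=\textstyle\sum_{j=1}^N w_j\, r\bigl(f_{\pparams^{\component_j}}(\epsilon)\bigr), \qquad \epsilon\sim\cN(0,1).
\end{align*}
The parameters $\theta_i^{\component_\compidx}\in\{\mu_\compidx,\sigma_\compidx\}$ enter only the $\compidx$-th summand, so differentiating kills every term with $j\neq\compidx$. This gives
\begin{align*}
    \hat\partial^\text{MRP}_{\theta_i^{\component_\compidx}} J(\pmixturepolicy) = w_\compidx\, \partial_{\theta_i^{\component_\compidx}} r\bigl(f_{\pparams^{\component_\compidx}}(\epsilon)\bigr) = w_\compidx\, \hat\partial^\text{RP}_{\theta_i^{\component_\compidx}} J(\rawpcomppolicyatidx),
\end{align*}
where the last equality is just the definition of the single-component RP estimator evaluated at the same $\epsilon$. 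Concretely, this is $w_\compidx r'(\mu_\compidx+\sigma_\compidx\epsilon)$ for $\mu_\compidx$ and $w_\compidx \epsilon\, r'(\mu_\compidx+\sigma_\compidx\epsilon)$ for $\sigma_\compidx$. Since $w_\compidx$ is a deterministic constant with respect to $\epsilon$, the scaling rule $\bV(cX)=c^2\bV(X)$ yields the first identity.

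For the weight coordinate, I treat $w_\compidx$ as the free coordinate, as in the setup where $\pparams^\weight=[w_1,\dots,w_N]$. Then
\begin{align*}
    \hat\partial^\text{MRP}_{\weight_\compidx} J(\pmixturepolicy) = r\bigl(f_{\pparams^{\component_\compidx}}(\epsilon)\bigr) = r(\mu_\compidx+\sigma_\compidx\epsilon).
\end{align*}
Because $\epsilon\sim\cN(0,1)$, the random variable $A=\mu_\compidx+\sigma_\compidx\epsilon$ has law $\comppolicyatidx=\cN(\mu_\compidx,\sigma_\compidx^2)$. The distribution of $r(f(\epsilon))$ under $\cN(\epsilon;0,1)$ therefore equals that of $r(A)$ under $\comppolicyatidx(A)$, and the variances coincide. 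This is a change of variables and gives the second identity.

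The only point needing care is the convention for the weight derivative. The simplex constraint, or a softmax parameterization, would introduce cross terms. The statement, consistent with \Cref{prop:mixture_component_variance_lr}, where $\partial_{w_\compidx}\log\mixturepolicy(A)=\comppolicyatidx(A)/\mixturepolicy(A)=\rho^{\component_\compidx}_\mixture(A)$, treats the $w_\compidx$ as unconstrained coordinates. I adopt that convention explicitly. I also note that finiteness of the variances is needed, and this is supplied by \Cref{assm:variance_reduction_condition_full_main}. Otherwise the argument is a direct computation, and I do not expect a real obstacle.
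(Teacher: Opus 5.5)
Your proposal is correct and follows the paper's own proof essentially line for line. The component parameters enter only the $k$-th summand, so the MRP coordinate equals $w_k$ times the single-component RP estimator at the same $\epsilon$. The weight coordinate is $r(\mu_k+\sigma_k\epsilon)$, which has the same law as $r(A)$ with $A$ drawn from the $k$-th component.

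Your finiteness caveat is harmless but unnecessary. Both identities are pathwise or distributional equalities, so they hold in the extended reals whether or not the variances are finite.
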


Similar to \citet{gal2016uncertainty}, we assume the reward function $r$ satisfies certain smoothness conditions (see Assumption \ref{assm:variance_reduction_condition_full}). 
\Cref{assm:variance_reduction_condition_full} gives the condition of the reward function under which the MRP estimator has lower variance. Heuristically, it requires the reward function to be smooth relative to the noise scale. For example, $\sin(x)$ satisfies the condition for $\cN(0,1)$ while $\sin(10x)$ does not (see Table 3.3 in \citeauthor{gal2016uncertainty} for a numerical experiment).

\begin{assumption}[Expanded form of \Cref{assm:variance_reduction_condition_full_main}] \label{assm:variance_reduction_condition_full}
    $r: \bR \to \bR$ is twice differentiable with first and second derivatives $r'$ and $r''$. 
    For all $\compidx\in\{1,\cdots,N\}$ and $g(a)\in\{ (a-\mu_\compidx)r(a), r'(a), \bigl((a-\mu_\compidx)^2-\sigma_\compidx^2\bigr) r(a), (a-\mu_\compidx)r'(a) \}$, $g$ has finite variance and its first derivative $g'$ is absolutely integrable under $\comppolicyatidx$: $\bV_{\comppolicyatidx(A)}\bigl(g(A)\bigr)< \infty$ and $\bE_{\comppolicyatidx(A)}\bigl[ \left| g'(A) \right| \bigr] < \infty$, with $\comppolicyatidx(a)=\cN(a;\mu_\compidx,\sigma_\compidx)$.
    Further, it holds that
    \begin{align*}
        \sum_{k=1}^N \left(
        \bE_{\comppolicyatidx(A)} \bigl[ (A-\mu_\compidx) r'(A) + r(A) \bigr]^2 
        - \sigma_\compidx^4 \bE_{\comppolicyatidx(A)} \left[ r''(A)^2 \right] 
        \right) &\ge 0, \\
        \sum_{k=1}^N \left(
        \bE_{\comppolicyatidx(A)} \left[ \bigl((A-\mu_\compidx)^2-\sigma_\compidx^2\bigr) r'(A) + 2 (A-\mu_\compidx) r(A) \right]^2 \right. & \\
        \left. - \sigma_\compidx^4 \bE_{\comppolicyatidx(A)} \left[ \bigl( (A-\mu_\compidx) r''(A) + r'(A) \bigr)^2 \right] 
        \right) &\ge 0.
    \end{align*}
\end{assumption}

\begin{assumption}[Restatement of \Cref{assm:when_importance_sampling_is_bad_main}] \label{assm:when_importance_sampling_is_bad}
    The sum of the variance of the importance-sampling LR estimator over all components is larger than that of the on-policy LR estimator:
    \begin{align*}
        \sum_{k=1}^N \left(
        \bV_{\mixturepolicy(A)} \left( \rho^{\component_\compidx}_{\mixture}(A) \hat \partial_{\theta_i^{\component_\compidx}}^\text{LR} J(\rawpcomppolicyatidx) \right) - 
        \bV_{\comppolicyatidx(A)} \left( \hat \partial_{\theta_i^{\component_\compidx}}^\text{LR} J(\rawpcomppolicyatidx) \right)
        \right) &\ge 0 \quad \text{for} \quad \theta_i^{\component_\compidx}\in\{\mu_\compidx,\sigma_\compidx\}, \\
        \sum_{k=1}^N \left(
        \bV_{\mixturepolicy(A)} \left( \rho^{\component_\compidx}_{\mixture}(A) r(A) \right) - 
        \bV_{\comppolicyatidx(A)} \bigl( r(A) \bigr)
        \right) &\ge 0.
    \end{align*}
\end{assumption}

\begin{proposition}[Restatement of \Cref{prop:variance_reduction_main}] \label{prop:variance_reduction}
    Under Assumptions \ref{assm:variance_reduction_condition_full} and \ref{assm:when_importance_sampling_is_bad}, the trace of the covariance matrix of the MRP estimator in \Cref{eq:mixture_mrp_estimator} is smaller than that of the LR estimator in \Cref{eq:mixture_lr_estimator}:
    \begin{align*}
        \Tr\left( \bC_{\cN(\epsilon;0,1)} \left( \hat \nabla_{\pparams}^\text{MRP} J(\pmixturepolicy) \right) \right) \le \Tr\left( \bC_{\mixturepolicy(A)} \left( \hat \nabla_{\pparams}^\text{LR} J(\pmixturepolicy) \right) \right).
    \end{align*}
\end{proposition}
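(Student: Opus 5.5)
The plan is to reduce the trace comparison to a sum of marginal-variance comparisons, one block for each component and one for each weight. The trace of a covariance matrix is the sum of its diagonal entries. So both sides split into contributions from the coordinates $\mu_\compidx$, $\sigma_\compidx$ and $\weight_\compidx$, $\compidx=1,\dots,N$, and it suffices to prove three inequalities, each summed over $\compidx$: one for the $\mu$ coordinates, one for the $\sigma$ coordinates, and one for the weight coordinates. \Cref{prop:mixture_component_variance_lr,prop:mixture_component_variance_mrp} rewrite every diagonal entry in terms of single-component quantities. On the MRP side we get $w_\compidx^2 \bV_{\cN(\epsilon;0,1)}(\hat\partial^\text{RP}_{\theta_i^{\component_\compidx}} J(\rawpcomppolicyatidx))$ and $\bV_{\comppolicyatidx(A)}(r(A))$. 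On the LR side we get $w_\compidx^2 \bV_{\mixturepolicy(A)}(\rho^{\component_\compidx}_{\mixture}\hat\partial^\text{LR}_{\theta_i^{\component_\compidx}} J(\rawpcomppolicyatidx))$ and $\bV_{\mixturepolicy(A)}(\rho^{\component_\compidx}_{\mixture} r(A))$.

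For the weight coordinates, the claim is exactly \Cref{eq:variance_assm_reward} in \Cref{assm:when_importance_sampling_is_bad}, so nothing more is needed there. For the distribution parameters, I would chain two inequalities:
\[
\sum_\compidx w_\compidx^2 \bV(\text{RP}_\compidx) \le \sum_\compidx w_\compidx^2 \bV_{\comppolicyatidx}(\text{LR}_\compidx) \le \sum_\compidx w_\compidx^2 \bV_{\mixturepolicy}(\rho\,\text{LR}_\compidx).
\]
The second step is \Cref{eq:variance_assm_gradient}. There is a subtlety: the assumption is stated without the weights $w_\compidx^2$. I would handle this either by reading the assumption as weighted or by noting that the argument is cleanest when $w_\compidx$ is factored in. This is a gap I would have to address, and I would most likely state the weighted version as the intended reading.

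The first step is a Gal-style comparison between the RP and LR estimators of a single Gaussian. For $\mu$, the RP estimator is $r'(\mu+\sigma\epsilon)$ and the LR estimator is $\frac{(A-\mu)}{\sigma^2} r(A)$. I would bound $\bV(\text{RP})$ from above with the Gaussian Poincaré inequality, $\bV(g(A)) \le \sigma^2\bE[g'(A)^2]$, applied to $g=r'$ with $\psi_1'=r''$. I would bound $\bV(\text{LR})$ from below using Cauchy–Schwarz with the test function $A-\mu$, which gives $\bV(\phi_1(A)) \ge \mathrm{Cov}(\phi_1,A-\mu)^2/\sigma^2 = \sigma^2\bE[\phi_1'(A)]^2$ by Stein's lemma. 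After the $1/\sigma^4$ scaling of LR, these bounds give $\bV(\text{RP}_\compidx)\le \sigma_\compidx^2\bE[\psi_1'^2]$ and $\bV(\text{LR}_\compidx)\ge \bE[\phi_1']^2/\sigma_\compidx^2$. Their difference is $\sigma_\compidx^{-2}$ times the summand of \Cref{assm:variance_reduction_condition_full}.

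For $\sigma$, the RP estimator is $\epsilon r'(\mu+\sigma\epsilon)=(A-\mu)r'(A)/\sigma$ and the LR estimator is $((A-\mu)^2-\sigma^2)r(A)/\sigma^3$. The same two tools apply. Poincaré with $\psi_2'$ gives the upper bound. For the lower bound I would use Cauchy–Schwarz against the Hermite polynomial $(A-\mu)^2-\sigma^2$, whose variance is $2\sigma^4$, or against $A-\mu$. Either one, combined with Stein's lemma, yields a bound of the form $c\,\bE[\phi_2']^2$. The finiteness and integrability conditions in the assumption are exactly what Stein's lemma and Poincaré need.

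The main obstacle is matching constants. The assumption weights each term by $\sigma_\compidx^4$ uniformly across components and only controls the sum over $\compidx$. My per-component bounds carry factors like $\sigma_\compidx^{-2}$ (and, for $\sigma$, possibly a factor of $1/2$ from the Hermite normalisation), so the sum condition does not directly give a per-component inequality. I would first try choosing the Cauchy–Schwarz test function so that the constants come out exactly as $\bE[\phi_i']^2-\sigma^4\bE[\psi_i'^2]$. If the component scales still differ, I would accept the implicit normalisation, treating equal $\sigma_\compidx$ or the weighted form as the intended reading, and flag it.
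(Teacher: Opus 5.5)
Your plan is the paper's proof. It uses the same reduction of the trace to coordinate-wise sums via Lemmas~\ref{prop:mixture_component_variance_lr} and~\ref{prop:mixture_component_variance_mrp}, and settles the weight coordinates directly by inequality~(\ref{eq:variance_assm_reward}). For $\mu_k,\sigma_k$ it uses the chain RP $\le$ on-policy LR $\le$ importance-weighted LR. The middle link there rests on exactly the two Gaussian bounds the paper quotes from Cacoullos, $\sigma^2(\mathbb{E}[g'])^2\le\mathrm{Var}(g)\le\sigma^2\,\mathbb{E}[g'^2]$; these are your Stein/Cauchy--Schwarz and Poincar\'e steps.

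One correction: for the $\sigma_k$ lower bound, use the test function $A-\mu_k$, as you do for $\mu_k$ and as the paper does. Against the Hermite polynomial you only get $\mathrm{Var}(\phi_2(A))\ge\tfrac12(\mathbb{E}[(A-\mu_k)\phi_2'(A)])^2$. That is not of the form $c\,(\mathbb{E}[\phi_2'])^2$. The choice $A-\mu_k$ gives $\sigma_k^2(\mathbb{E}[\phi_2'])^2$, with no stray factor $1/2$.

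The normalisation gap you flag is genuine, and no choice of test function removes it. Per component you get $\mathrm{Var}(\mathrm{LR}_k)-\mathrm{Var}(\mathrm{RP}_k)\ge\sigma_k^{-2}(\cdot)$ for $\mu_k$ and $\ge\sigma_k^{-4}(\cdot)$ for $\sigma_k$, where $(\cdot)$ is the $k$-th summand of Assumption~\ref{assm:variance_reduction_condition_full}. These factors are forced by scaling: rescaling the action axis changes the variance differences but leaves the $\mu$-summand invariant. On top of this, the mixture coordinates carry an extra $w_k^2$. When the $w_k$ or $\sigma_k$ differ, nonnegativity of the unweighted sums in Assumptions~\ref{assm:variance_reduction_condition_full} and~\ref{assm:when_importance_sampling_is_bad} does not imply nonnegativity of these weighted sums.

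The paper's proof does not close this gap either. Right after invoking the two lemmas it replaces $\sum_k w_k^2(\cdot)$ by $\sum_k(\cdot)$. In both Cacoullos chains it then divides the summed smoothness condition termwise by $\sigma_k^2$ (resp.\ $\sigma_k^4$). So both your argument and the paper's establish the proposition only under reweighted conditions:
\begin{itemize}
\item inequality~(\ref{eq:variance_assm_gradient}) weighted by $w_k^2$;
\item the two smoothness sums weighted by $w_k^2\sigma_k^{-2}$ and $w_k^2\sigma_k^{-4}$ respectively.
\end{itemize}
Per-component versions, or equal $w_k$ and $\sigma_k$, also suffice. Stating the result that way, as you propose, is the right repair.
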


\subsubsection{Proofs}

\textit{Proof of \Cref{prop:mixture_component_variance_lr}.}

For any fixed component index $\compidx$ and any of its distribution parameters
$\theta^{\component_k}_i\in\{\mu_\compidx,\sigma_\compidx\}$, we begin with inspecting the corresponding partial derivative
\[
\hat\partial_{\theta^{\component_k}_i}^{\text{LR}}J(\pmixturepolicy)
   \;=\;r(A)\,{\partial_{\theta^{\component_k}_i}}\!\log\pmixturepolicy(A).
\]
Because only the $k$-th component depends on~$\theta^{\component_\compidx}_i$,
\begin{align*}
\partial_{\theta^{\component_\compidx}_i}\pmixturepolicy(A)
   &=w_\compidx\,\partial_{\theta^{\component_\compidx}_i}\rawpcomppolicyatidx(A), \\
\partial_{\theta^{\component_\compidx}_i}\!\log\pmixturepolicy(A)
   &=w_\compidx\,\frac{\rawpcomppolicyatidx(A)}{\pmixturepolicy(A)}
          \,\partial_{\theta^{\component_\compidx}_i}\!\log\rawpcomppolicyatidx(A) \\
   &=w_\compidx\,\rho^{\component_\compidx}_{\mixture}(A)\,
          \partial_{\theta^{\component_\compidx}_i}\!\log\rawpcomppolicyatidx(A).
\end{align*}
Hence
\[
\hat\partial_{\theta^{\component_\compidx}_i}^{\text{LR}}J(\pmixturepolicy)
   =w_\compidx\,\rho^{\component_\compidx}_{\mixture}(A)\,
        \hat\partial_{\theta^{\component_\compidx}_i}^{\text{LR}}J(\rawpcomppolicyatidx).
\]
Taking variances under $A\sim\pmixturepolicy$ gives
\[
\bV_{\mixturepolicy(A)}
    \!\left(\hat\partial_{\theta^{\component_\compidx}_i}^{\text{LR}}J(\pmixturepolicy)\right)
   =w_\compidx^2\,\bV_{\mixturepolicy(A)}
           \!\left(\rho^{\component_\compidx}_{\mixture}(A)\,
                  \hat\partial_{\theta^{\component_\compidx}_i}^{\text{LR}}J(\rawpcomppolicyatidx)
             \right),
\]
which is exactly the first equality in the proposition.

For the mixture weight parameter $\weight_\compidx$, we note that
\[
\partial_{\weight_\compidx}\!\log\pmixturepolicy(A)
      =\frac{1}{\pmixturepolicy(A)}\,{\partial_{\weight_\compidx}}
        \Bigl(\sum_{\ell=1}^N\weight_\ell
               \rawpcomppolicyat{\ell}(A)\Bigr)
      =\frac{\rawpcomppolicyatidx(A)}{\pmixturepolicy(A)}
      =\rho^{\component_\compidx}_{\mixture}(A),
\]
so that
\(
\hat\partial_{\weight_\compidx}^{\text{LR}}J(\pmixturepolicy)
   =\rho^{\component_\compidx}_{\mixture}(A)\,r(A).
\)
Taking the variance under $A\sim\pmixturepolicy$ immediately yields the
second equality:
\[
\bV_{\mixturepolicy(A)}
   \!\left(\hat\partial_{\weight_\compidx}^{\text{LR}}J(\pmixturepolicy)\right)
   =\bV_{\mixturepolicy(A)}
      \!\bigl(\rho^{\component_\compidx}_{\mixture}(A)\,r(A)\bigr).
\]
\hfill\(\square\)

\textit{Proof of \Cref{prop:mixture_component_variance_mrp}.} 

For an MRP sample we first draw
\(\epsilon\sim\cN(0,1)\) \emph{once} and deterministically
construct the \(N\) actions 
\(A_k=f_{\pparams^{\component_k}}(\epsilon)=\mu_k+\epsilon\sigma_k\).
Hence
\[
\hat\partial_{\theta^{\component_k}_i}^{\text{MRP}}J(\pmixturepolicy)
     =\partial_{\theta^{\component_k}_i}
        \Bigl(w_k\,r\!\bigl(f_{\pparams^{\component_k}}(\epsilon)\bigr)\Bigr)
     =w_k\,\hat\partial_{\theta^{\component_k}_i}^{\text{RP}}J(\rawpcomppolicyatidx),
\]
so that
\[
    \bV_{\cN(\epsilon;0,1)} \left( \hat \partial_{\theta_i^{\component_\compidx}}^\text{MRP} J(\pmixturepolicy) \right) 
        = w_k^2 \bV_{\cN(\epsilon;0,1)} \left( \hat \partial_{\theta_i^{\component_\compidx}}^\text{RP} J(\rawpcomppolicyatidx) \right).
\]
For the weight parameter,
\(
\hat\partial_{\theta_k^{\weight}}^{\text{MRP}}J(\pmixturepolicy)
      =r\!\bigl(f_{\pparams^{\component_k}}(\epsilon)\bigr).
\)
Because \(A=f_{\pparams^{\component_k}}(\epsilon)\) with
\(A\sim\rawpcomppolicyatidx\), the stated identity of variances follows
immediately.
\hfill\(\square\)

\textit{Proof of \Cref{prop:variance_reduction}.}

To prove the inequality in the proposition, it is sufficient to show, for $\theta_i^{\compidx}\in\{\mu_\compidx,\sigma_\compidx,w_\compidx\}$, 
\begin{align}
    \sum_{k=1}^N \left(
    \bV_{\mixturepolicy(A)} \left( \hat \partial_{\theta_i^{\compidx}}^\text{LR} J(\pmixturepolicy) \right) -
    \bV_{\cN(\epsilon;0,1)} \left( \hat \partial_{\theta_i^{\compidx}}^\text{MRP} J(\pmixturepolicy) \right) 
    \right) \ge 0. \label{eq:key_inequality_to_show}
\end{align}

For $\theta_i^{\compidx} = w_\compidx$, this is immediate after applying Propositions \ref{prop:mixture_component_variance_lr} and \ref{prop:mixture_component_variance_mrp} under \Cref{assm:when_importance_sampling_is_bad}.

For $\theta_i^{\compidx} \in \{\mu_\compidx, \sigma_\compidx\}$, by Propositions \ref{prop:mixture_component_variance_lr} and \ref{prop:mixture_component_variance_mrp}, \Cref{eq:key_inequality_to_show} holds if the following is satisfied
\begin{align*}
    \sum_{k=1}^N \left(
        \bV_{\mixturepolicy(A)} \left( \rho^{\component_\compidx}_{\mixture}(A) \hat \partial_{\theta_i^{\component_\compidx}}^\text{LR} J(\rawpcomppolicyatidx) \right) - 
        \bV_{\cN(\epsilon;0,1)} \left( \hat \partial_{\theta_i^{\component_\compidx}}^\text{RP} J(\rawpcomppolicyatidx) \right)
    \right) \ge 0.
\end{align*}
Under \Cref{assm:variance_reduction_condition_full}, it is sufficient to show that the sum of the variance of the LR estimator over all component policies is larger than that of the RP estimator:
\begin{align}
    \sum_{k=1}^N \left(
        \bV_{\comppolicyatidx(A)} \left( \hat \partial_{\theta_i^{\component_\compidx}}^\text{LR} J(\rawpcomppolicyatidx) \right) 
        - 
        \bV_{\cN(\epsilon;0,1)} \left( \hat \partial_{\theta_i^{\component_\compidx}}^\text{RP} J(\rawpcomppolicyatidx) \right)
    \right) \ge 0. \label{eq:key_inequality_to_show_simplified}
\end{align}
Next, we show the above inequality holds for $\theta_i^{\component_\compidx}=\mu_\compidx$ under our assumptions, following by the result for $\theta_i^{\component_\compidx}=\sigma_\compidx$.

Since $\rawpcomppolicyatidx$ is Gaussian, we have
\begin{align*}
    \hat \partial_{\mu_\compidx}^\text{LR} J(\rawpcomppolicyatidx) = r(A) \frac{A-\mu_\compidx}{\sigma_k^2},
    \quad
    \hat \partial_{\mu_\compidx}^\text{RP} J(\rawpcomppolicyatidx) = r'( \mu_k+\epsilon\sigma_k ).
\end{align*}
Following \citet{gal2016uncertainty}, we use Proposition $3.2$ in \citet{cacoullos1982upper}, which states if a real-valued function $g(a)$ has finite variance and its first derivative $g'(a)$ is absolutely integrable under $\cN(\mu,\sigma)$, then
$$
    \sigma^2 \bE_{\cN(A;\mu,\sigma)} \left[ g'(A) \right]^2 
    \le \bV_{\cN(A;\mu,\sigma)} \left[ g(A) \right]
    \le \sigma^2 \bE_{\cN(A;\mu,\sigma)} \left[ g'(A)^2 \right].
$$
Invoking the above proposition with $g(a)=(a-\mu_k) r(a)$ and $g(a)=r'(a)$, respectively, we have
\begin{align*}
    \sigma_k^2 \bE_{\comppolicyatidx(A)} \bigl[ (A-\mu_k) r'(A) + r(A) \bigr]^2 
    &\le \bV_{\comppolicyatidx(A)} \bigl( (A-\mu_\compidx) r(A) \bigr), \\
    \bV_{\comppolicyatidx(A)} \bigl( r'(A) \bigr)
    &\le \sigma_k^2 \bE_{\comppolicyatidx(A)} \bigl[ r''(A)^2 \bigr].
\end{align*}
Under \Cref{assm:variance_reduction_condition_full}, we then have
\begin{align*}
    \sum_{k=1}^N \bV_{\cN(\epsilon;0,1)} \left( \hat \partial_{\mu_\compidx}^\text{RP} J(\rawpcomppolicyatidx) \right)
    &= \sum_{k=1}^N \bV_{\cN(\epsilon;0,1)} \left( r'( \mu_k+\epsilon\sigma_k ) \right) \\
    &= \sum_{k=1}^N \bV_{\comppolicyatidx} \left( r'(A) \right) \\
    &\le \sum_{k=1}^N \sigma_k^2 \bE_{\comppolicyatidx(A)} \left[ r''(A)^2 \right] \\
    &\le \sum_{k=1}^N \frac{1}{\sigma_k^2} \bE_{\comppolicyatidx(A)} \bigl[ (A-\mu_k) r'(A) + r(A) \bigr]^2 \\
    &\le \sum_{k=1}^N \frac{1}{\sigma_k^4} \bV_{\comppolicyatidx(A)} \bigl( (A-\mu_\compidx) r(A) \bigr) \\
    &= \sum_{k=1}^N \bV_{\comppolicyatidx(A)} \left( \hat \partial_{\mu_\compidx}^\text{LR} J(\rawpcomppolicyatidx) \right),
\end{align*}
which proves \Cref{eq:key_inequality_to_show_simplified} for $\theta_i^\compidx=\mu_\compidx$.

Similarly, for $\theta_i^\compidx=\sigma_\compidx$, we have
\begin{align*}
    \hat \partial_{\sigma_\compidx}^\text{LR} J(\rawpcomppolicyatidx) = r(A) \frac{(A-\mu_\compidx)^2-\sigma_k^2}{\sigma_k^3},
    \quad
    \hat \partial_{\sigma_\compidx}^\text{RP} J(\rawpcomppolicyatidx) = \epsilon r'(\mu_k + \epsilon \sigma_k).
\end{align*}
Invoking Proposition $3.2$ in \citet{cacoullos1982upper} with $g(a)=\bigl((a-\mu_\compidx)^2-\sigma_\compidx^2\bigr) r(a)$ and $g(a)=(a-\mu_\compidx)r'(a)$, respectively, we have
\begin{align*}
    \sigma_k^2 \bE_{\comppolicyatidx(A)} \left[ \bigl((A-\mu_\compidx)^2-\sigma_\compidx^2\bigr) r'(A) + 2 (A-\mu_\compidx) r(A) \right]^2
    &\le \bV_{\comppolicyatidx(A)} \left( \bigl((A-\mu_\compidx)^2-\sigma_\compidx^2\bigr) r(A) \right), \\
    \bV_{\comppolicyatidx(A)} \bigl( (A-\mu_\compidx)r'(A) \bigr)
    &\le \sigma_k^2 \bE_{\comppolicyatidx(A)} \left[ \bigl( (A-\mu_\compidx) r''(A) + r'(A) \bigr)^2 \right].
\end{align*}
Under \Cref{assm:variance_reduction_condition_full}, we have
\begin{align*}
    \sum_{k=1}^N \bV_{\cN(\epsilon;0,1)} \left( \hat \partial_{\sigma_\compidx}^\text{RP} J(\rawpcomppolicyatidx) \right)
    &= \sum_{k=1}^N \bV_{\cN(\epsilon;0,1)} \left( \epsilon r'(\mu_k + \epsilon \sigma_k) \right) \\
    &= \sum_{k=1}^N \bV_{\comppolicyatidx} \left( r'(A)\frac{A-\mu_k}{\sigma_k} \right) \\
    &= \sum_{k=1}^N \frac{1}{\sigma_k^2} \bV_{\comppolicyatidx} \bigl( (A-\mu_\compidx)r'(A) \bigr) \\
    &\le \sum_{k=1}^N \bE_{\comppolicyatidx(A)} \left[ \bigl( (A-\mu_\compidx) r''(A) + r'(A) \bigr)^2 \right] \\
    &\le \sum_{k=1}^N \frac{1}{\sigma_k^4} \bE_{\comppolicyatidx(A)} \left[ \bigl((A-\mu_\compidx)^2-\sigma_\compidx^2\bigr) r'(A) + 2 (A-\mu_\compidx) r(A) \right]^2 \\
    &\le \sum_{k=1}^N \frac{1}{\sigma_k^6} \bV_{\comppolicyatidx(A)} \left( \bigl((A-\mu_\compidx)^2-\sigma_\compidx^2\bigr) r(A) \right) \\
    &= \sum_{k=1}^N \bV_{\comppolicyatidx(A)} \left( \hat \partial_{\sigma_\compidx}^\text{LR} J(\rawpcomppolicyatidx) \right),
\end{align*}
which proves \Cref{eq:key_inequality_to_show_simplified} for $\theta_i^\compidx=\sigma_\compidx$.
\hfill\(\square\)

\clearpage

\section{Stationary Point Study Details}
\label{sec:app_numerical}

In this section, we provide more details on the stationary point study presented in Section \ref{sec:stationary_point_theory}. The reward function of the bimodal bandit is the normalized summation of two Gaussians' density functions whose standard deviations are both $0.5$ and whose means are $-1$ and $1$, respectively. We use the default optimization algorithm for variable with bounds in SciPy \citep{virtanen2020scipy} to optimize the entropy regularized objective $J(\ppolicy)=\bE_{a\sim\ppolicy}[r(a) -\alpha\log\ppolicy(a)]$, where $r(a)$ is the value of the action depicted in Figure \ref{fig:analytical_bimodal_bandit}. We then sort the obtained stationary points based on their regularized values $J(\ppolicy)$ and use the ones that have the highest values as the parameters of optimal policies for Figure \ref{fig:analytical_bimodal_bandit}. 

For each policy class, we run the default optimization algorithm for $100$ trials, each with a set of randomly sampled initial policy parameters. Specifically, the initial means, log standard deviations, and mixing weights are randomly sampled from $[-2, 2]$, $[-3,0]$, and $[0,1]$, respectively. To avoid numerical issues in numerical integral when the standard deviation gets too large, we impose an upper bound of $3$ for the log standard deviation. In addition, the mixing weights are defined and bounded within $[0, 1]$. We initially run this optimization procedure for seven different entropy scales: $\alpha\in\{0.05, 0.1, 0.2, 0.3, 0.4, 0.5, 0.6\}$. With difference shown when $\alpha$ is between $0.2$ and $0.5$ (see Figure \ref{fig:analytical_bimodal_bandit} (Middle)), we additionally run another eight entropy scales: $\alpha\in\{0.22, 0.24, 0.26, 0.28, 0.325, 0.35, 0.375, 0.45\}$ to obtain more insights when $\alpha$ within this range. Note that, we did not obtain any convergent results for the Gaussian policy when $\alpha>=0.325$ and for the Gaussian mixture (GM) policy when $\alpha>0.5$.

\begin{figure}[htb]
    \centering
    \includegraphics[width=0.8\linewidth]{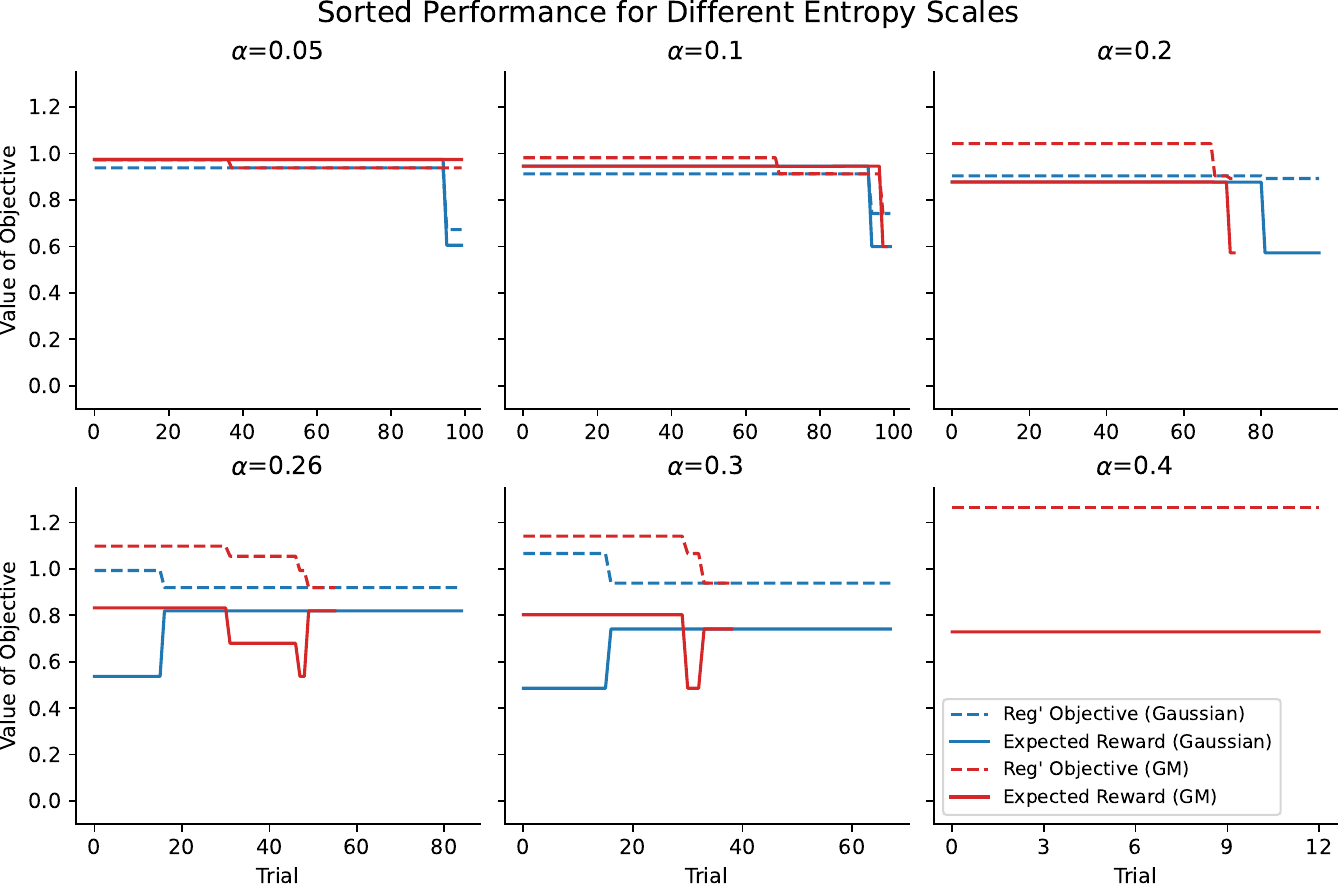}
    \caption{Expected reward and objective value of the regularized objective's stationary points found in $100$ trials for the Gaussian and the Gaussian mixture policies. Note that the stationary points are sorted by their regularized objective value for clarity (dashed line). When a line ends before Trial $100$, it means that the rest of the trials  either diverged or encountered a numerical issue. The difference between the dash line and the solid line represents the differential entropy scaled by $\alpha$.}
    \label{fig:analytical_bimodal_bandit_sorted_perf}
\end{figure}

Figure \ref{fig:analytical_bimodal_bandit_sorted_perf} shows the expected reward and objective value of all the stationary points found for representative $\alpha$. Apart from the observation discussed in Section \ref{sec:stationary_point_theory}, we can see that both the Gaussian and the GM mixture often have different types of stationary points. The optimal Gaussian policy concentrates on one of the reward modes with a high expected reward with small $\alpha$, but it then shifts to the middle of two modes in $\alpha=0.26$ and $\alpha=0.3$ with a lower expected reward but a much higher entropy. The optimal GM policy, on the other hand, always has two modes covering two reward modes: it obtains a high reward while maintaining a higher entropy.

\section{Experiment Details}
\label{sec:app_experimet_details}

\subsection{Experimental Details}
\label{sec:app_sub_experimental_details_app}

In this section, we supply the omitted experimental details in Section \ref{sec:experiments}. In addition, we will open source our code for reproducibility after publication.

\paragraph{Common configurations.} For SAC with a likelihood-ratio gradient for the actor, we sample $N_b=30$ actions for the given state and use the average of the corresponding action values as the baseline.
We also use such a baseline for the likelihood-ratio part of the half-reparameterization gradient estimator. We use the Adam optimizer \citep{kingma2014adam} with $\beta_1=0.9$ and $\beta_2=0.999$ for all experiments.

\paragraph{Synthetic multimodal bandits.} We create $100$ continuous bandits with the reward function proportional to the summation of $30$ randomly generated Gaussian density functions. The means and standard deviations are uniformly sampled from $[-3, 3]$ and $[0.1, 1.0]$, respectively. We use a two-layer feedforward network with a hidden dimension of $16$, a replay buffer size of $5000$, and a batch size of $32$. Note that we replace the critic with the true value function as these bandits with a deterministic reward function are for illustration purposes. We sweep the initial actor step size $\eta_{p,0}=10^x$ for $x\in\{-4, -3, -2, -1\}$ and the entropy scale $\alpha=10^y$ for $y\in\{-4, -3, -2, -1\}$. 
We report additional plots in \Cref{sec:app_sub_multimodal_bandits}.

\paragraph{Common continuous control benchmarks.} We use a two-layer feedforward network with a hidden dimension of $256$, a replay buffer size of $1,000,000$, and a batch size of $100$. We use the automatic entropy tuning and the same initial step size $3\times10^{-4}$ for the actor, critic, and entropy scale. We use a double Q network and a target network for the critic, which is an exponential moving average of the critic with a smoothing factor of $0.005$. In the initial $10,000$ steps, the actions are uniformly sampled. We report the state and action dimensions of Gym MuJoCo environments in Table \ref{tab:mujoco_env_dimensions} and learning curves in each individual environment in \Cref{sec:app_additional_mujoco_dmc_plots}.

\paragraph{Classic control environments.} We use a two-layer feedforward network with a hidden dimension of $64$, a replay buffer size of $100,000$, and a batch size of $32$. We use a double Q network and a target network for the critic, which is an exponential moving average of the critic with a smoothing factor of $0.01$. We sweep the entropy scale $\alpha=10^y$ for $y\in\{-3,-2,-1,0\}$. In addition, we sweep the initial critic step size $\eta_{q,0}=10^x$ for $x\in\{-5, -4, -3,-2\}$ and the initial actor step size $\eta_{p,0}=\kappa \eta_{q,0}$ for $\kappa\in\{10^{-2}, 10^{-1}, 1, 10\}$. Details of the versions and reward functions of these environment is in Section \ref{sec:app_classic_control_rewards}. We report the best hyperparameter setting in Table \ref{tab:best_hypers_classic_control} and additional plots in Section \ref{sec:app_sub_visualization}.

\subsection{Computational Overhead of Using Mixture Policies}
\label{sec:app_sub_computation_overhead}

\paragraph{Network architecture of mixture policies.} Compared to the base policy's actor, the mixture policy's actor has additional heads for the additional parameters in the mixture distribution. For example, the last layer of a squashed Gaussian policy's actor has two outputs (one for the mean and one for the standard deviation), while the last layer of a squashed Gaussian policy's actor with five components has $15$ outputs (five for the means, five for the standard deviations, and five for the mixing weights).
{
Such an architecture induces only negligible additional memory usage (see \Cref{tab:training_time}). Similarly, the inference time should also be marginal.
}

{

\paragraph{Computational overhead during training.}
Using the MRP estimator does slightly increase actor training cost as it requires $N$ Q-evaluations per state, compared to $1$ for Gaussian policies. However, the additional wall-clock training time is modest, since computation can be batched and parallelized. Further, a lot of SAC’s wall-clock time comes from critic updates (each with 4 separate Q-evaluations per transition, a factor of 2 from double-Q and another factor of 2 from two consecutive states).

\paragraph{Reference training and inference time.}
We provide training and inference time samples for \texttt{Pendulum} and \texttt{HalfCheetah-v3} in Table \ref{tab:training_time} for reference. We can see that mixture policies require modest additional training and inference time. However, we use PyTorch \citep{paszke2019pytorch} in our experiments and have not optimized our code for more efficient training and inference. We expect the gap between mixture policies and base policies to be much smaller if one switch to a JAX implementation \citep{jax2018github}.

For transparency, the training time samples are obtained via an example run when the server is idle and no other active program is running. The CPU of the server is AMD Ryzen 9 5900X 12-Core Processor, and the GPU of the server is NVIDIA Geforce RTX 3080 Ti.
}

\begin{table}[ht]
  \caption{State and action dimensions of MuJoCo environments.}
  \label{tab:mujoco_env_dimensions}
  \centering
  \begin{tabular}{l|cc}
    \toprule
    \textbf{Environment} & \textbf{State Dimension} & \textbf{Action Dimension} \\
    \midrule
    Hopper-v3           & 11                      & 3                        \\
    Walker2d-v3         & 17                      & 6                        \\
    HalfCheetah-v3      & 17                      & 6                        \\
    Ant-v3              & 111                     & 8                        \\
    Swimmer-v3          & 8                       & 2                        \\
    Humanoid-v3         & 376                     & 17                       \\
    HumanoidStandup-v2  & 376                     & 17                       \\
    \bottomrule
  \end{tabular}
\end{table}

\begin{table}[htb]
  \caption{Tuned hyperparameters in classic control environments.}
  \label{tab:best_hypers_classic_control}
  \centering
  \begin{tabular}{l|l|ccc}
    \toprule
    Environment & Algorithm & $\eta_{q,0}$ & $\kappa$ & $\alpha$ \\
    \midrule
    \multirow{5}{*}{\texttt{ShapedPendulum}} 
    & SG-RP & $10^{-2}$ & $10^{-1}$ & $10^{-1}$ \\
    & USGM-RP & $10^{-2}$ & $10^{-1}$ & $10^{-1}$ \\
    & SGM-HalfRP & $10^{-2}$ & $10^{-1}$ & $10^{-1}$ \\
    & SGM-MRP & $10^{-2}$ & $10^{-1}$ & $10^{-2}$ \\
    & SGM-GumbelRP & $10^{-2}$ & $10^{-1}$ & $10^{-1}$ \\
    \midrule
    \multirow{5}{*}{\texttt{ShapedAcrobot}} 
    & SG-RP & $10^{-2}$ & $10^{-2}$ & $10^{-2}$ \\
    & USGM-RP & $10^{-2}$ & $10^{-1}$ & $10^{-3}$ \\
    & SGM-HalfRP & $10^{-2}$ & $10^{-1}$ & $10^{-2}$ \\
    & SGM-MRP & $10^{-2}$ & $10^{-1}$ & $10^{-2}$ \\
    & SGM-GumbelRP & $10^{-2}$ & $10^{-1}$ & $10^{-2}$ \\
    \midrule
    \multirow{5}{*}{\texttt{ShapedMountainCar}} 
    & SG-RP & $10^{-3}$ & $1$ & $10^{-1}$ \\
    & USGM-RP & $10^{-2}$ & $10^{-1}$ & $10^{-1}$ \\
    & SGM-HalfRP & $10^{-2}$ & $10^{-1}$ & $10^{-1}$ \\
    & SGM-MRP & $10^{-3}$ & $1$ & $10^{-1}$ \\
    & SGM-GumbelRP & $10^{-3}$ & $1$ & $10^{-1}$ \\
    \midrule
    \multirow{5}{*}{\texttt{Pendulum}} 
    & SG-RP & $10^{-3}$ & $1$ & $10^{-2}$ \\
    & USGM-RP & $10^{-3}$ & $1$ & $10^{-2}$ \\
    & SGM-HalfRP & $10^{-3}$ & $1$ & $10^{-2}$ \\
    & SGM-MRP & $10^{-3}$ & $1$ & $10^{-3}$ \\
    & SGM-GumbelRP & $10^{-3}$ & $1$ & $10^{-2}$ \\
    \midrule
    \multirow{5}{*}{\texttt{Acrobot}} 
    & SG-RP & $10^{-3}$ & $10^{-2}$ & $10^{-3}$ \\
    & USGM-RP & $10^{-3}$ & $1$ & $10^{-3}$ \\
    & SGM-HalfRP & $10^{-3}$ & $1$ & $10^{-3}$ \\
    & SGM-MRP & $10^{-3}$ & $1$ & $10^{-2}$ \\
    & SGM-GumbelRP & $10^{-3}$ & $10^{-1}$ & $10^{-3}$ \\
    \midrule
    \multirow{5}{*}{\texttt{MountainCar}} 
    & SG-RP & $10^{-4}$ & $10$ & $10^{-2}$ \\
    & USGM-RP & $10^{-4}$ & $10$ & $10^{-3}$ \\
    & SGM-HalfRP & $10^{-3}$ & $1$ & $10^{-2}$ \\
    & SGM-MRP & $10^{-4}$ & $10$ & $10^{-2}$ \\
    & SGM-GumbelRP & $10^{-4}$ & $10$ & $10^{-2}$ \\
    \bottomrule
  \end{tabular}
\end{table}

\begin{table}[htb]
  
  \caption{Training time for $100k$ steps, inference time for $110$ episodes, and GPU memory usage during training.}
  \label{tab:training_time}
  \centering
  \begin{tabular}{l l|cccc}
    \toprule
    & & SG-RP & SGM-MRP & SGM-HalfRP & SGM-GumbelRP \\
    \midrule
    \multirow{2}{*}{Pendulum} 
      & CPU Training & 317s   & 455s & 445s   & 435s \\
      & CPU Inference & 2.1s   & 3.0s & 3.0s   & 3.0s \\
    \midrule
    \multirow{3}{*}{HalfCheetah} 
      & GPU Training & 618s & 818s & 797s & 807s  \\
      & GPU Inference & 23s & 30s & 30s & 31s  \\
      & GPU Memory  & 554MB   & 556MB & 554MB   & 554MB    \\
    \bottomrule
  \end{tabular}
\end{table}

\subsection{Additional Plots for Synthetic Bandits Experiments}
\label{sec:app_sub_multimodal_bandits}

Figure \ref{fig:multimodal_bandit_each} shows the learning curves of SG-RP and SGM-MRP with their best hyperparameter setting in all $100$ synthetic multimodal bandits. We can see that SGM-MRP outperforms SG-RP in many bandits where they are not tied.

\begin{figure}[htb]
    \centering
    \includegraphics[width=0.8\linewidth]{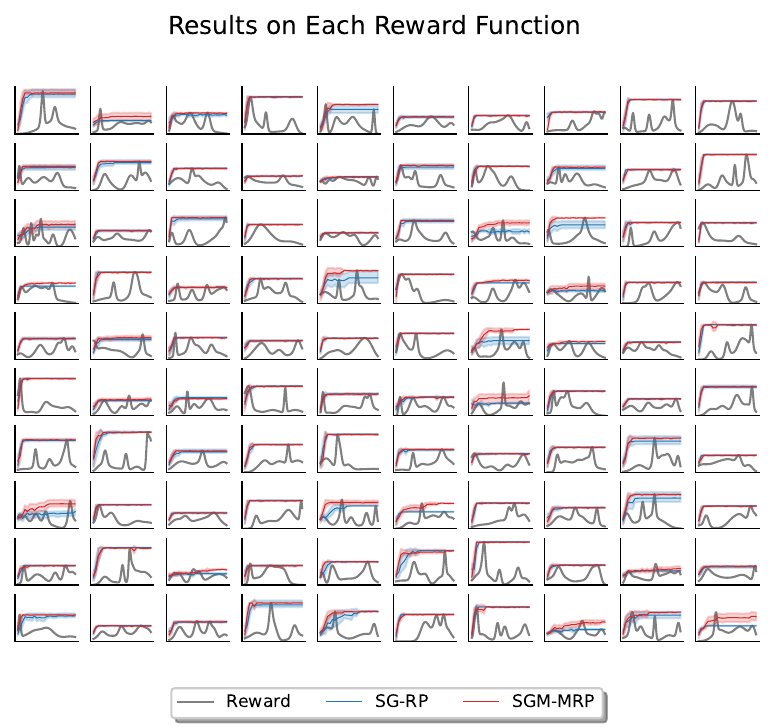}
    \caption{Reward function and the corresponding learning curves for each synthetic bandit. Each bandit is run for $10$ seed. The shade area shows the $95\%$ bootstrap CIs.}
    \label{fig:multimodal_bandit_each}
\end{figure}

\Cref{fig:multimodal_bandit_mixture} shows the average reward in the final $10\%$ of training steps for the hyperparameter setting with the best final reward across different $\alpha$ for different estimators. We can see that the SGM-MRP is consistently better than alternatives.

\begin{figure}[htb]
    \centering
    \resizebox{.34\textwidth}{!}{
    \includegraphics[height=2.8cm]{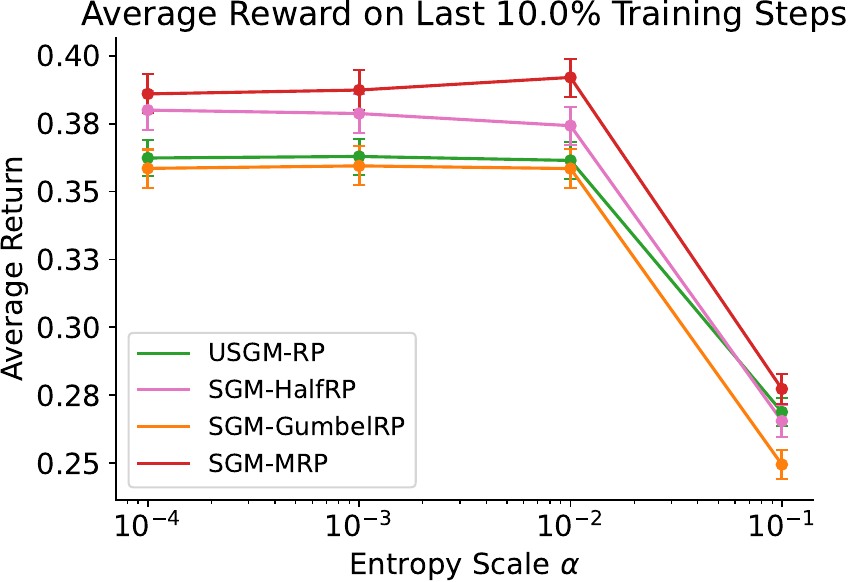}
    }
    \caption{Average performance across $100$ bandits at different entropy scales. The shaded areas and error bars show $95\%$ bootstrap CIs across $100$ bandits ($10$ runs each).}
    \label{fig:multimodal_bandit_mixture}
\end{figure}

\subsection{Additional Plots for Common Continuous Control Benchmarks}
\label{sec:app_additional_mujoco_dmc_plots}

\Cref{fig:mujoco_dmc_learning_curves,fig:mujoco_dmc_learning_curves_additional} show the learning curves in each of the MuJoCo, DeepMind Control (DMC), MetaWorld, and MyoSuite  environments. We can see that the performance of SGM-MRP is quite similar to SG-RP across different environments except for a few environments in MetaWorld.

\begin{figure}[htb]
    \centering
    \includegraphics[width=\textwidth]{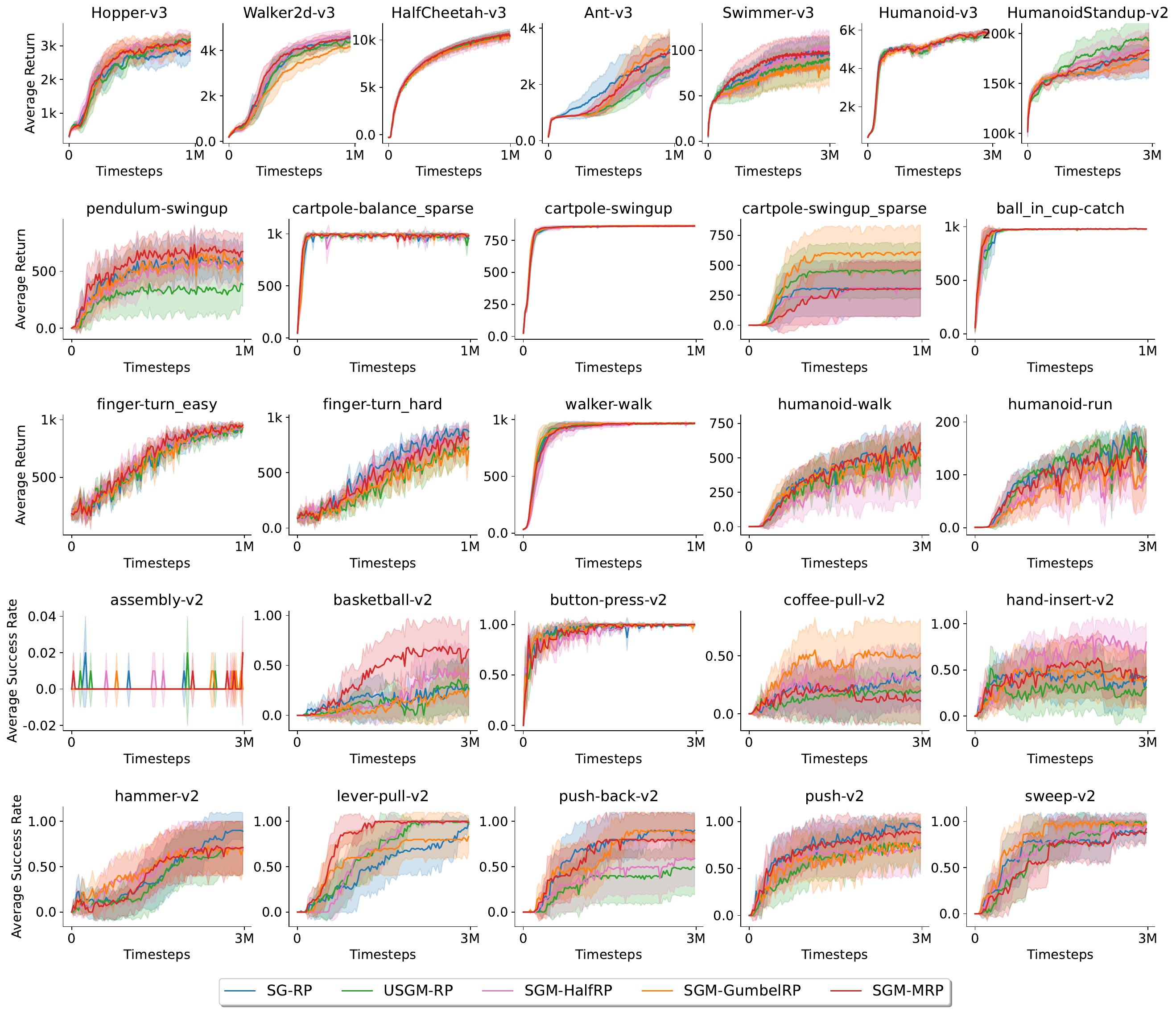}
    \caption{Learning curves in $27$ environments from Gym MuJoCo, DMC, and MetaWorld.}
    \label{fig:mujoco_dmc_learning_curves}
\end{figure}

\begin{figure}[htb]
    \centering
    \includegraphics[width=\textwidth]{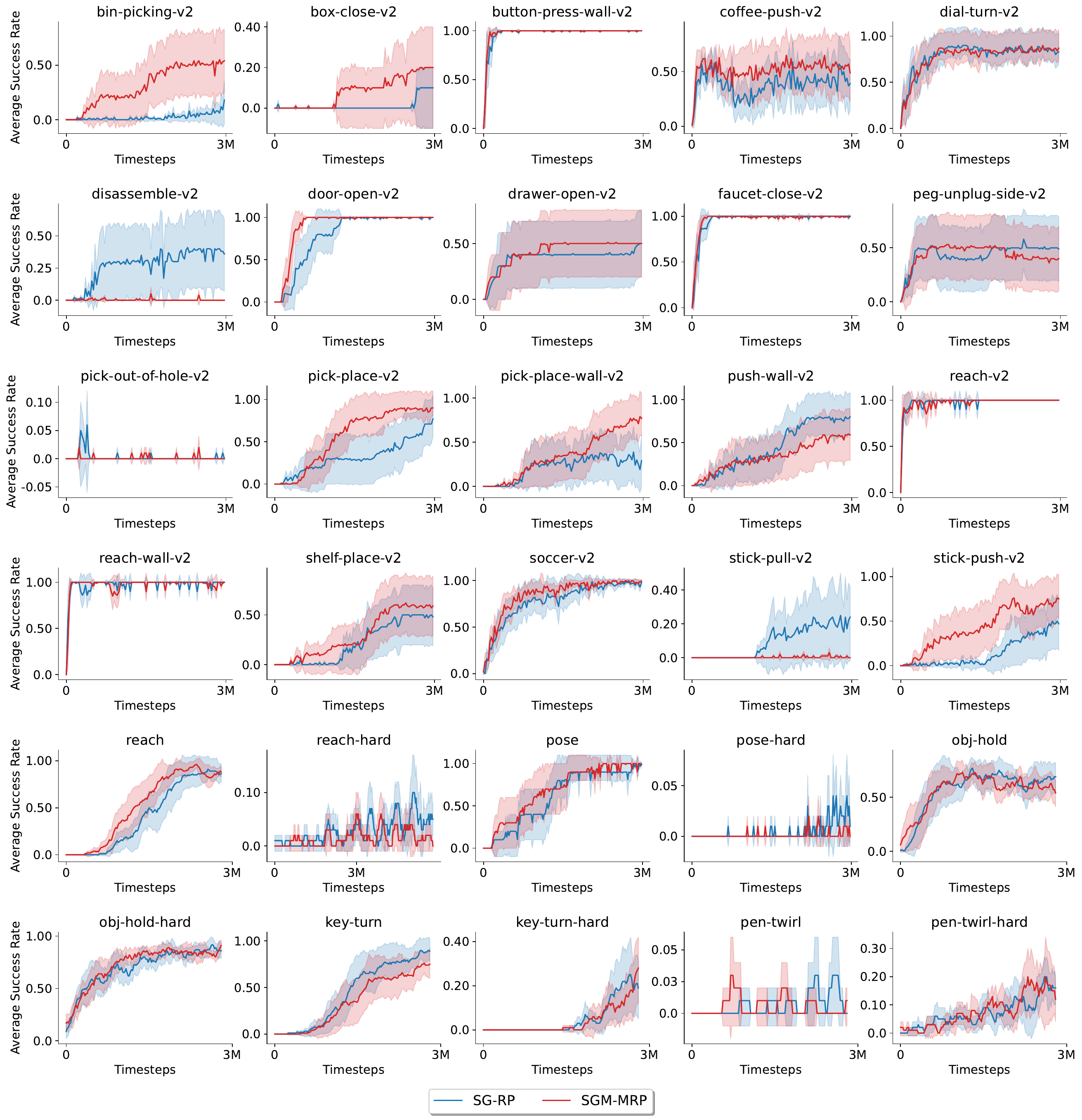}
    \caption{Learning curves in $30$ additional environments from MetaWorld and MyoSuite.}
    \label{fig:mujoco_dmc_learning_curves_additional}
\end{figure}

{
\Cref{fig:mw_failures_learning_curves} shows the learning curves in two MetaWorld environments where mixture policies fail to achieve meaningful successful rate. We can see that SGM-MRP still learns steadily in terms of average return in these cases. The insignificant success rate might be due to SGM-MRP's convergence to a stable but non-successful mode, whereas SG-RP occasionally discovers the success trajectory.
}

\begin{figure}[htb]
    \centering
    \resizebox{.5\textwidth}{!}{
    \includegraphics[height=3cm]{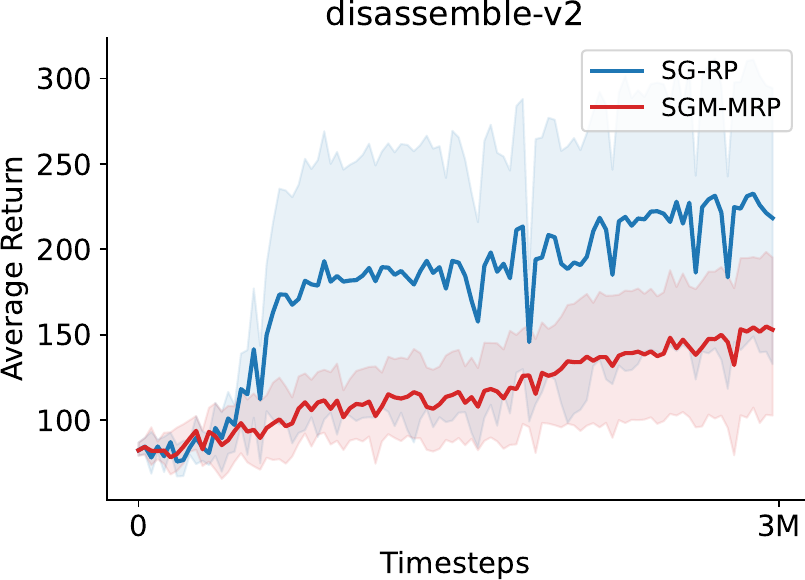}
    \includegraphics[height=3cm]{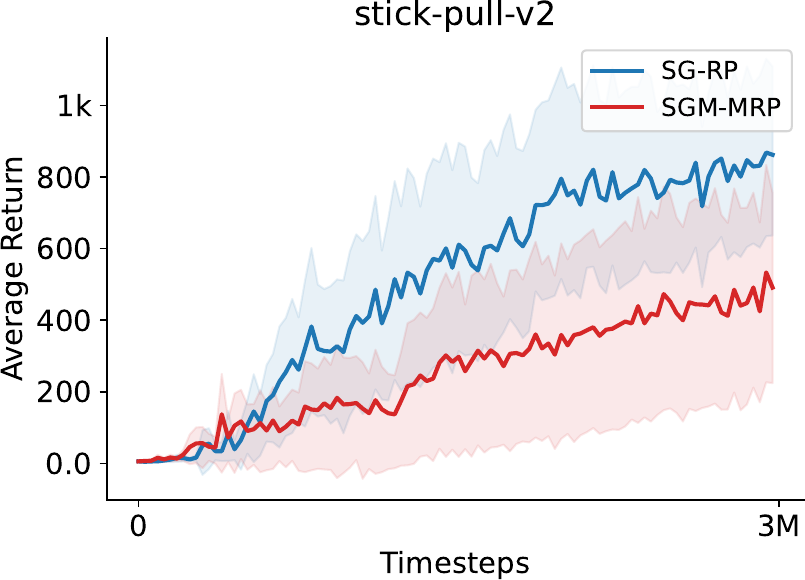}
    }
    \caption{\postsubmission Learning curves in two MetaWorld environments where mixture policies fail to achieve meaningful successful rate.}
    \label{fig:mw_failures_learning_curves}
\end{figure}

\Cref{fig:sensitivity_mujoco_two} shows the sensitivity of mixture policies and base policies in the remaining two MuJoCo environments, in which the difference is smaller than those in \Cref{fig:sensitivity_mujoco}.

\begin{figure}[htb]
    \centering
    \resizebox{0.5\textwidth}{!}{
    \includegraphics[height=3cm,trim=0 5 0 5,clip]{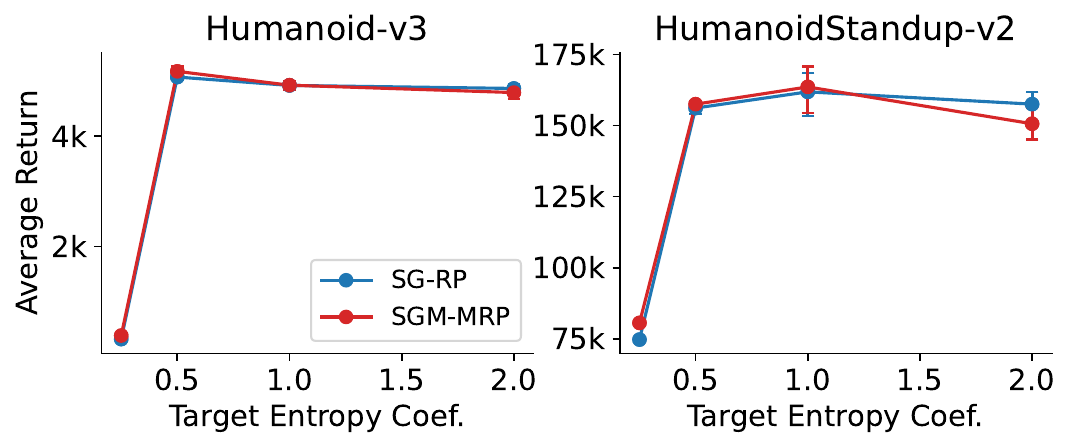}
    }
    \captionsetup{skip=6pt}
    \caption{Performance with different target entropy coefficients for mixture policies and base policies in the remaining two MuJoCo environments. Results are averaged over $10$ runs.
    }
    \label{fig:sensitivity_mujoco_two}
\end{figure}

\subsection{Classic Control Environment Details}
\label{sec:app_classic_control_rewards}

We use the $\mathsf{v1}$ version of Pendulum from OpenAI Gym for \texttt{ShapedPendulum}. For \texttt{Pendulum}, we set the reward to $1$ if the angle of the pendulum from the upright position is smaller than $0.25$ and $0$ otherwise. For \texttt{ShapedAcrobot}, we set the reward to be $-\cos(\theta_1) - \cos(\theta_2 + \theta_1) - 1.0$, where $\theta_1$ and $\theta_2$ are the first two dimensions of the state. For \texttt{ShapedMountainCar}, we set the reward to be $x-0.6$, where $x$ is first dimension of the state. For \texttt{Acrobot} and \texttt{MountainCar}, we adapt the discrete version in Gym to the continuous action case as it is done in \citet{neumann2022greedy}. All environments use a discount factor of $0.99$. The episode cut-offs for the Pendulum, Acrobot, and MountainCar are $200$, $1000$, and $1000$, respectively.

\subsection{Additional Plots for Classic Control Environments}
\label{sec:app_sub_visualization}

\paragraph{Results comparing SGM-MRP and SG-RP in the unshaped-reward setting.} \Cref{fig:classic_learning_curves_shaped} shows learning and sensitivity curves. Compared to those in \Cref{fig:classic_learning_curves}, the improvement of SGM-MRP over SG-RP is much smaller (note the range of y-axis in the sensitivity curves).

\paragraph{Results comparing different estimators for mixture policies.} 
From \Cref{fig:classic_learning_curves_mixture}, we can see that MRP is consistently the best estimator for mixture policies. Note that SGM-HalfRP has poor performance in MountainCar environments, potentially due to its higher variance compared to others.

\paragraph{Limitations of fixed weighting policies.}
We discuss the drawbacks of HalfRP and Gumbel RP estimators in \Cref{sec:experiments_estimators}. Here, we discuss the limitation of the other alternative, using a fixed weighting policy. First, restricting the weighting scheme reduces the flexibility of the policy class, which may be undesirable. Second, mixture policies with fixed weights often require more significant parameter updates when transitioning between distributions. For instance, when all modes have collapsed to a single mode, it is more challenging for fixed-weight mixture policies to introduce a new mode far from the current mode, as the component locations are constrained near the existing mode due to the non-zero fixed weights. In contrast, mixture policies with learnable weights can focus on a specific mode while keeping other components positioned far away, as their negligible weights minimize their impact on the resulting distribution. In scenarios where the mixture policy needs to introduce a new mode, a learnable-weight policy can simply adjust the mixing weight of a component that is already near the desired mode, enabling more efficient adaptation.

\begin{figure}[htb]
    \centering
    \resizebox{\textwidth}{!}{
    \includegraphics[height=3cm,trim=0 5 0 5,clip]{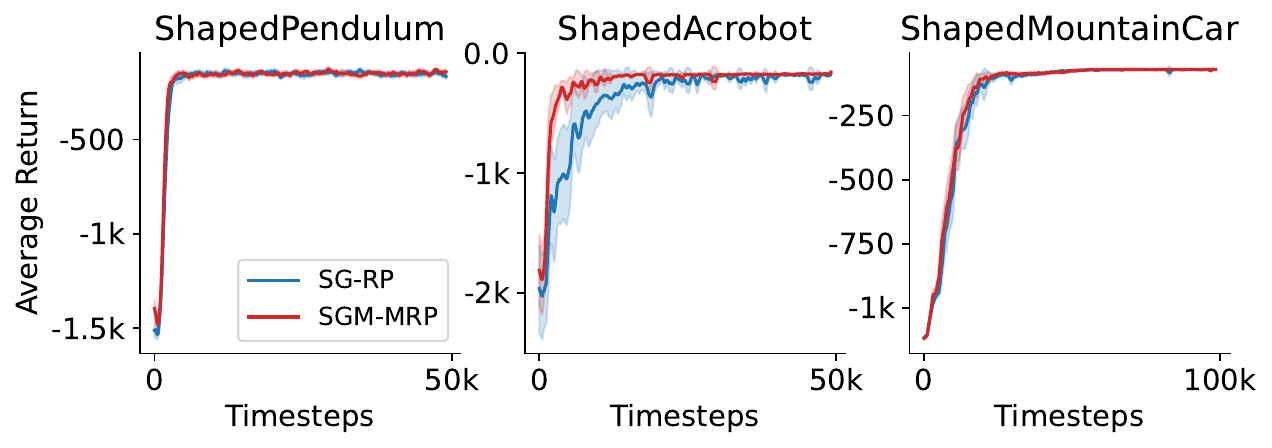}
    \includegraphics[height=3cm,trim=0 5 0 5,clip]{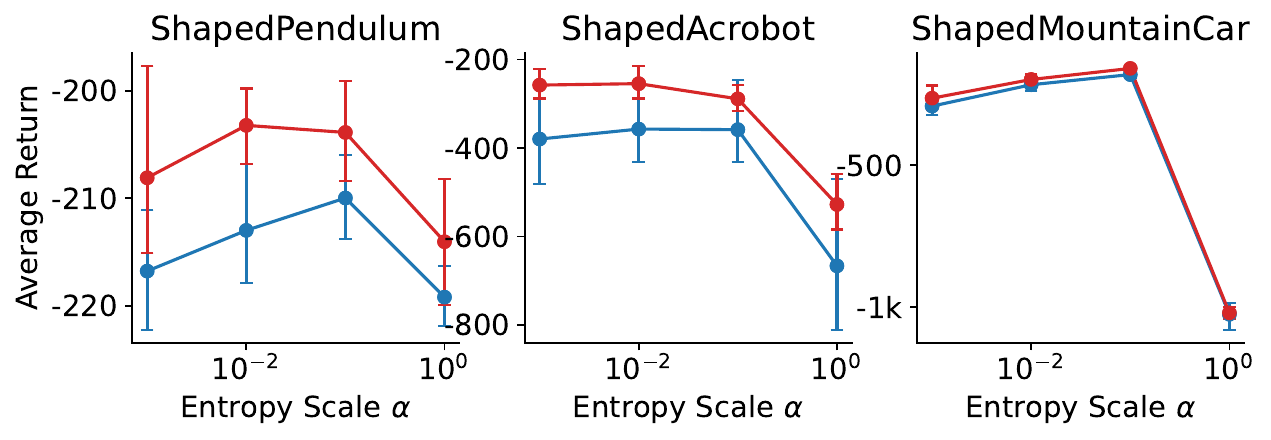}
    }
    \caption{Learning and sensitivity curves for classic control environments with shaped rewards.}
    \label{fig:classic_learning_curves_shaped}
\end{figure}

\begin{figure}[htb]
    \centering
    \includegraphics[width=\textwidth]{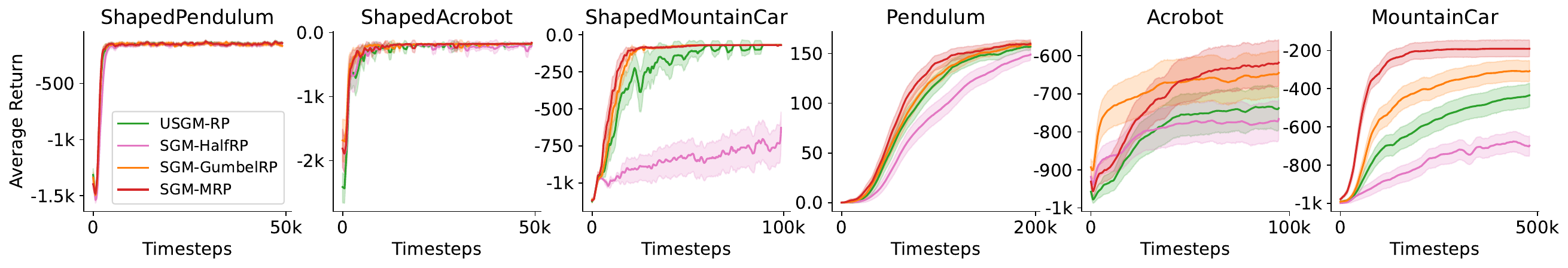}
    \includegraphics[width=\textwidth]{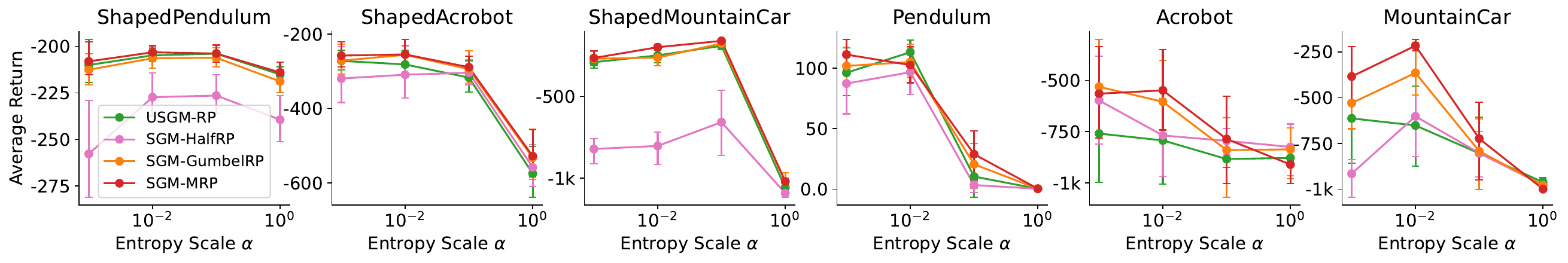}
    \caption{Learning curves of the best hyperparameter setting of different estimators for mixture policies in classic control environments.}
    \label{fig:classic_learning_curves_mixture}
\end{figure}

{

\paragraph{Visualization of state visitation.} To quantify exploration coverage and visualize state visitation, we collect visited states in $30$ reruns of the best hyperparameter setting for SG-RP and SGM-MRP in $\texttt{MountainCar}$. 
Specifically, we sample $10,000$ states after learning for $y\in\{0, 4000, 8000\}$ training steps. We then bin the states aggregating across different reruns and steps into $50\times50$ bins, with $50$ bins per state dimension. Finally, we subtract the frequency of each bin for SGM-MRP from that for SG-RP and plot the difference in log scale. In \Cref{fig:vis_heatmap_mc_more}, we show the results as well as per-algorithm visitation.
The starting states are around $(-0.5, 0.0)$, and a successful trajectory would spiral away from $(-0.5, 0.0)$ and reach the dash line.
In \Cref{fig:vis_heatmap_mc_more}, the difference in visitation on states far away from the starting states demonstrates the better exploration efficiency of mixture policies.
}

\paragraph{Visualization of the action-value estimates and policy density for base policies.} 
We presented the visualization of the action-value estimates and policy density at the starting state (\Cref{fig:mc_starting_state}) for mixture policies in Figure \ref{fig:vis_critic_mc} to highlight the difference between \texttt{ShapedMountainCar} and \texttt{MountainCar}, one with shaped rewards and the other with unshaped rewards. 
In Figure \ref{fig:vis_critic_mc_base_policy}, we show the same plot for base policies, SG-RP. The difference between the two types of environments is not so much different from that in Figure \ref{fig:vis_critic_mc}, but we can see the density is always unimodal during training, indicating less efficient exploration.

{
\paragraph{Mixture policy statistics during training.}
We investigate three statistics of mixture policies: 
\begin{itemize}
    \item Entropy. We measure the entropy of the weighting policies to see how much mixture policies collapse to choosing only one components.
    \item Component separation. This metric measures the pairwise distance between the (squashed) means of all components.
    \item Active component separation. This metric measures the pairwise distance between the means of components whose weight is larger than $0.01$.
\end{itemize}
From \Cref{fig:mixture_statistics}, there are several interesting observations. First, we can see that while the entropy of the weighting policies decay as training goes on, it stays above zero. Second, in most environments, both component separation metrics maintain a relatively high value (note that the action space is $[-1,1]$), showing that components maintain meaningful separation during training. Third, the level of entropy and component separation varies across environments. This makes sense as the rewards and the optimal entropy coefficient (see \Cref{tab:best_hypers_classic_control}) for different environments differ from each other.
}

\begin{figure}[htb]
    \centering
    \resizebox{.9\textwidth}{!}{
    \includegraphics[height=3cm]{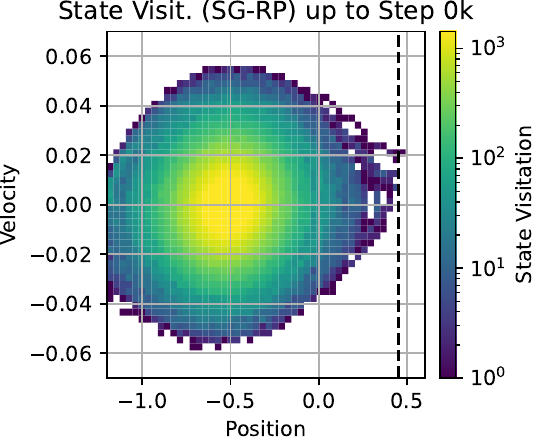}
    \includegraphics[height=3cm]{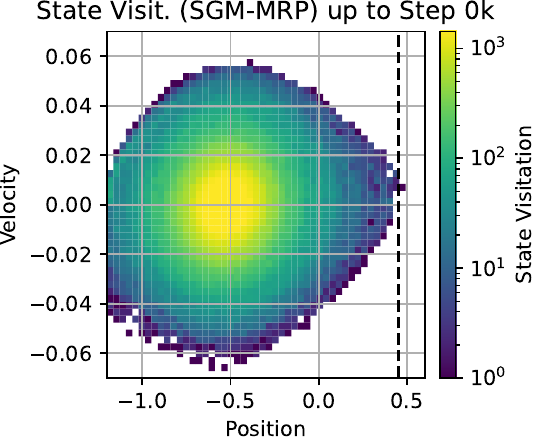}
    \includegraphics[height=3cm]{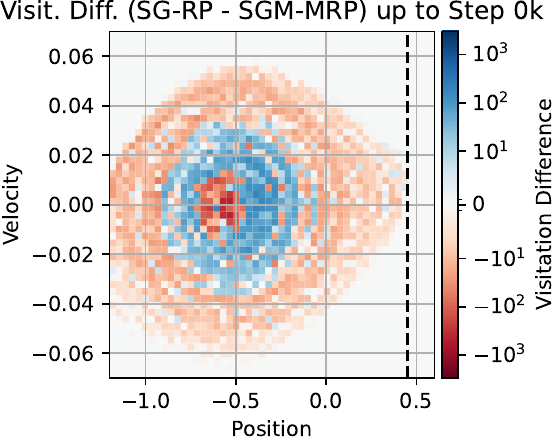}
    }
    \resizebox{.9\textwidth}{!}{
    \includegraphics[height=3cm]{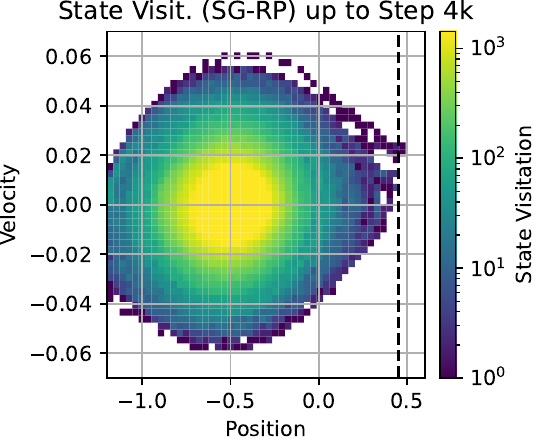}
    \includegraphics[height=3cm]{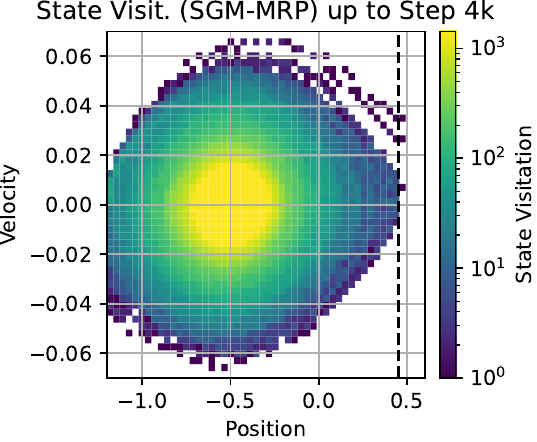}
    \includegraphics[height=3cm]{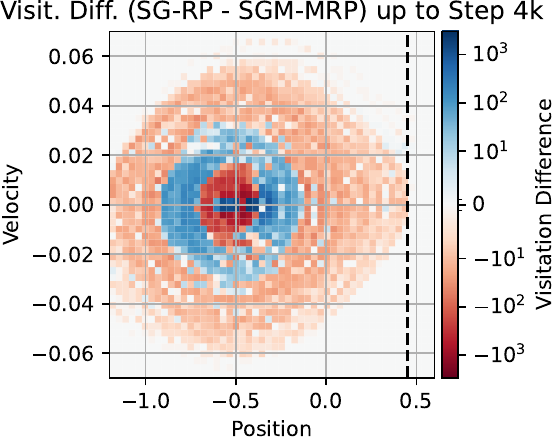}
    }
    \resizebox{.9\textwidth}{!}{
    \includegraphics[height=3cm]{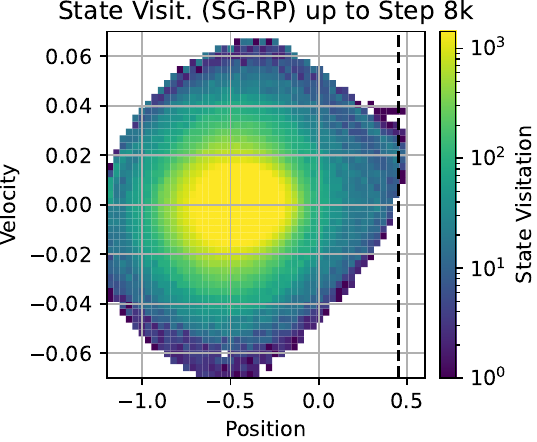}
    \includegraphics[height=3cm]{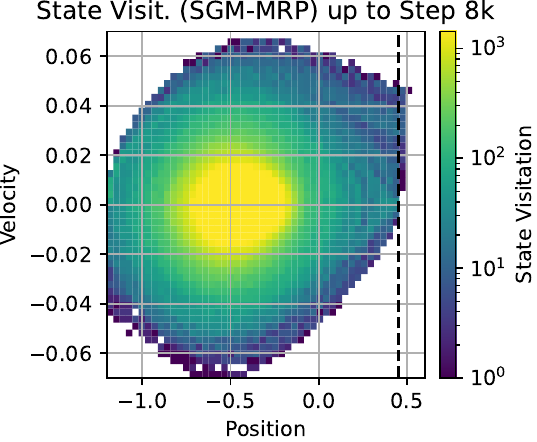}
    \includegraphics[height=3cm]{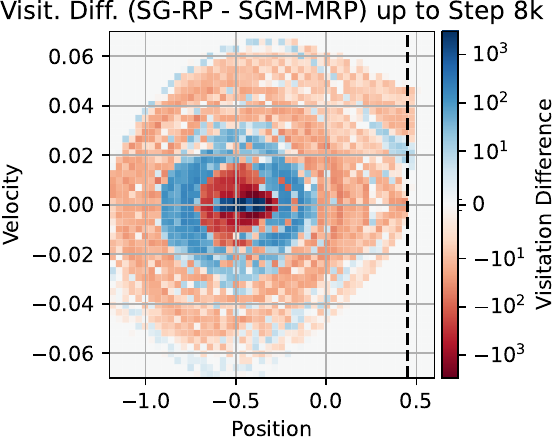}
    }
    \caption{\postsubmission State visitation of SG-RP and SGM-MRP during early training in \texttt{MountainCar}.}
    \label{fig:vis_heatmap_mc_more}
\end{figure}

\begin{figure}
    \centering
    \resizebox{.25\textwidth}{!}{
    \includegraphics[height=3cm, trim=0 3.5 0 2.5, clip]{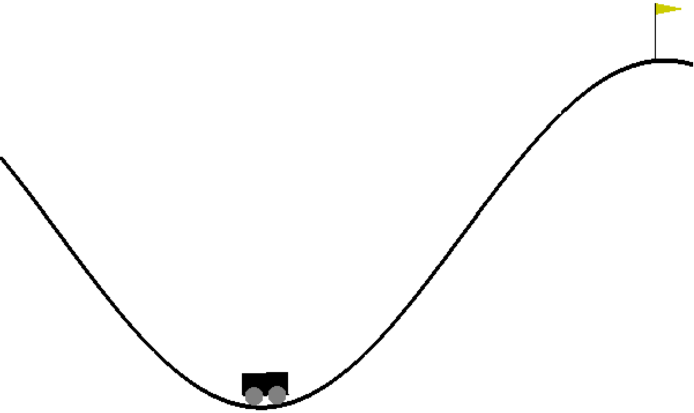}
    }
    \captionsetup{skip=6pt}
    \caption{The starting state at the bottom in MountainCar.}
    \label{fig:mc_starting_state}
\end{figure}

\begin{figure}[htb]
    \centering
    \includegraphics[width=.98\linewidth]{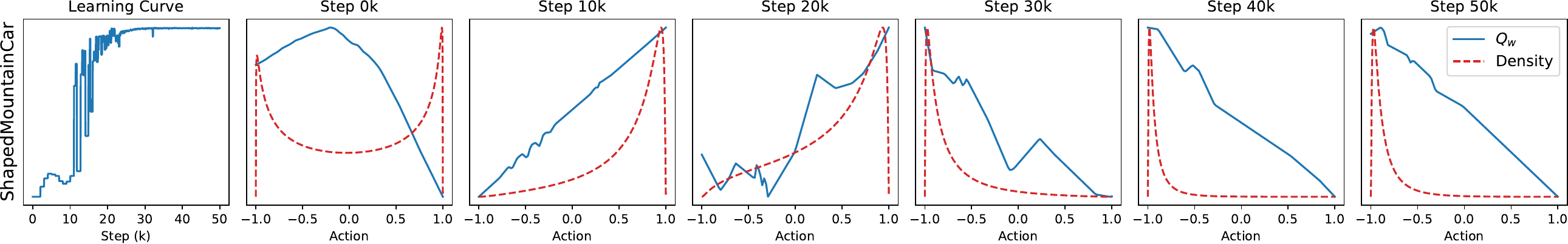}
    \includegraphics[width=.98\linewidth]{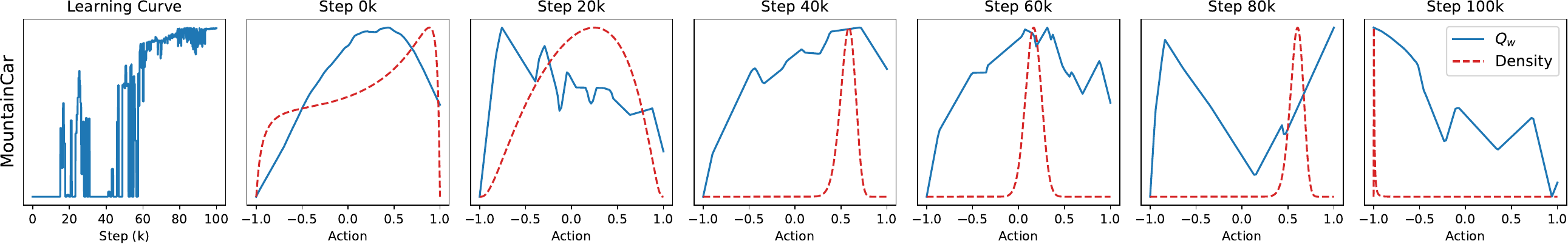}
    \caption{Learning curve, action-value estimates, and policy density at a starting state from a sample run of SG-RP in two MountainCar variants. The y-axes differ across plots and are not shown with ticks to highlight the shape of the curves rather than their exact values. The observations are similar to those in Figure \ref{fig:vis_critic_mc} with the exception that the density is always unimodal except for the starting step. It indicates that SG-RP explore less efficiently, potentially explaining its worse performance in \texttt{MountainCar}.}
    \label{fig:vis_critic_mc_base_policy}
\end{figure}

\begin{figure}[htb]
    \centering
    \resizebox{.75\textwidth}{!}{
    \includegraphics[height=3cm]{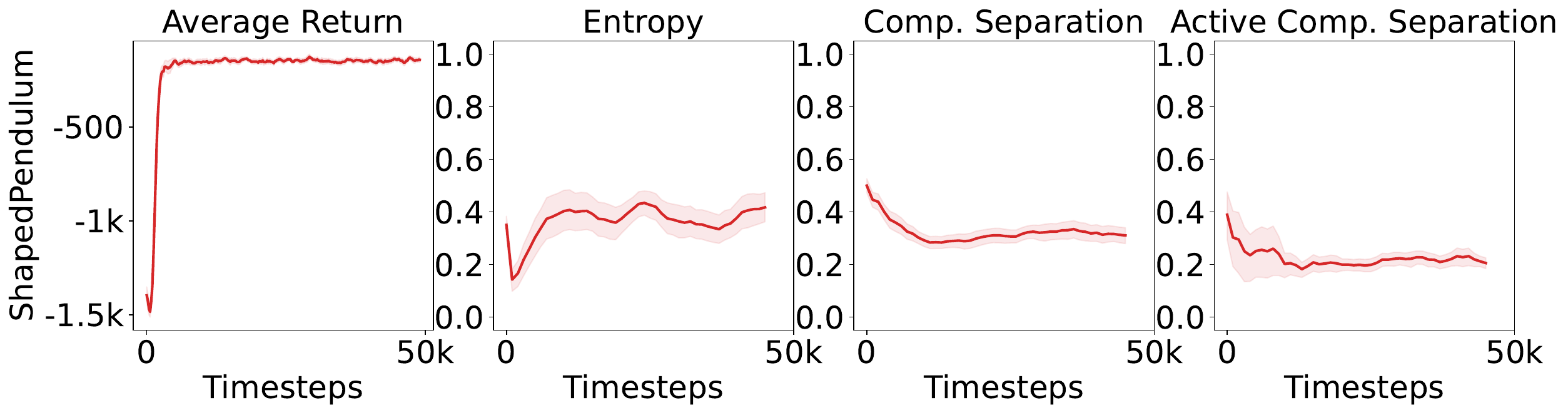}
    }
    \resizebox{.75\textwidth}{!}{
    \includegraphics[height=3cm]{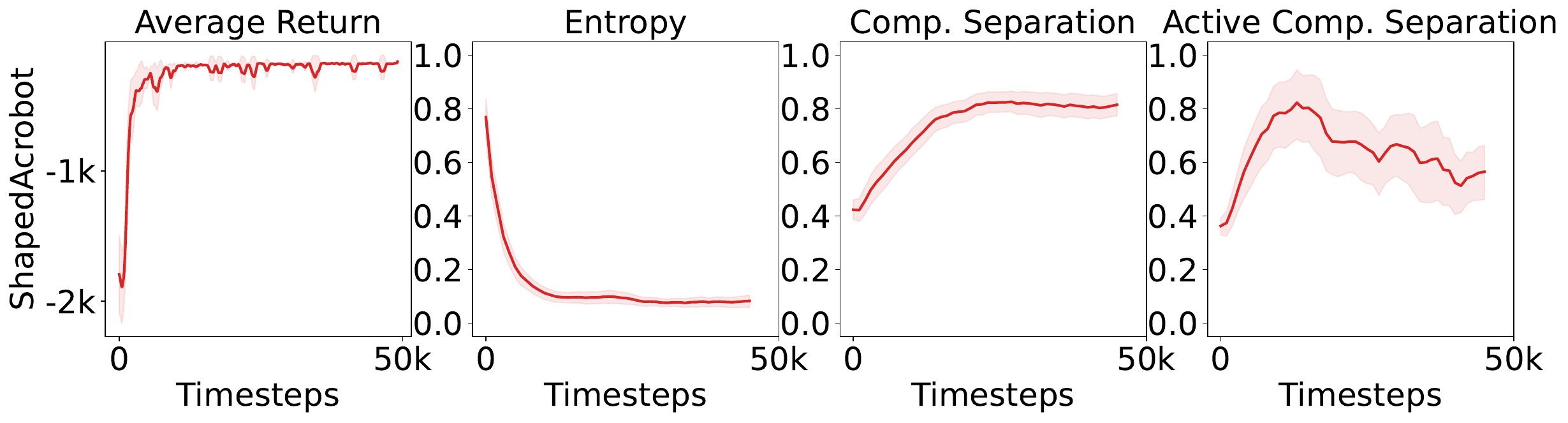}
    }
    \resizebox{.75\textwidth}{!}{
    \includegraphics[height=3cm]{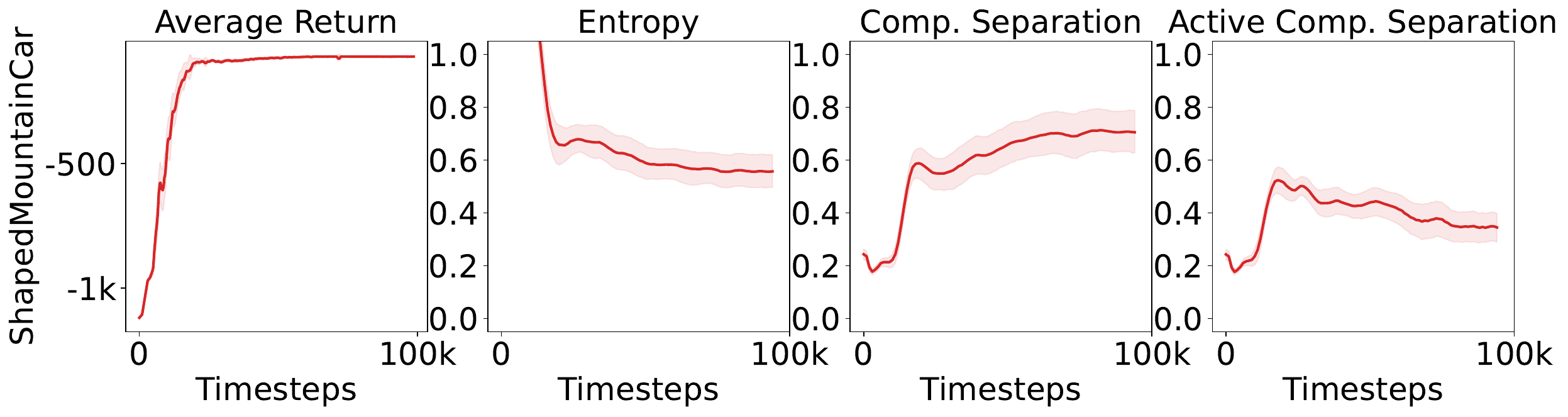}
    }
    \resizebox{.75\textwidth}{!}{
    \includegraphics[height=3cm]{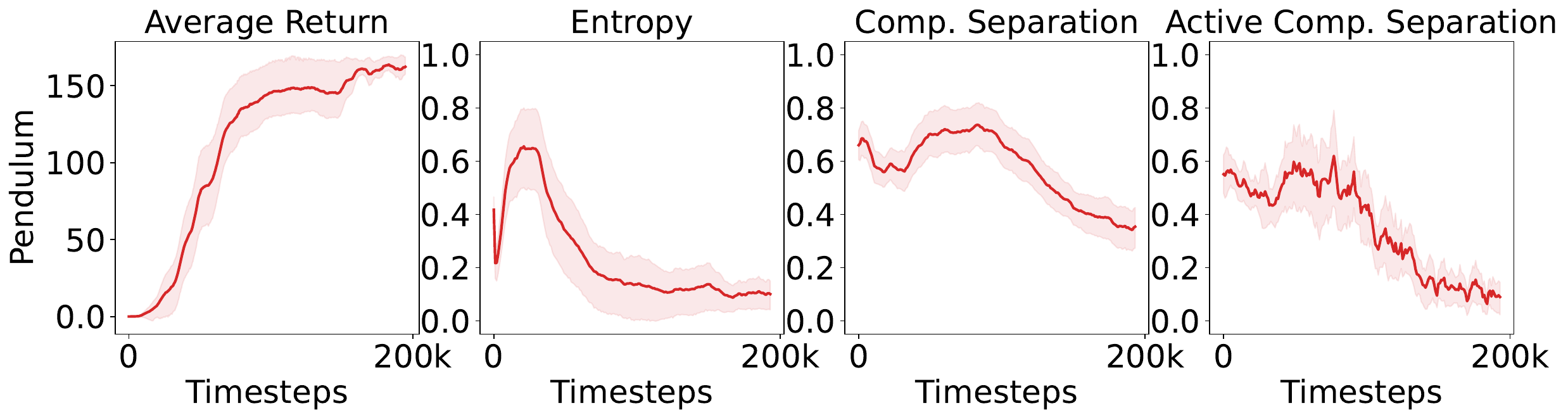}
    }
    \resizebox{.75\textwidth}{!}{
    \includegraphics[height=3cm]{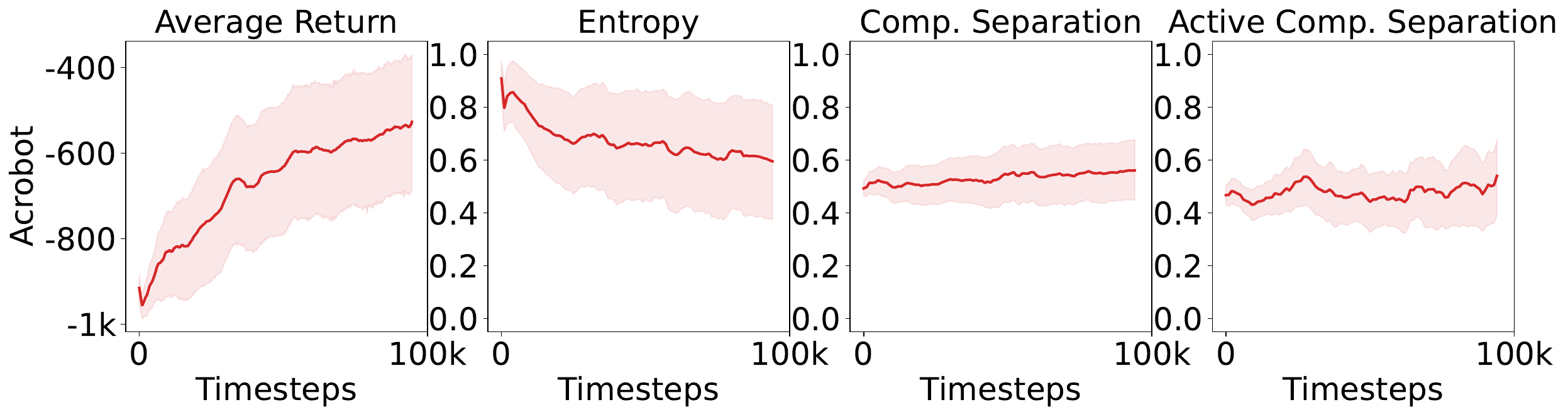}
    }
    \resizebox{.75\textwidth}{!}{
    \includegraphics[height=3cm]{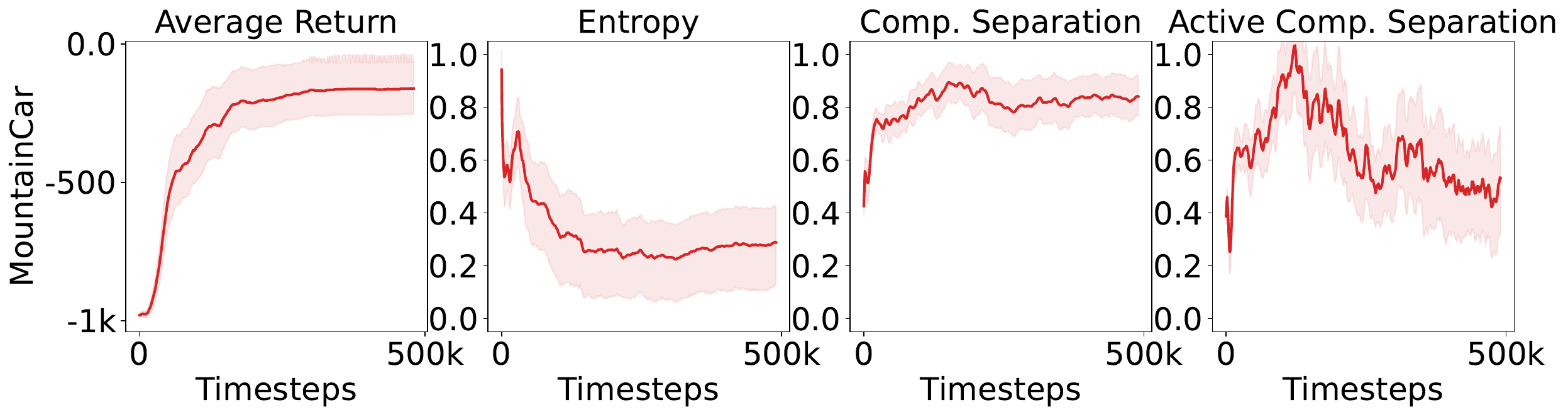}
    }
    \caption{\postsubmission Mixture policy statistics during training in classic control environments. Results are averaged across $30$ runs. The shaded areas plot the $95\%$ bootstrap CIs.}
    \label{fig:mixture_statistics}
\end{figure}

\clearpage

\section{Additional Experiments}
\label{sec:app_additional_experiments}

In this section, we provide additional experiments that further study different aspects when using mixture policies.

\subsection{Comparison Between the RP and the LR Estimators}
\label{sec:exp_lr_vs_rp}

In this section, we compare the MRP estimator with the LR estimator for optimizing mixture policies. To reduce variance of the LR estimator, we employ a baseline calculated using the average of the action values of $30$ sampled actions for a given state. 
We conduct experiments on seven Gym MuJoCo environments, following the same experimental setup as \Cref{sec:experiments}.

\textbf{The LR estimator could be unstable in high-dimensional environments.}
Figure \ref{fig:exp_lr_vs_rp} contrasts the performance distribution of SGM-LR to that of SGM-MRP. We can see that SGM-LR generally performs worse than SGM-MRP. Despite using a baseline to reduce variance, SGM-LR remains unstable, with divergent runs in several environments, particularly those with higher state and action dimensions (see Table \ref{tab:mujoco_env_dimensions} in Section \ref{sec:app_sub_experimental_details_app}). This instability is evident in environments like \texttt{Humanoid} and \texttt{HumanoidStandup}, where SGM-LR exhibits a significant number of divergent runs (shown in black), while SGM-MRP consistently achieves high returns. Additionally, even for non-divergent runs, SGM-LR often fails to learning meaningful policies, resulting in lower return distributions (as in \texttt{Walker2d}, \texttt{Ant}, and \texttt{Swimmer}). The variance in gradient estimates remains a significant issue for likelihood-ratio-based methods, especially in high-dimensional environments.
Overall, these results demonstrate that RP-based estimators provide greater stability than the LR estimator. This makes them a more effective choice for training mixture policies.

To empirical verify assumptions of \Cref{prop:variance_reduction_main} and compare the variance of the MRP and LR estimators, we perform a variance analysis on representative tasks. Specifically, we measure the variance terms in \Cref{assm:when_importance_sampling_is_bad_main} and the variance of three different estimators: LR without a baseline, LR (with a baseline; same as above), and MRP. Between every $1000$ training steps, we estimate the noise of each gradient estimator by fixing the current actor parameters and mini-batch, then recomputing the actor gradient multiple times $M=128$ with independent stochastic samples from that estimator. For estimator $e$, this yields gradients $\{g_e^{(j)}\}_{j=1}^M$, from which we compute the sample mean gradient $\bar g_e$ and per-parameter sample variance; we report the variance trace of $\sum_i \mathrm{Var}[g_{e,i}]$ (total gradient variance). To ensure fair comparison, all estimators are evaluated on the same mini-batch with the same repetition budget, and measurements are diagnostic only (no parameter update during variance estimation). The policy is updated separately with the MRP estimator after variance estimation as the LR estimator is unstable even with a baseline as discussed above.

\textbf{The MRP estimator has significantly lower variance.}
From \Cref{fig:exp_lr_vs_rp_variance} (columns 2-3), we can see that the variance assumptions do often hold in practice. We directly show the average frequency (across seeds) with which the assumptions hold: column 2 shows the results for the trace of the covariance of gradient variance (\Cref{eq:variance_assm_gradient}), and column 3 shows the results for the reward (in this case, critic) variance (\Cref{eq:variance_assm_reward}).
From Figure 2 (column 1), we can see that even if Assumption 4.6 does not always hold at all steps, the MRP estimator shows significantly lower variance than the LR estimator, even when the LR estimator is equipped with a baseline.

\begin{figure}[htb]
    \centering
    \includegraphics[width=\linewidth]{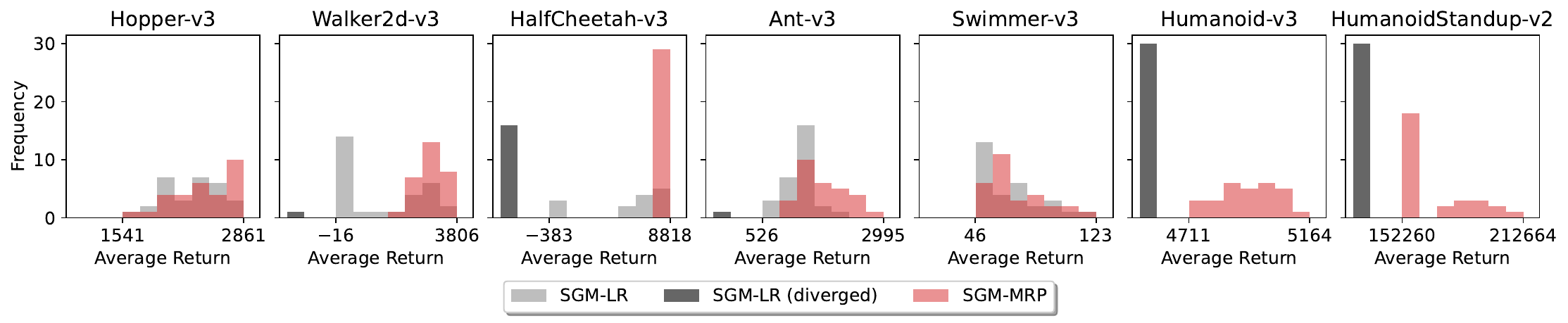}
    \caption{Comparative distribution ($30$ runs) of average return for squashed Gaussian mixture (SGM) policies in Gym MuJoCo environments, comparing the LR and the ERP estimators.
    }
    \label{fig:exp_lr_vs_rp}
\end{figure}

\begin{figure}[htb]
    \centering
    \includegraphics[width=\linewidth]{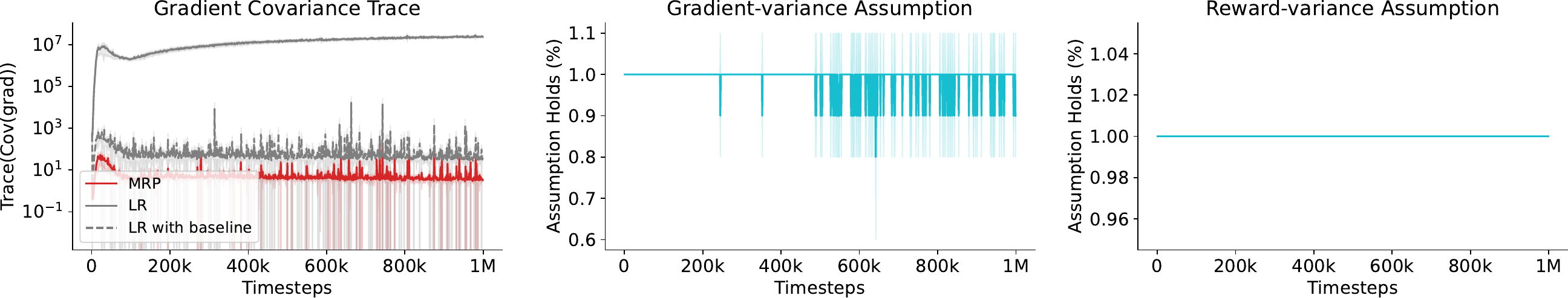}
    \includegraphics[width=\linewidth]{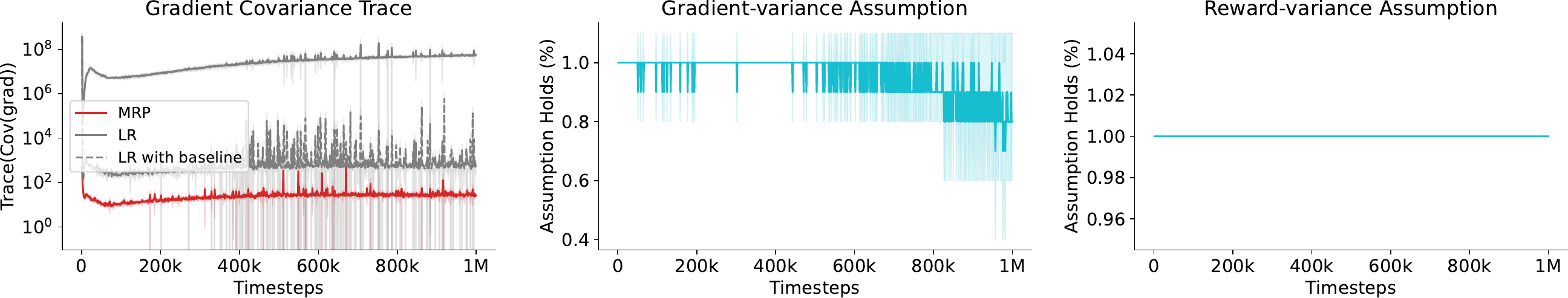}
    \includegraphics[width=\linewidth]{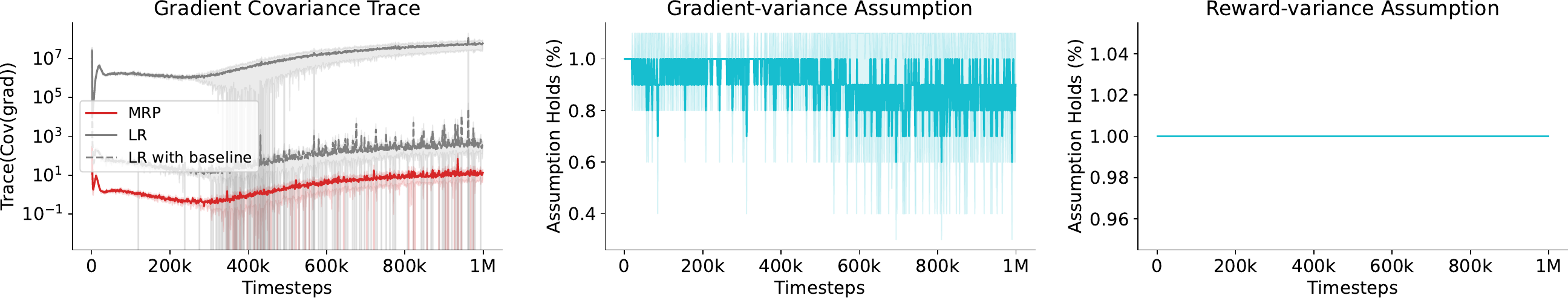}
    \includegraphics[width=\linewidth]{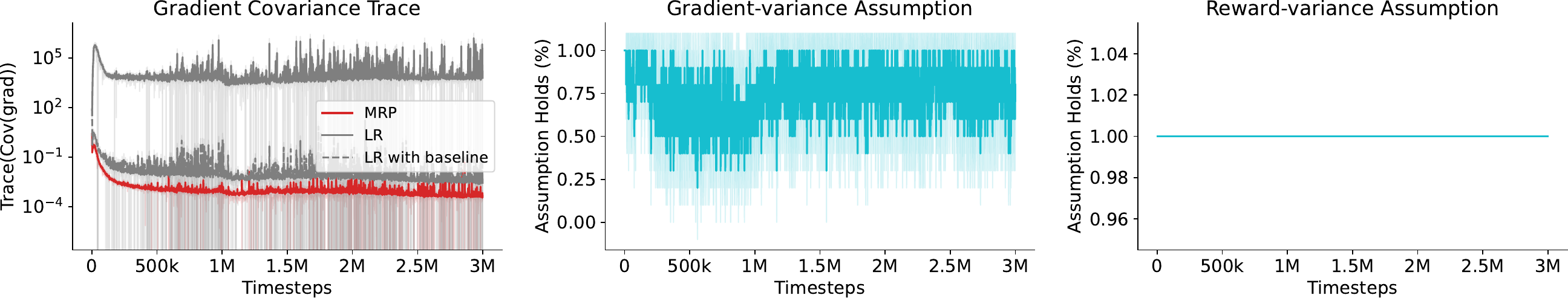}
    \caption{
    Gradient covariance trace of different estimators (col 1), percentage of seeds in which gradient-variance (col 2) and reward-variance (col 3) assumptions hold. Results are based on 10 random seeds, with shaded areas showing $95\%$ bootstrap confidence intervals. From top to bottom: \texttt{Hopper-v3}, \texttt{Walker2D-v3}, \texttt{Ant-v3}, \texttt{Swimmer-v3}. We can see that 1) MRP has the lowest variance, and 2) the variance assumptions in \Cref{assm:when_importance_sampling_is_bad_main} more often hold than not.
    }
    \label{fig:exp_lr_vs_rp_variance}
\end{figure}

\subsection{Additional Robotic Environments with Unshaped Rewards}
\label{sec:app_exp_unshaped_rewards}

Other than the classic control environments in \Cref{sec:exp_shaped_versus_unshaped}, we also investigate robotic environments with unshaped rewards. Specifically, we test \texttt{FetchReach} and \texttt{FetchSlide} from \citet{plappert2018multi} as well as $10$  in-hand manipulation environments from the ShadowHand \citet{huang2021generalization}. However, both mixture policies and base policies fail without any learning progress in in-hand manipulation environments as they are too difficult. On the other hand, mixture policies appear to improve performance in the Fetch environments (\Cref{fig:fetch_envs}).

\begin{figure}[htb]
    \centering
    \resizebox{.67\textwidth}{!}{
    \includegraphics[height=3cm]{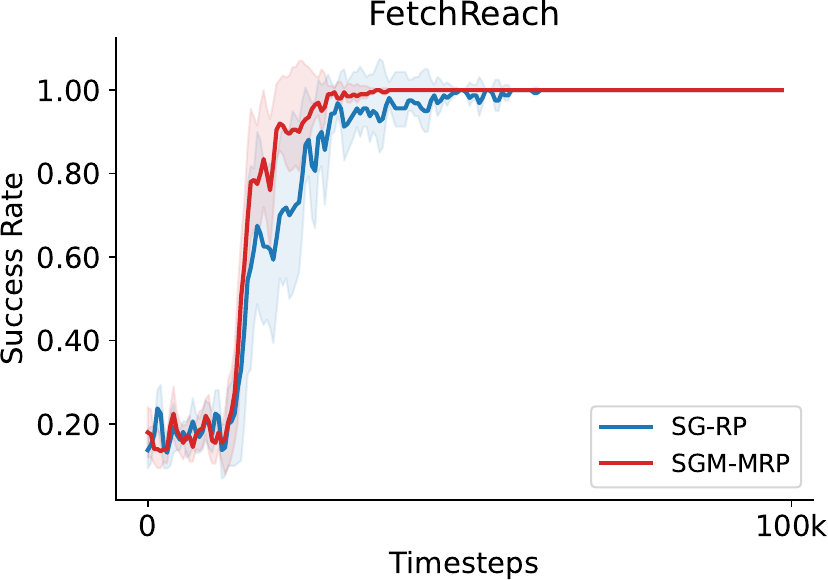}
    \includegraphics[height=3cm]{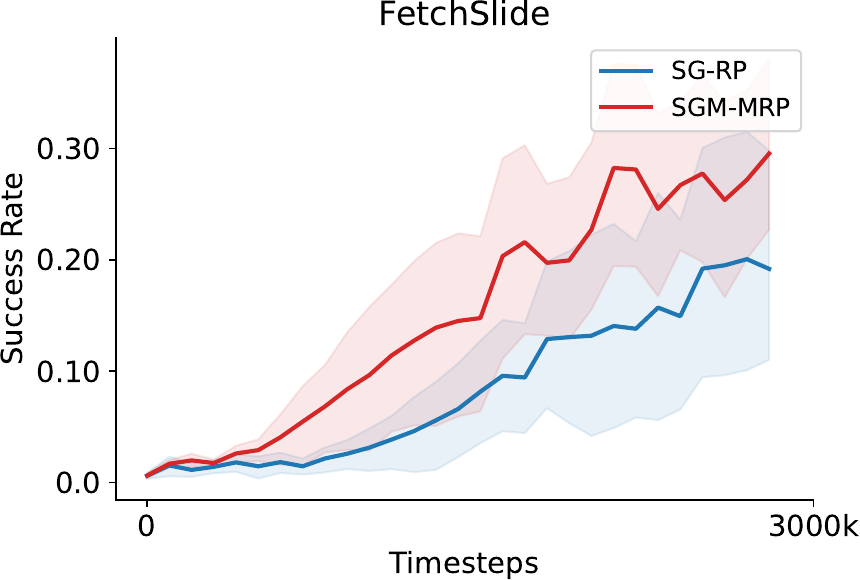}
    }
    \caption{Learning curves in two robotic Fetch environments from \citet{plappert2018multi}. The shaded areas plot the $95\%$ bootstrap CIs across $8$ runs.}
    \label{fig:fetch_envs}
\end{figure}

\subsection{{Effect of the Number of Components}}
\label{sec:app_sub_num_of_components}

We use mixture policies with five components in all experiments in the main text. Here, we study the effect of this choice in classic control environments with unshaped rewards, where differences between the base and the mixture policies are more prominent. We use the same experiment protocol as in Section \ref{sec:exp_shaped_versus_unshaped} and test mixture policies with two and eight components using the MRP gradient estimator. Specifically, we sweep the same hyperparameters for each variant of mixture policies.

Figure \ref{fig:sgm_num_components} shows 
the sensitivity to the entropy scale. We can see that while the results are noisy and not quite consistent across environments, mixture policies with various numbers of components generally outperform the base policy. 
Despite significant noise in the results, we can see that mixture policies with various numbers of components are similarly effective.

{\postsubmission
We further test mixture policies with different number of components in high-dimensional continuous control environments. We use the same environments in \Cref{fig:meta_world_highlight} where the performance gap between mixture policies and base policies is more obvious. The results are shown in \Cref{fig:meta_world_highlight_sgm_num_comp}. While performance varies with $N$, the overall conclusion is similar to that in classic control domains: while $N=2$ always slightly underperforms $N=5$, mixture policies are similarly effective with different numbers of components.
}

\begin{figure}[htb]
    \centering
    \resizebox{.6\textwidth}{!}{
    \includegraphics[height=3cm]{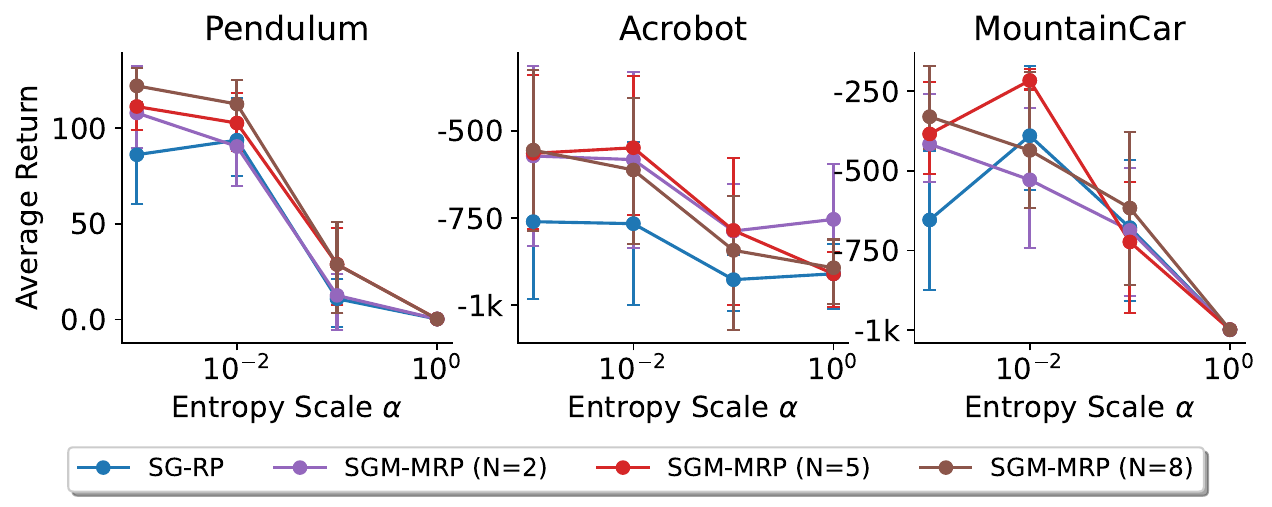}
    }
    \caption{Sensitivity curves for mixture policies with different number of components. The error bars plot the $95\%$ bootstrap CIs across $10$ runs.}
    \label{fig:sgm_num_components}
\end{figure}

\begin{figure}[htb]
\vspace{-0.4cm}
    \centering
    \resizebox{\textwidth}{!}{
    \includegraphics[height=3cm,trim=0 5 0 5,clip]{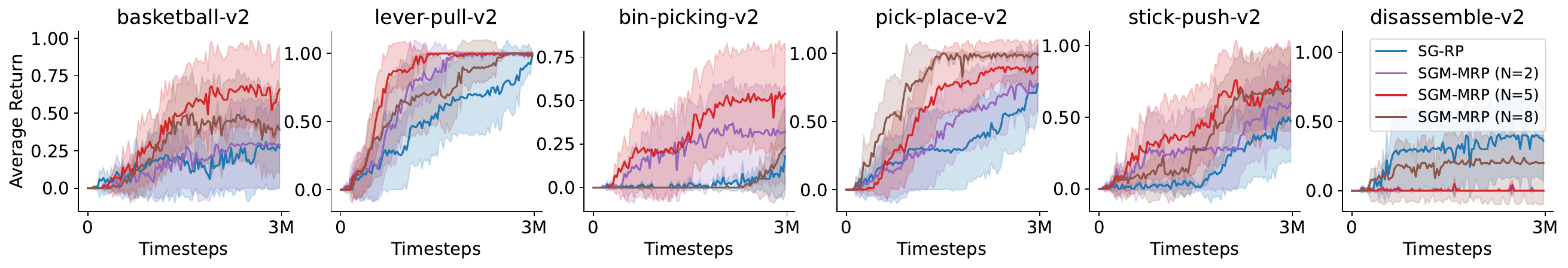}
    }
    \captionsetup{skip=6pt}
    \vspace{-0.3cm}
    \caption{Learning curves for mixture policies with different number of components in selected MetaWorld tasks where the performance gap between mixture policies and base policies is most pronounced. 
    Shaded areas show 95\% bootstrap CIs over $10$ runs.
    }
    \label{fig:meta_world_highlight_sgm_num_comp}
\end{figure}

{
\textbf{Guidance on selecting the number of components.}
While our theoretical analysis in \Cref{sec:motivation_in_bandit,sec:reparam_mixture_policies} has not revealed potential downsides of using more components to increase the flexibility of the policy, there are practical considerations for using a restricted number of them. First, using a large number of components can increase the computational and memory cost (e.g., if $|\cA|\times N$ is close to the  hidden dimension size). Second, from the above empirical results, increasing $N$ from 5 to 8 does not consistently bring benefits, whereas decreasing $N$ from 5 to 2 appears to consistently worsen the performance.
Based on the above points, we recommend to use a default of $N=5$ and tune upwards or slightly downwards if needed.

}

{
\subsection{{Effect of Using a Heavy-Tailed Base Policy}}
\label{sec:app_sub_heavy_tailed_base_policy}

In principle, the base policy can be any policy and even be different across different components. Here, we consider Cauchy policy \citep{bedi2024sample} as the base policy, which is heavy-tailed and promotes persistent exploration. Our hypothesis is that \textit{using mixture policies would also provide benefits when the base policy is Cauchy policy.} We adopt the same experiment protocol as in Section \ref{sec:exp_shaped_versus_unshaped} and test Cauchy and Cauchy mixture (CM) policies with reparameterization gradient estimators.

Figure \ref{fig:classic_control_cauchy} shows the learning curves of the best hyperparameter setting (rerun for $200$ seeds) and the sensitivity to the entropy scale. In \texttt{Pendulum} and \texttt{MountainCar}, Cauchy mixture policies significantly outperform base Cauchy policies.
Notice that, in \texttt{Pendulum}, Cauchy-based policies perform significantly worse than the corresponding Gaussian-based policies. This might potentially be due to two reasons: 1) Cauchy-based policies are generally not suitable for this environment, or 2) the preset search range of $\alpha$ is too large for Cauchy-based policies. In summary, we conclude that even when the base policy is heavy-tailed, using mixture policies may still provide benefits in environments with unshaped rewards.

\begin{figure}[htb]
    \centering
    \resizebox{\textwidth}{!}{
    \includegraphics[height=3cm,trim=0 5 0 5,clip]{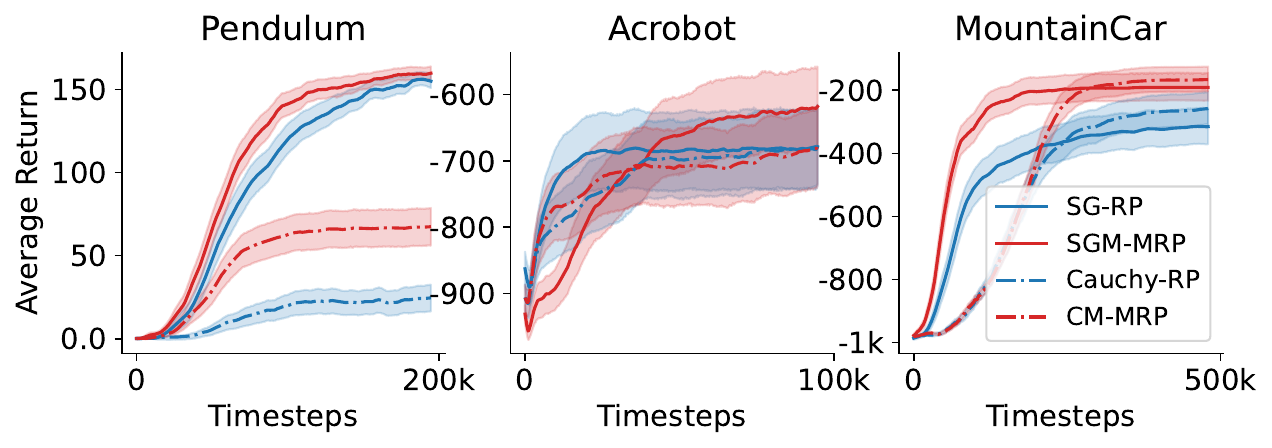}
    \includegraphics[height=3cm,trim=0 5 0 5,clip]{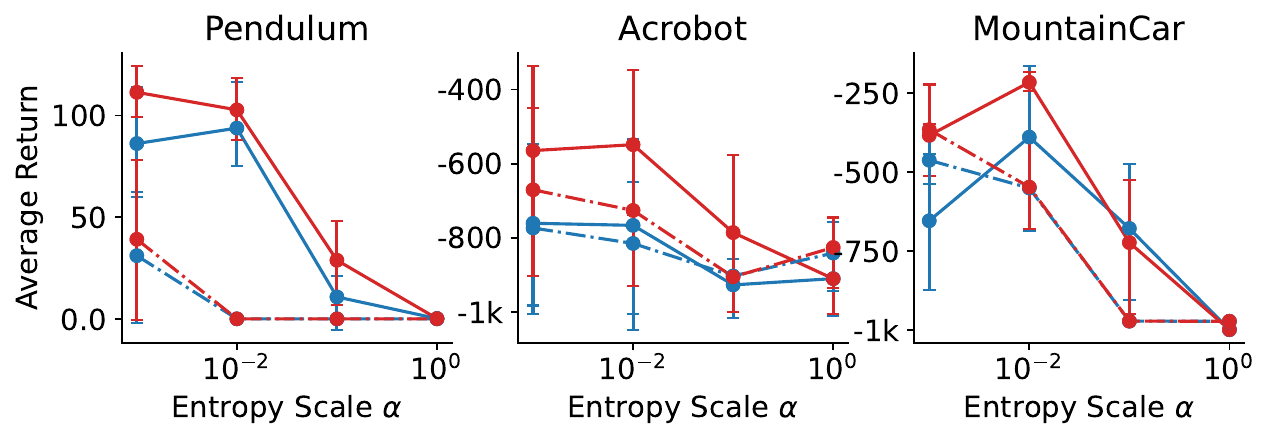}
    }
    \caption{Learning and sensitivity curves for Cauchy policies and Cauchy Mixture (CM) policies in classic control environments with unshaped rewards.}
    \label{fig:classic_control_cauchy}
\end{figure}
}

\subsection{Effect of the Temperature Parameter of GumbelRP Estimator}
\label{sec:app_gumbel_temperature}

In this section, we provide additional experiments that investigate the effect of the temperature parameter $\tau$ in GumbelRP estimator. We use two settings in this study: multimodal bandits and continuous control. In the bandit setting, we test GumbelRP with wide range of temperatures. Other hyperparameters are consistent with the best hyperparameters selected based on SGM-GumbelRP ($\tau=1.0$). For continuous control, we test two additional temperature values in \texttt{Hopper-v3}. From \cref{fig:multimodal_hopper_gumbel_temperature}, we can see that the GumbelRP estimator is not sensitive to its temperature parameter in our experiments.

\begin{figure}[htb]
    \centering
    \resizebox{.67\textwidth}{!}{
    \includegraphics[height=3cm]{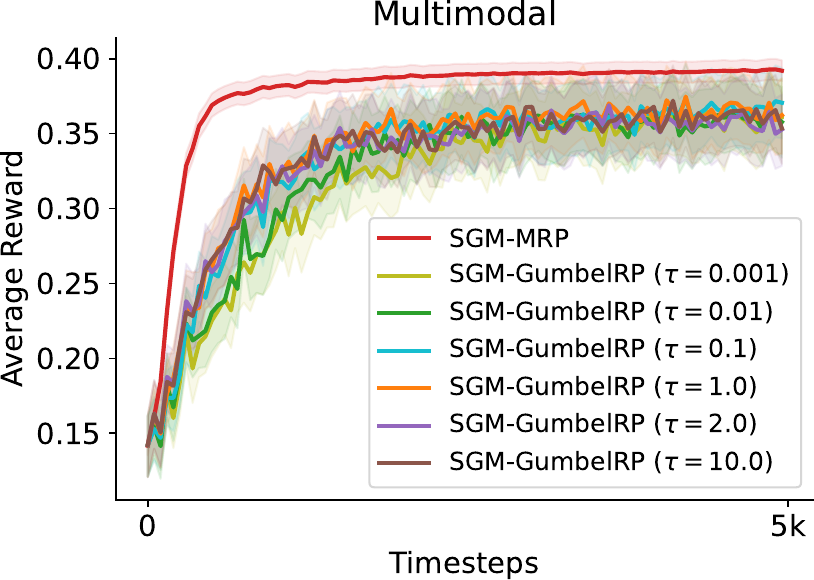}
    \includegraphics[height=3cm]{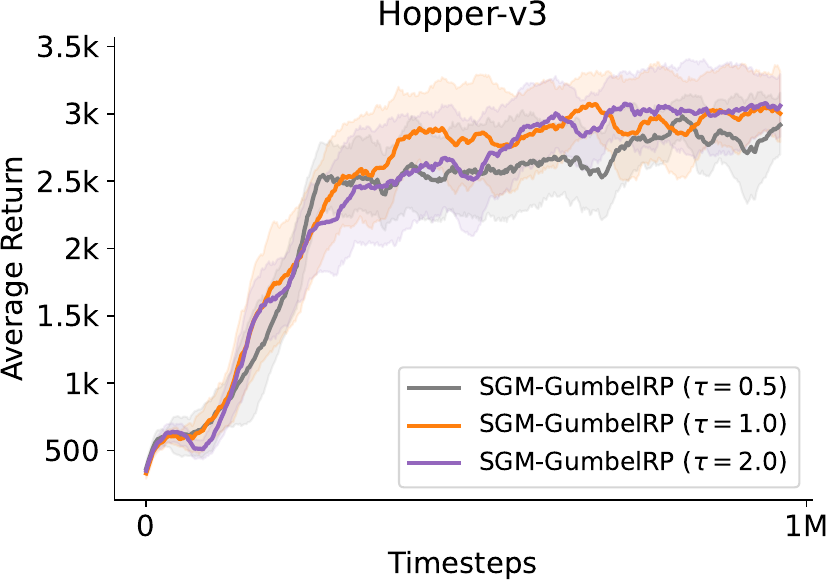}
    }
    \caption{Sensitivity analysis of the GumbelRP estimator to the temperature parameter. The shaded areas plot the $95\%$ bootstrap CIs across different runs. For \texttt{Multimodal}, the results are averaged over $100$ bandits ($10$ runs each for SGM-MRP; 1 run each for SGM-GumbelRP). For \texttt{Hopper-v3}, the results are averaged over $10$ runs.}
    \label{fig:multimodal_hopper_gumbel_temperature}
\end{figure}

\end{document}